\pdfoutput=1
 \documentclass{technical_report}

\PassOptionsToPackage{numbers, compress}{natbib}
\usepackage{natbib}

\usepackage[utf8]{inputenc}
\usepackage[T1]{fontenc}
\usepackage{hyperref}
\usepackage{url}
\usepackage{booktabs}
\usepackage{amsfonts}
\usepackage{amsmath}
\usepackage{amssymb}
\usepackage{amsthm}
\usepackage{etoolbox}
\makeatletter
\renewcommand\part{%
  \if@noskipsec \leavevmode \fi \@afterindentfalse \secdef\@part\@spart}
\renewcommand\@part[2][]{%
  \ifnum \c@secnumdepth >\m@ne
    \refstepcounter{part}%
    \addcontentsline{toc}{part}{\thepart\hspace{1em}#1}%
  \else
    \addcontentsline{toc}{part}{#1}%
  \fi {\parindent \z@ \raggedright \interlinepenalty \@M \normalfont
   \Large \sffamily \bfseries \centering #2\markboth{}{}\par}%
  \nobreak \vskip 1ex \@afterheading} \makeatother
\usepackage{nicefrac}
\usepackage{microtype}
\usepackage{xcolor}
\usepackage{graphicx}
\usepackage{algorithm}
\usepackage{algpseudocode}
\usepackage{multirow}
\usepackage{subcaption}
\usepackage{enumitem}
\usepackage{colortbl}
\usepackage{makecell}
\usepackage[most]{tcolorbox}
\usepackage{textcomp}
\usepackage{upquote}
\usepackage{listings}
\lstdefinestyle{promptlisting}{ basicstyle=\footnotesize\ttfamily, breaklines=true,
breakatwhitespace=true, columns=fullflexible, keepspaces=true, upquote=true, }
\definecolor{deepgreen}{HTML}{007F00}
\definecolor{degradred}{HTML}{C00000}
\definecolor{uwcitepurple}{HTML}{8A2BE2}  %
\newcommand{\inc}[1]{\textsubscript{\textcolor{deepgreen}{\scriptsize +#1}}}
\usepackage{pifont}%
\usepackage{placeins}%
\usepackage{wrapfig}%
\newcommand{\cmark}{\textcolor{deepgreen}{\ding{51}}}
\newcommand{\xmark}{\textcolor{red!70!black}{\ding{55}}}

\hypersetup{ colorlinks=true, linkcolor=red, citecolor=uwcitepurple, filecolor=magenta,
urlcolor=uwcitepurple, linktocpage}
\usepackage{comment}
\usepackage[toc,page,header]{appendix}
\usepackage{minitoc}
\usepackage[frozencache,cachedir=mintedcache]{minted}
\setminted[python]{frame=single,framesep=8pt,fontsize=\footnotesize}
\renewcommand \thepart{}

\makeatletter\renewcommand*{\@dotsep}{10000}\makeatother \doparttoc \faketableofcontents

\newcommand{\spade}{\texttt{SPADE}}
\newcommand{\spadebrand}{\textbf{\textcolor{huskypurple}{SPADE}}\,\raisebox{0.08em}{\scalebox{0.9}{\textcolor{huskypurple}{$\spadesuit$}}}}

\newcommand{\ED}{\texttt{Environment Designer}}
\newcommand{\RA}{\texttt{Reasoning Agent}}
\newcommand{\piD}{\pi_{D}}
\newcommand{\piA}{\pi_{A}}
\newtheorem{theorem}{Theorem}[section] 
\newtheorem{lemma}[theorem]{Lemma} 
\theoremstyle{definition} 
\newtheorem{assumption}[theorem]{Assumption} \theoremstyle{plain}

\newcommand{\backtotoc}{\texorpdfstring{\hfill{\normalfont\footnotesize\sffamily\hyperref[app:toc]{[back to contents]}}}{}}

\newtcolorbox{findingbox}{%
  colback=huskybg, colframe=huskypurple, arc=2mm, boxrule=0.8pt,
  left=6pt,right=6pt,top=4pt,bottom=4pt%
}
\newtcolorbox{promptbox}[1][]{%
  colback=huskybg, colframe=huskypurple, arc=1mm, boxrule=0.5pt,
  left=6pt,right=6pt,top=4pt,bottom=4pt, breakable,
  title={#1}%
} \tcbset{
  rollouttrajectory/.style={%
    colback=huskybg, colframe=huskypurple, arc=0mm, boxrule=0.3pt,
    left=6pt,right=6pt,top=4pt,bottom=4pt, enhanced, breakable,
    fontupper=\footnotesize\ttfamily%
  } }
\newtcolorbox{envcard}[2][]{%
  enhanced, breakable, colback=white, colframe=huskypurple, arc=1.5mm, boxrule=0.7pt,
  left=6pt, right=6pt, top=5pt, bottom=5pt, title={\sffamily\bfseries #2},
  colbacktitle=huskypurple, coltitle=white, fonttitle=\small, #1}
\newcommand{\seedline}[1]{{\footnotesize\itshape\textcolor{black!60}{Seed document (web-scraped): ``#1{}...''}}\par\smallskip}
\newcommand{\obsblock}[1]{\begin{tcolorbox}[colback=huskybg, colframe=huskypurple!40,
  arc=1mm, boxrule=0.4pt, left=4pt, right=4pt, top=3pt, bottom=3pt] {\footnotesize\ttfamily
  #1{}\,...}\end{tcolorbox}}
\newtcolorbox{contrastbox}[1]{%
  enhanced, breakable, colback=white, colframe=huskypurple, arc=1.5mm, boxrule=0.7pt,
  left=6pt, right=6pt, top=5pt, bottom=5pt, title={\sffamily\bfseries #1},
  colbacktitle=huskypurple, coltitle=white, fonttitle=\small}

\newtcolorbox{promptcard}[1]{%
  enhanced jigsaw, breakable, colback=white, colframe=huskypurple, arc=1.5mm, boxrule=0.7pt,
  left=6pt, right=6pt, top=4pt, bottom=4pt, title={\sffamily\bfseries #1},
  colbacktitle=huskypurple, coltitle=white, fonttitle=\small}

\usepackage{tikz}
\usetikzlibrary{fit,calc}
\newcommand*{\tikzmk}[1]{\tikz[remember picture,overlay,] \node (#1) {};\ignorespaces}
\newcommand{\boxit}[1]{\tikz[remember picture,overlay]{\node[xshift=-7.95em,yshift=-1em,fill=#1,opacity=.25,fit={(A)($(B)+(1.0665\linewidth,1\baselineskip)$)}] {};}\ignorespaces}
\newcommand{\boxittwo}[1]{\tikz[remember picture,overlay]{\node[xshift=-4.15em,yshift=-1em,fill=#1,opacity=.25,fit={(A)($(B)+(0.98275\linewidth,1\baselineskip)$)}] {};}\ignorespaces}
\newcommand{\boxithree}[1]{\tikz[remember picture,overlay]{\node[xshift=-4.15em,yshift=-1em,fill=#1,opacity=.25,fit={(A)($(B)+(0.51675\linewidth,0.55\baselineskip)$)}] {};}\ignorespaces}
\colorlet{mypink}{red!30} \colorlet{myblue}{orange!30} \colorlet{mypurple}{green!10}

\title{\textcolor{huskypurple}{SPADE}\,\raisebox{0.12em}{\scalebox{0.95}{\textcolor{huskypurple}{$\spadesuit$}}}: \textcolor{huskypurple}{S}elf-\textcolor{huskypurple}{P}lay in \textcolor{huskypurple}{A}\textcolor{huskypurple}{d}aptive Synthetic Executable \textcolor{huskypurple}{E}nvironments}

\affiliation[1]{University of Washington}
\affiliation[2]{Stanford University}
\affiliation[3]{Northeastern University}
\affiliation[4]{Carnegie Mellon University}
\affiliation[5]{Massachusetts Institute of Technology}
\affiliation[6]{National University of Singapore}
\affiliation[7]{Seoul National University}
\affiliation[8]{Stevens Institute of Technology}
\affiliation[9]{University of Chicago}

\contribution[*]{Equal contribution}
\contribution[\dagger]{Joint last author}

\author[1,2,*]{Bo Liu}
\author[3,*]{Simon Yu}
\author[4]{Yiding Jiang}
\author[5]{Ao Qu}
\author{Andrew Zhao}
\author[6]{Zichen Liu}
\author[7]{Junsu Kim}
\author[6]{Zijian Zhou}
\author[4]{Seungone Kim}
\author{Tongzheng Ren}
\author[1]{Mickel Liu}
\author[8]{Hanfei Yu}
\author[9]{Zhaorun Chen}
\author[3]{Weiyan Shi}
\author[5]{Paul Pu Liang}
\author[1]{Luke Zettlemoyer}
\author[2,\dagger]{Yejin Choi}
\author[1,\dagger]{Natasha Jaques}

\correspondence{\email{benjaminliu.eecs@gmail.com}, \email{yejinc@stanford.edu}, \email{nj@cs.washington.edu}}

\metadata[Code]{\url{https://github.com/spade-rl/spade}} \metadata[Project]{\url{https://spade-rl.github.io}}

\begin{document}

\maketitle

\begin{abstract}
\vspace{-0.5em}
Continuous self-improvement requires an ever-expanding pool of self-generated, diverse, adaptive goals. For language agents, existing training environment pools (hand-curated, statically synthesized, or frozen-verifier) keep the goal distribution fixed as the learner scales. We introduce \spade{} (Self-Play in Adaptive Synthetic Executable Environments), a self-play RL framework in which a single LLM plays two roles: an \ED{} that writes complete, long-horizon training environments as executable code with an OpenAI Gym-style \texttt{reset()}/\texttt{step()} interface, and a \RA{} that learns to act in them. Each is a stateful, multi-turn environment (state transitions, reward functions, and verification code), so one interface spans reasoning problems and multi-step agentic tool use. The \RA's regret is estimated using  the gap between its reward with and without privileged hints; in optimizing this regret signal the \ED{} learns to target environments at the edge of the agent's capabilities while keeping them feasible. Through extensive experimentation, we find several components critical to success: grounding the \ED{} on documents sampled from a large pretraining corpus, and giving it an accumulated environment memory. Scaling to 30B-parameter models, \spade{} improves over the strongest fixed-environment baseline by $+5.3$ on average across eight held-out math, science, code, and reasoning benchmarks, and lifts the tool-use setting by $+5.7$ on BFCL~v4 multi-turn and $+13.9$ on ACEBench-Agent; on the games setting, the margin over the strongest baseline grows with model scale. By making environment design itself a learnable component, \spade{} takes a concrete step toward open-ended self-improvement.
\vspace{-0.475em}
\end{abstract}

\vspace{-0.78em}
\noindent\begin{minipage}{\textwidth}
\centering
\begin{minipage}[c]{0.33\textwidth}\centering
\includegraphics[width=\linewidth]{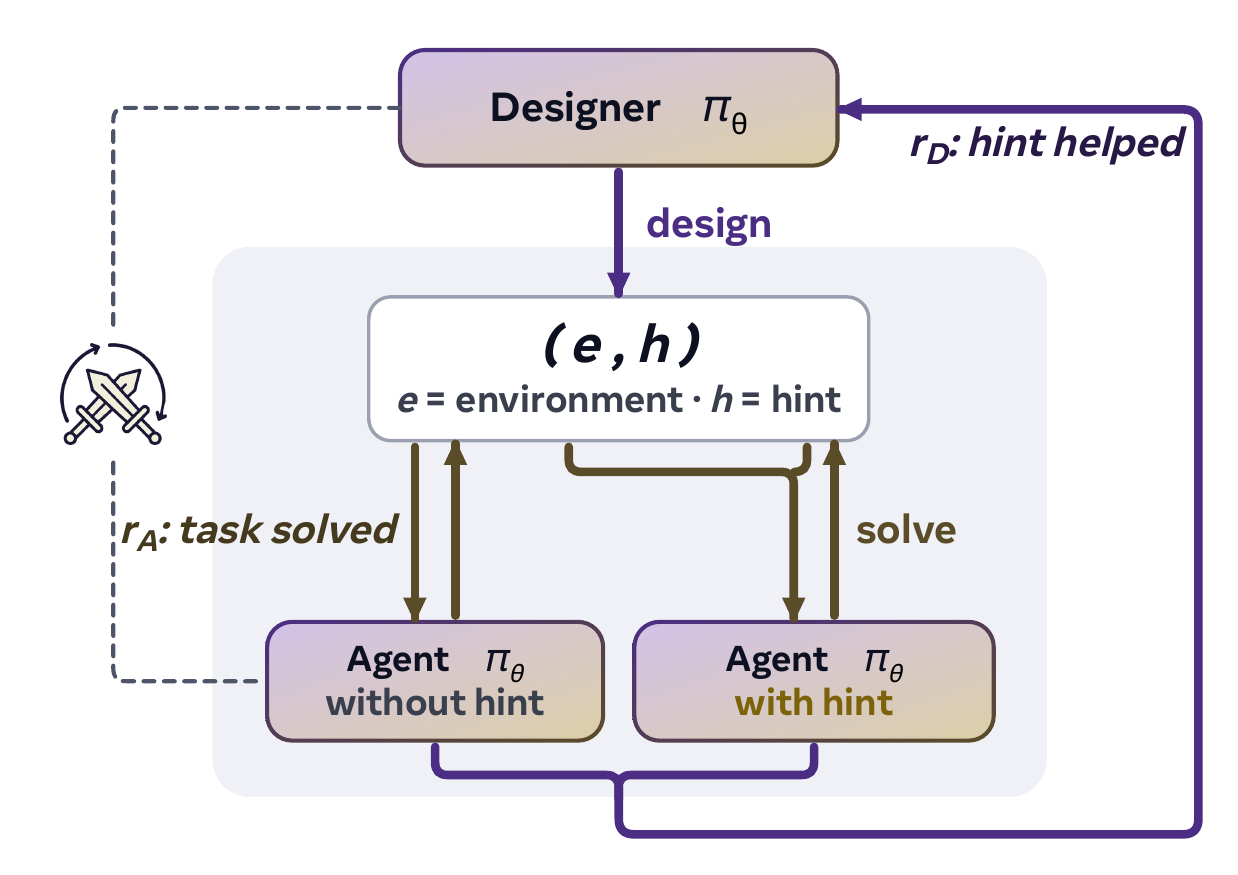}
\end{minipage}\hfill
\begin{minipage}[c]{0.33\textwidth}\centering
\includegraphics[width=\linewidth]{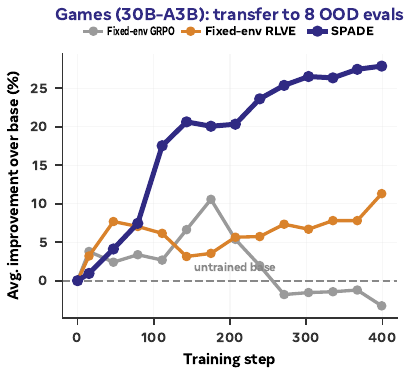}
\end{minipage}\hfill
\begin{minipage}[c]{0.33\textwidth}\centering
\includegraphics[width=\linewidth]{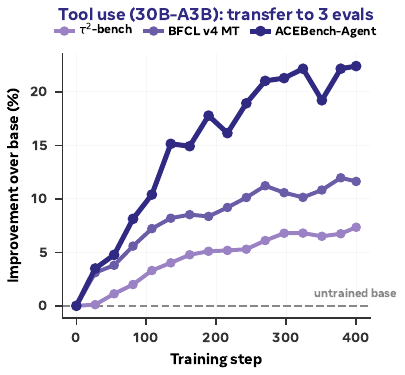}
\end{minipage}
\vspace{-0.6em}
\captionof{figure}{
\textbf{\spade{} designs and solves its own training environments in both settings.} \textbf{Left:} a single LLM $\pi_\theta$ plays both roles, an \ED{} that writes an executable environment $e$ with a privileged hint $h$, and a \RA{} that solves $e$ with and without $h$; the return gap rewards the \ED{} (hint-based regret), task completion rewards the \RA{}, and both update the same weights. \textbf{Middle:} average \emph{relative} improvement over the untrained base across the eight games-setting evals ($(\text{score}-\text{base})/\text{base}$), versus Fixed-env RLVE (orange) and Fixed-env GRPO (gray). \textbf{Right:} per-benchmark relative improvement of \spade{}-30B-A3B on the three tool-use evals ($\tau^2$-bench, BFCL~v4 multi-turn, and ACEBench-Agent; Table~\ref{tab:tooluse}).
}
\label{fig:teaser}
\end{minipage}

\section{Introduction}
\label{sec:intro}

Agentic AI has become broadly capable: language models now reason over long horizons, use tools, search the web, and operate computers~\citep{openai2024o1,guo2025deepseek,wang2025ragen,zhang2025agent}. These gains increasingly come not from pretraining alone but from learning through experience, where a model improves from its own trajectories of interacting with an environment and the rewards they return~\citep{silver2025welcome}. As high-quality human text is a finite resource~\citep{villalobos2024will}, the bottleneck for further agentic progress is increasingly the supply of training environments: interactive tasks with verifiable rewards. Building them has become a central industry investment, with major labs weighing environment budgets exceeding \$1 billion a year and a new class of startups raising nine figures to supply them~\citep{zeff2025environments,primeintellect2026series}. Yet whether hand-built or synthesized, these environments form a fixed pool that does not adapt as the learner improves, so an agent stops improving once it exhausts them.

Several lines of work try to keep agents improving, each with its own limitation. \emph{Harness engineering} improves how an agent acts within a given environment at inference time, through tool orchestration and self-correction~\citep{yao2023react,shinn2023reflexion}, but it changes no weights, so its gains are tied to each hand-built harness, rather than improving general capabilities the model carries to new tasks. \emph{Human-curated scaling}~\citep{zhang2025agentrl,guertler2025textarena, liu2025gem} builds diverse environments to train across, but scales only as fast as people can write them. \emph{Synthetic generation}~\citep{wang2026agent,tu2026scaleenv,zhu2026termigen,gandhi2026endless,zeng2025rlve} produces environments programmatically, but the generators are fixed, so the environment space does not grow with the agent and the model soon exhausts it~\citep{song2024mind}. \emph{Self-play}~\citep{zhao2025absolute,huang2025r,liu2025spiral} lets a model improve through dual roles, but ungrounded self-play is bounded by information symmetry, unable to pose challenges beyond its own knowledge and prone to amplifying its errors~\citep{chae2025towards}; corpus grounding mitigates this~\citep{liu2025spice}, yet most methods still generate \emph{tasks} (a problem with a sparse terminal reward) rather than complete \emph{multi-turn environments} (state transitions, reward functions, and verification code). None produces a self-improving system whose environments keep growing in complexity as the agent improves.

We introduce \spade{} (Self-Play in Adaptive Synthetic Executable Environments), a framework where a single LLM plays two roles: an \ED{} that produces complete training environments as executable Python code, and a \RA{} that learns from them. Each environment implements a Gym-style interface (the standard \texttt{reset()}/\texttt{step()} API used in reinforcement learning)~\citep{brockman2016openai}, representing a full Markov decision process (MDP) with state transitions and reward functions. This \emph{code-as-environment} representation unifies single-turn settings (one step to terminal reward) and multi-turn agentic tasks (sequential interaction with transition dynamics) under a single interface. Because any computable MDP can be written as a program, the \ED{} can express any such environment in code, rather than only those a hand-designed parameterization allows. Unlike prior works where the environment generator is frozen, \spade{}'s \ED{} is itself trained via RL with a \emph{hint-based regret} signal that targets solvable environments precisely at the frontier of the \RA{}'s capability, creating co-evolution where the environment distribution shifts as the \RA{} improves (Figure~\ref{fig:overview}).

Overall, our work makes the following contributions:
\begin{enumerate}
    \item We introduce a \emph{general framework for co-evolving environments synthesis and agentic capability through self-play}, where the same LLM both generates executable environments as code and learns to solve them. By representing environments as Python programs with a Gym-style interface, the framework unifies single-turn reasoning and multi-turn agentic tasks, and turns environment design into a learnable, RL-trained component of post-training, enabling continual open-ended self-improvement.
    \item We introduce a \emph{hint-based regret reward} for the \ED{}. The \ED{} is rewarded by the gap between \RA{} performance with and without privileged hints, a lightweight estimate of minimax regret~\citep{dennis2020emergent} that targets environments at the frontier of the \RA{}'s capability. Unlike a pure adversary, which is free to make environments unsolvable, or a cooperative designer, which can inflate \RA{} reward without teaching it anything, hint-based regret yields constrained competitive dynamics that reward environments which are both solvable and at the learning frontier, with an equilibrium analysis in Appendix~\ref{app:theory}.
    \item We design a \emph{corpus-grounded and memory-augmented design pipeline} spanning cognitive-skill games and tool-use tasks. Where prior environment-generation work targets a single domain (math, code, or tool use), \spade{} demonstrates improvements in both settings, with environments grounded in pretraining corpus knowledge and accumulated experience.
    \item We validate \emph{self-play working at 30B+ scale} with practical recipe provided, demonstrating that the approach works beyond the small-model with a complete training recipe including environment validation, reward hacking avoidance, and curriculum design.
\end{enumerate}

On the games setting, \spade{} outperforms fixed-environment baselines on all three backbones, by $+5.3$ points on average and up to $+7.5$ on individual benchmarks over the strongest fixed-environment baseline at 30B-A3B. In the tool-use setting, \spade{} lifts BFCL~v4 multi-turn~\citep{patil2025berkeley} by $+10.3$ at 4B and $+5.7$ at 30B-A3B, and ACEBench-Agent~\citep{chen2025acebench} by $+13.9$ at 30B-A3B. Ablations show the advantage of hint-based regret over an exponential-moving-average (EMA) learning-potential signal~\citep{kanitscheider2021multitaskcurriculumlearningcomplex, zhang2023omni},
and of the full adaptive, corpus-grounded, memory-augmented configuration over the partial and non-adaptive controls. Qualitative analysis reveals emergent curricula: \spade{} progresses from simple single-skill tasks to complex multi-constraint environments requiring long-horizon interaction.

In \spade{}, the \ED{} writes a new, self-contained environment and the \RA{} learns by acting in it. Making design a learnable role lets the two co-evolve: as the agent improves, so do the environments it trains on (Figure~\ref{fig:overview}). A single model thus improves not only how it reasons and acts, but the worlds it builds to learn in, a concrete step toward open-ended, continual self-improvement.

\begin{figure}[t]
\centering
\includegraphics[width=\linewidth]{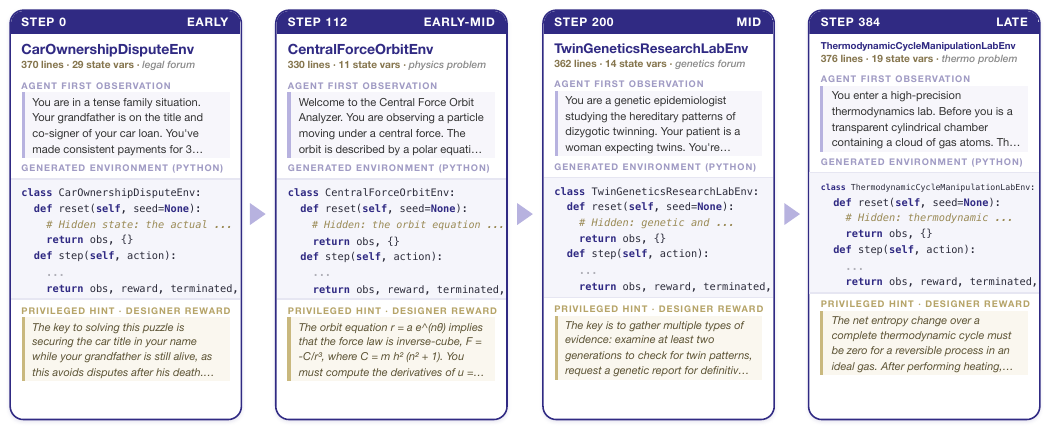}
\caption{\textbf{\spade{} generates an adaptive, multi-turn curriculum.} Four environments the \ED{} produces over one 30B-A3B run, from step~0 (early) to step~384 (late); each card shows the agent's first observation, the generated Python environment, and the designer-written hint. Every environment is a complete MDP with a \texttt{reset()}/\texttt{step()} interface, and the tasks shift toward state-gated, multi-turn interaction as the \RA{} improves. Unlike a fixed human-curated pool or a frozen synthetic generator, this curriculum keeps moving with the learner.}
\label{fig:overview}
\end{figure}

\section{Related Work}
\label{sec:related}

For an extended background and comparison to the literature see Appendix~\ref{app:related}, summarized here:

\noindent\textbf{Self-Play for LLMs\,} Self-play has driven capability growth from TD-Gammon~\citep{tesauro1995temporal} through AlphaZero~\citep{silver2017mastering,silver2018general} to modern game AI~\citep{vinyals2019grandmaster,meta2022human}. PowerPlay~\citep{schmidhuber2013powerplay} proposed a single self-modifying system that continually invents tasks at its own capability frontier, prefiguring the dual-role single-LLM pattern adopted below. Asymmetric self-play~\citep{sukhbaatar2017intrinsic} later formalized a proposer-solver paradigm in which one agent sets challenges while another solves them. Applying self-play to LLMs is harder than in single-task game-playing systems: the goal is general capability across open-ended problems, and self-play training at LLM scale is often unstable. Earlier LLM self-play methods rely on curated seed data or evaluation sets, either for self-distillation~\citep{chen2024self,yuan2024self,singh2023beyond}, for adversarial language games over fixed vocabularies~\citep{cheng2024self}, or for bootstrapping a learned proposer~\citep{fang2025serl,sundaram2026teaching}. A more recent line removes this dependency, training proposer and solver from minimal seeds with proxy rewards that target the learnable frontier. AZR~\citep{zhao2025absolute}, SQLM~\citep{chen2025selfquestioning}, and Language Self-Play~\citep{kuba2025language} have a single LLM play both roles with shared parameters; R-Zero~\citep{huang2025r} and Tool-R0~\citep{acikgoz2026tool} instead train two separate models initialized from the same base, using uncertainty- or solve-rate-based rewards. The closest predecessors to our work are SPIRAL~\citep{liu2025spiral}, which trains LLMs through self-play on multi-turn zero-sum language games, and SPICE~\citep{liu2025spice}, which extends self-play to corpus-grounded question generation. However, both generate \emph{tasks} (a single problem with only a terminal reward) and can collapse when the model's capacity bounds the reachable solution distribution~\citep{chae2025towards} or when proxy rewards induce degenerate outputs~\citep{shafayat2025can}; a recent theorem-proving method adds a frozen Guide to score the proposer's conjectures and curb collapse (SGS,~\citealp{bailey2026scaling}); in \spade{} the verifier is instead part of each generated environment, co-evolving with the \ED{} rather than fixed. By contrast, \spade{} generates \emph{full multi-turn MDP environments} (state transitions, reward functions, and verification code) as executable code, trains the \ED{} itself via RL with hint-based regret grounded in measured \RA{} return, and spans a broader range of tasks (games and tool use) than prior methods confined to a single domain.

\noindent\textbf{Unsupervised Environment Design and Open-Endedness\,} Curriculum learning and learnability-driven adaptation~\citep{bengio2009curriculum,schmidhuber2013powerplay} laid the groundwork for what \citet{dennis2020emergent} termed unsupervised environment design (UED). POET~\citep{wang2019paired} pioneered paired open-ended environment-agent co-evolution via evolutionary search over parameterized terrain, while PAIRED~\citep{dennis2020emergent} introduced adversarial environment design with minimax regret; subsequent work improved level curation via prioritized replay~\citep{jiang2021prioritized,jiang2021replay}, evolutionary complexity~\citep{parker2022evolving}, stability~\citep{mediratta2023stabilizing}, regret approximations~\citep{rutherford2024no}, and theoretical foundations~\citep{monette2025optimisation}. \citet{guzel2025imagined} extends prioritized level replay to imagined trajectories from a learned diffusion world model. Multi-agent autocurricula provide a complementary route to emergent complexity~\citep{baker2019emergent}, and open-endedness has been argued essential for superhuman AI~\citep{hughes2024open}. OMNI-EPIC~\citep{faldor2024omni} represents environments as executable code generated by foundation models, with a frozen FM judge of interestingness gating task acceptance. PAPRIKA~\citep{tajwar2025training} brings curriculum-style training to LLM agents across diverse synthetic task groups. However, classical UED searches \emph{parameterized} environments (maze dimensions, grid layouts) within small, fixed design spaces, and curriculum-based LLM training draws from fixed task pools without a learned generator. \spade{} carries the regret principle into an unbounded, code-defined environment space: the \ED{} is itself the learning agent, writing full MDPs as programs and co-evolving with the \RA{} online, rather than an adversary selecting from a hand-designed parameterization. This turns environment design from selection within a fixed space into open-ended generation.

\noindent\textbf{Environment Synthesis and Scaling\,} Following the shift from single-turn reinforcement learning with verifiable rewards (RLVR)~\citep{openai2024o1,guo2025deepseek} to multi-turn agentic RL~\citep{wang2025ragen,zhang2025agent}, the env-pool bottleneck has motivated work that synthesizes training environments for LLM agents at scales unreachable by hand-curation. Recent systems generate agentic worlds and toolsets~\citep{wang2026agent,tu2026scaleenv,dong2026agent}, terminal and computer-use sandboxes~\citep{zhu2026termigen,gandhi2026endless,fan2026toward,pi2026data,xue2026evocua,xue2026autonomous,zhang2026infiniteweb}, and dynamic RL pipelines~\citep{wang2026rlanything,guo2025genenv,shi2026learning}; AgentScaler~\citep{fang2025towards} clusters thousands of real APIs into per-domain Python tool environments and trains agents via filtered supervised fine-tuning (SFT), and Eurekaverse~\citep{liang2024eurekaverse} provides a robotics analogue. Complementary task-level synthesis grounds generation in agent exploration~\citep{ramrakhya2025scaling}, converts unverifiable text into RLVR data~\citep{lu2026golden}, or expands seed task spaces~\citep{jiang2025bootstrapping,chen2025scaling,li2025simulating,lu2025don}. Scaling studies show that distributions of verifiable environments lift generalization: WebScale-RL~\citep{cen2025webscale} mines RL tasks from web pretraining data, AgentRL~\citep{zhang2025agentrl} unifies multi-turn multi-task training, and RLVE~\citep{zeng2025rlve} hand-engineers 400 verifiable environments with adaptive difficulty levels. Curriculum selection over fixed pools provides difficulty adaptation without changing the underlying environments~\citep{chen2025selfevolving,xu2026scaler}. Across these systems the environment generator is typically frozen, hand-engineered, or trained on a separate signal. \spade{} instead trains the \ED{} online, together with the \RA{}, via RL with hint-based regret, produces \emph{full multi-turn MDP environments} as executable Python rather than tasks with sparse terminal rewards, and co-evolves the environment distribution with the \RA{}'s capability frontier in a single training loop.

\section{Preliminaries}
\label{sec:prelim}

\spade{} builds on two standard components, which we review before presenting the method: the Markov decision process with a Gym-style \texttt{reset()}/\texttt{step()} interface, used to represent environments, and reinforcement learning from verifiable rewards optimized with GRPO, used to train the policy.

\subsection{Markov Decision Processes and Gym-Style Environments}
\label{sec:mdp-prelim}
A Markov Decision Process (MDP) is a tuple $(\mathcal{S}, \mathcal{A}, T, R, \rho_0)$ with state space $\mathcal{S}$, action space $\mathcal{A}$, transition function $T(s'\mid s,a)$, reward function $R(s,a)$, and initial state distribution $\rho_0$. The standard programmatic interface for an MDP, popularized by OpenAI Gym~\citep{brockman2016openai} and standardized in its Gymnasium fork~\citep{towers2026gymnasium}, exposes two functions: \texttt{reset()} returns an initial observation $s_0 \sim \rho_0$ (with an info dictionary), and \texttt{step}($a$) advances the state via $T$ and returns $(s', r, \text{terminated}, \text{truncated}, \text{info})$, separating natural termination from time-limit truncation. Our generated environments are fully observed: the observation returned at each step is the state itself. Listing~\ref{lst:gym-example} shows a minimal instantiation.

\begin{listing}[!ht]
\begin{minted}{python}
import random

class WordleEnv:
    WORDS = ["spade", "trace", "lemon", "graph"]    # truncated

    def reset(self, seed=None):                     # initial state
        self.target = random.Random(seed).choice(self.WORDS)
        self.turns_left = 6
        return "Guess a 5-letter word in 6 tries.", {}

    def step(self, guess):                          # (s', r, term, trunc, info)
        self.turns_left -= 1
        left = [t for g, t in zip(guess, self.target) if g != t]
        fb = ""
        for g, t in zip(guess, self.target):
            if g == t: fb += "G"
            elif g in left: fb += "Y"; left.remove(g)
            else: fb += "-"
        if fb == "GGGGG":
            return fb, 1.0, True, False, {}
        return fb, 0.0, False, self.turns_left == 0, {}
\end{minted}
\caption{\textbf{Minimal Gym-style MDP.} A Wordle-flavored multi-turn deduction game as a single Python class, in the format \spade{}'s \ED{} emits (full generated environments appear in Appendix~\ref{app:env-full-source}). The episode is stateful (\texttt{self.target}, \texttt{self.turns\_left}), ends either on a correct guess (terminated, reward $1$) or at the $6$-turn limit (truncated, reward $0$); each guess returns per-letter feedback to guide deduction.}
\label{lst:gym-example}
\end{listing}

\subsection{RLVR and GRPO}
\label{sec:grpo}
We train the policy $\pi_\theta$, an LLM with parameters $\theta$, via reinforcement learning with verifiable rewards (RLVR) using Group Relative Policy Optimization (GRPO)~\citep{shao2024deepseekmath}.
For each input prompt $x$, GRPO samples a group of $G$ responses $\{y_1, \ldots, y_G\} \sim \pi_\theta(\cdot \mid x)$ and computes group-normalized advantages:
\begin{equation}
\hat{A}^i = \frac{r^i - \text{mean}(\{r^j\}_{j=1}^G)}{\text{std}(\{r^j\}_{j=1}^G)}
\end{equation}
The policy is updated via clipped policy gradient with KL regularization:
\begin{equation}
\mathcal{L}(\theta) = -\frac{1}{G}\sum_{i=1}^{G}\min\left(\frac{\pi_\theta(y_i\mid x)}{\pi_{\text{old}}(y_i\mid x)} \hat{A}^i,\; \text{clip}\!\left(\frac{\pi_\theta(y_i\mid x)}{\pi_{\text{old}}(y_i\mid x)}, 1-\varepsilon_{\text{low}}, 1+\varepsilon_{\text{high}}\right) \hat{A}^i\right) + \beta_{\text{KL}} \cdot \text{KL}[\pi_\theta \| \pi_{\text{ref}}]
\end{equation}

\section{\spade{}: Self-Play in Adaptive Synthetic Executable Environments}
\label{sec:method}

\begin{figure}[t]
    \centering
    \includegraphics[width=\textwidth]{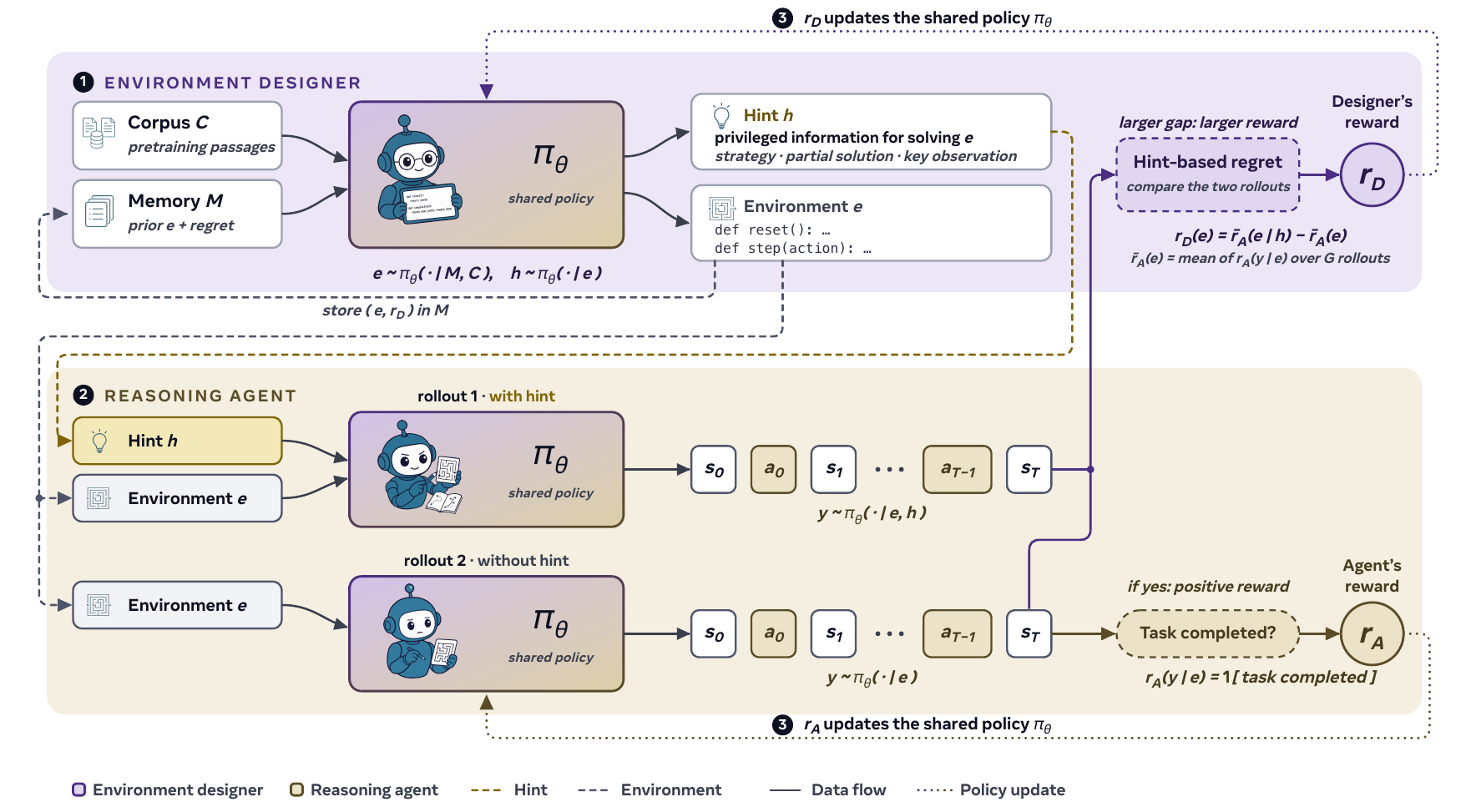}
    \caption{\textbf{The \spade{} framework.} Top: the \ED{} conditions on the environment memory $M$ and pretraining corpus $C$ to emit an executable environment $e$ and a privileged hint $h$. Bottom: the \RA{} plays $e$ with and without $h$; the return gap is the \ED{}'s hint-based regret $r_D(e)$ (Eq.~\ref{eq:regret}) and task correctness is the \RA{} reward. Both rewards update the shared policy $\pi_\theta$ via GRPO.}
    \label{fig:framework}
\end{figure}

\spade{} is an end-to-end self-play framework where a single LLM, $\pi_\theta$, alternates between generating executable training environments and learning to solve them (Figure~\ref{fig:framework}; Algorithm~\ref{alg:spade}). Its three components create an adaptive curriculum: \S~\ref{sec:dual-role} details how dual-role self-play couples environment generation with \RA{} (RA) learning; \S~\ref{sec:hint-regret} explains how hint-based regret steers the \ED{} (ED) toward environments at the \RA{}'s learning frontier; and \S~\ref{sec:env-generation} outlines the pipeline that grounds and validates these executable environments.

\subsection{Dual-Role Self-Play and Code-as-Environment}
\label{sec:dual-role}

\spade{} formulates adaptive environment self-play as a game where a single model $\pi_\theta$ acts in two roles via role-specific system prompts: designing environments ($\text{role}{=}D$) or solving them ($\text{role}{=}A$). Let $\mathcal{E}$ denote the space of all valid environments. In the \ED{} role ($\piD$), $\pi_\theta$ produces an executable environment $e \in \mathcal{E}$ as a Python program implementing the Gym-style \texttt{reset()}/\texttt{step()} API (Section~\ref{sec:mdp-prelim}; a minimal example is shown in Appendix~\ref{app:minimal-env}), where the transition function $T(s'\mid s,a)$ and reward function $R(s,a)$ are both encoded in the \texttt{step()} implementation. The \ED{} also emits a privileged hint for each environment. A hint $h$ is task-relevant information that the \ED{} attaches to an environment (for example, a partial solution sketch or a key structural observation); revealing $h$ to the \RA{} makes the environment easier to solve, and the gap in \RA{} return with versus without $h$ defines the \ED{}'s hint-based regret reward (Section~\ref{sec:hint-regret}).

In the \RA{} role ($\piA$), $\pi_\theta$ interacts with $e$ via sequential actions, receiving observations and rewards. This \emph{code-as-environment} representation unifies single-turn settings (\texttt{reset()} $\rightarrow$ \texttt{step(answer)} $\rightarrow$ terminal reward) and multi-turn agentic settings (\texttt{reset()} $\rightarrow$ \texttt{step()} $\rightarrow \cdots \rightarrow$ \texttt{done}) under a single interface and training pipeline. This representation provides the vast task space that AI-GAs~\citep{clune2019ai} argue is necessary for open-ended self-improvement: any computable MDP can be expressed as a Python program, so the \ED{} can express any such environment in code rather than being confined to a hand-designed environment parameterization.
Both roles share parameters $\theta$, so updates for either role affect the same policy. Each self-play cycle first collects \ED{} trajectories, and then \RA{} trajectories from those generated environments. The \ED{} receives hint-based regret (Section~\ref{sec:hint-regret}), whereas the \RA{} receives task correctness from the environment's reward function. Each candidate environment is validated for syntactic correctness and executability before entering the training pool; validation details are in Appendix~\ref{app:lifecycle}.

\begin{algorithm}[t]
\caption{\spadebrand{}: Self-Play in Adaptive Synthetic Executable Environments}
\label{alg:spade}
\begin{algorithmic}[1]
\Require Pretrained LLM $\pi_\theta$; domain prompts $\{p_d\}$; batch size $B$; group size $G$; iterations $N$
\For{$n \gets 1$ to $N$}
    \tikzmk{A}\State \textbf{\ED{} role} ($\text{role}{=}D$): \Comment{generate environments}
    \For{$b \gets 1$ to $B$}
        \State Sample domain prompt $p_d$; generate $e_b \sim \pi_\theta(\cdot \mid p_d, \text{role}{=}D)$
        \State Validate $e_b$: syntax, executability; discard invalid
        \State Generate hint $h_b \sim \pi_\theta(\cdot \mid e_b, \text{role}{=}D)$ \Comment{privileged info}
    \EndFor
    \tikzmk{B} \boxit{myblue}
    \tikzmk{A}\State \textbf{Shared \RA{} rollouts} ($\text{role}{=}A$): \Comment{trained on \emph{and} scored for regret}
    \For{each valid environment $e_b$}
        \State $\{y_i\}_{i=1}^G \sim \pi_\theta(\cdot \mid e_b, \text{role}{=}A)$;\quad $r_A(y_i \mid e_b) \in [-1,1]$ \Comment{no-hint plays}
        \State $\{y'_i\}_{i=1}^G \sim \pi_\theta(\cdot \mid e_b, h_b, \text{role}{=}A)$;\quad $\bar{r}_A(e_b \mid h_b) \gets \tfrac{1}{G}\sum_{i=1}^G r_A(y'_i \mid e_b, h_b)$ \Comment{hint plays}
    \EndFor
    \tikzmk{B} \boxittwo{mypink}
    \tikzmk{A}\State \textbf{Rewards and joint GRPO update} (for each valid $e_b$ and its rollouts):
    \State $r_D(e_b) \gets \bar{r}_A(e_b \mid h_b) - \tfrac{1}{G}\sum_{i=1}^G r_A(y_i \mid e_b)$ \Comment{$\max(0,\cdot)$ in training}
    \State $\hat{A}_D^b \gets r_D(e_b) - \text{mean}_{\text{same skill}}\{r_D(e_c)\}$;\quad $\hat{A}_A^i \gets \tfrac{r_A(y_i \mid e_b) - \text{mean}(\{r_A(y_j \mid e_b)\}_{j=1}^G)}{\text{std}(\{r_A(y_j \mid e_b)\}_{j=1}^G)}$ \Comment{per-role advantages}
    \State Update $\pi_\theta$ by clipped policy gradient on $\{\hat{A}_D^b\} \cup \{\hat{A}_A^i\}$
    \tikzmk{B} \boxithree{mypurple}
\EndFor
\State \textbf{return} trained model $\pi_\theta$
\end{algorithmic}
\end{algorithm}

\textbf{Stabilizing joint two-role training.} Optimizing a single policy for two coupled objectives requires several stabilizing techniques~\citep{liu2026gdpogrouprewarddecouplednormalization}. Following SPIRAL~\citep{liu2025spiral}, we normalize advantages independently for each role. We standardize \RA{} returns within each environment's rollouts and mean-center \ED{} rewards within each skill. To ensure both roles contribute comparably, we upweight the less frequent \ED{} trajectories. Finally, we delay the \ED{} update by $k$ rollouts, where $k$ is the number of training steps per environment set, so the difficulty anchor can score each environment's \RA{} win rate over the full training window; the regret component is computed at generation time. Since this delay makes the \ED{} objective off-policy, we correct the gradient using truncated importance sampling~\citep{yao2025offpolicy}. We use an asymmetric clipping range ($\varepsilon_{\text{low}}{=}0.2$, $\varepsilon_{\text{high}}{=}0.28$) to preserve exploration and floor the raw regret at zero. The deployed \ED{} reward blends this floored regret, normalized to $[0,1]$ by a fixed scale, (weight $0.4$) with a flat-top difficulty anchor that pays environments whose \RA{} win rate falls in a target band and decays linearly outside it (weight $0.6$, band $[0.4,0.6]$): the anchor regulates difficulty, and floored regret selects the most teachable environments within the band. Full hyperparameters appear in Appendix~\ref{app:implementation}.

\subsection{Hint-Based Regret Reward}
\label{sec:hint-regret}

The \ED{}'s reward for producing environment $e$ is:
\begin{equation}
\label{eq:regret}
r_D(e) = \bar{r}_A(e \mid h) - \bar{r}_A(e),
\end{equation}
\begin{figure*}[t]
\centering
\includegraphics[width=\textwidth]{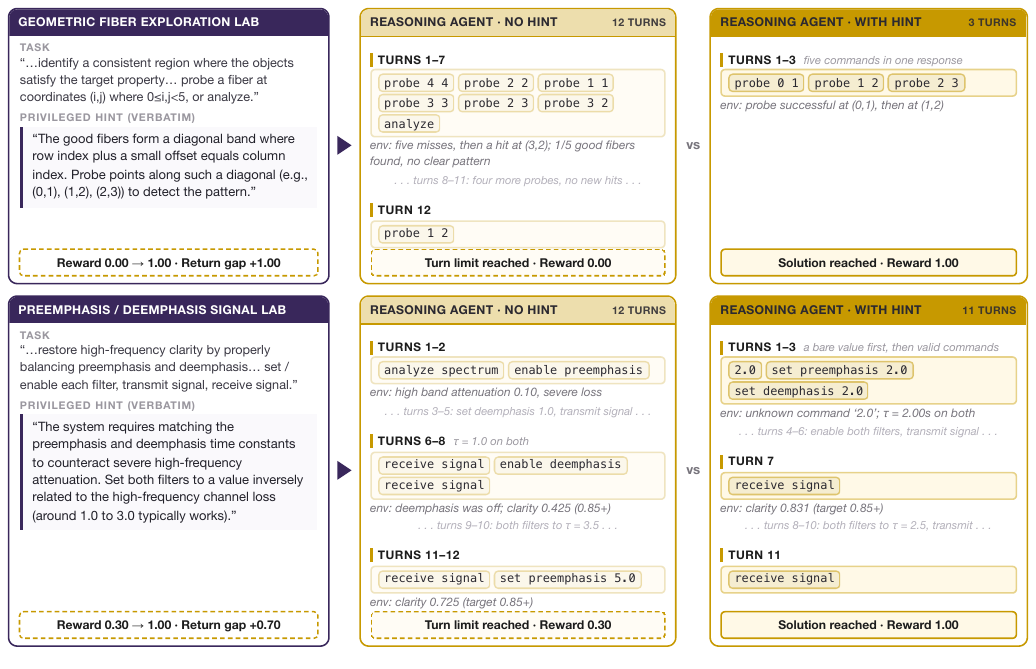}
\caption{\textbf{How a privileged hint changes \RA{} play.} Two positive-regret examples from the canonical 30B games run. Left: each environment's task prompt and privileged hint, quoted verbatim (ellipses mark elided text; the standardized answer-format sentence is omitted from the hint). Right: one logged \RA{} rollout per arm, condensed while preserving action order, feedback, and values; elided turns are marked and named. The two arms are independent plays with independently seeded resets, so board layouts and probe outcomes differ across arms. The dashed box on each environment card reports the two displayed rollout returns and their single-pair gap; the \ED{} reward in Equation~\ref{eq:regret} is instead the difference of the arm means over all logged rollouts ($0.00 \to 1.00$ for the fiber task, $0.30 \to 0.65$ for the audio task). An expanded task--hint set appears in Appendix~\ref{app:hint-examples}.}
\label{fig:hint-example}
\end{figure*}

where $\bar{r}_A(e \mid h) = \frac{1}{G}\sum_{i=1}^{G} r_A(y'_i \mid e, h)$ is the \RA{}'s average return over $G$ fresh rollouts $y'_i$ sampled with the privileged hint $h$ in context, and $\bar{r}_A(e)$ is the corresponding average over $G$ rollouts without it. Three regimes emerge: high regret indicates an environment at the learning frontier (solvable with hints but not without); low regret where the \RA{} succeeds with and without the hint indicates mastery; and low regret with low returns even with the hint indicates an intractable environment. This implements the minimax regret objective of PAIRED~\citep{dennis2020emergent} without a separate antagonist: the hint-equipped \RA{} serves as the upper-bound policy. For a policy that best-responds to the hint, extra information cannot reduce expected return, so regret is non-negative at the optimum; for the current policy, expected regret can dip below zero when hints mislead it, which we observe for the smaller backbones (Figure~\ref{fig:scaling}). We formalize this intuition and show that, under idealized assumptions, every pure Nash equilibrium yields hint-free optimality on every valid environment (Appendix~\ref{app:theory}). We compare hint-based regret against an EMA-based learning-potential signal~\citep{kanitscheider2021multitaskcurriculumlearningcomplex, zhang2023omni} in the \ED{}-reward ablation (Section~\ref{sec:abl-ed-reward}).

\textbf{Hint generation.} For each environment $e$, the \ED{} (same LLM, different prompt) emits one task-specific hint of a few sentences. The prompt asks for the key insight or strategy and the expected answer format, but forbids revealing the exact answer; its output is unlabeled hint text. Figure~\ref{fig:hint-example} shows two high-regret examples. Each hint is shown beside the task it conditions and one logged \RA{} rollout from each prompt condition. The paired trajectories expose the behavioral source of the return gap: the hint hands the fiber-task agent the probe pattern almost outright, and narrows the audio-task agent's search to a parameter range it would otherwise reach only after repeated failed attempts.

\subsection{Environment Design}
\label{sec:env-generation}

\textbf{Corpus grounding.} A generator conditioned only on its own output has no source of novelty outside its own weights, so it narrows toward the patterns it already favors and mode-collapses, the ``invisible leash''~\citep{chae2025towards, zhang2026verbalizedsamplingmitigatemode, jiang2025artificialhivemindopenendedhomogeneity}. We treat an external corpus as the mechanism that lengthens that leash: \emph{every} round the \ED{} conditions on freshly sampled human corpus. In the games setting these are $10$k mathematics and $5$k science documents drawn from DCLM~\citep{li2024datacomp} and MegaScience~\citep{fan2025megascience}, spanning web pages, physics forums, and university-level scientific textbooks; in the tool-use setting, $15$k documents from the Nemotron pretraining code corpus~\citep{nvidia_nemotron_3_ultra_2026}, which supplies algorithm implementations and API documentation. SPICE~\citep{liu2025spice} established corpus grounding for \emph{task} generation, mining documents for reasoning questions; \spade{} carries the principle to \emph{environment} generation, where a sampled passage seeds an executable MDP rather than a question-answer pair. Section~\ref{sec:training-dynamics} (Figure~\ref{fig:embed-ablation}) shows how corpus grounding sustains environment diversity.

\textbf{Environment memory.} The corpus supplies breadth from outside the loop; the
cross-episode memory keeps the \ED{} from re-posing what the \RA{} has already
mastered~\citep{schmidhuber2013powerplay}. A buffer of previously generated environments,
annotated with regret scores and skill tags, gives the \ED{} high-regret seeds to vary, so each round starts from what the \RA{} currently finds hard
rather than from scratch. Retaining and reusing past experience this way is the mechanism
agentic-memory systems rely on to keep improving without further gradient
updates~\citep{zhang2026memrl, xiong2026learning}; here it acts on the design side, holding
generated difficulty at the \RA{}'s frontier as that frontier moves. The two inputs work on
different axes, the corpus on what environments are about and the memory on how hard they
are, and Table~\ref{tab:ablations} separates their contributions.

Environment pool management, lifecycle, and quality control details are in Appendix~\ref{app:lifecycle}.

\begin{table*}[t]
  \caption{\textbf{Training the \RA{} on diverse synthetic games improves held-out reasoning and code benchmarks at every backbone scale.} The eight held-out benchmarks probe four capability families: competition math (AIME 2025/2026, Avg@32), science reasoning (GPQA-Diamond, accuracy), code generation (LiveCodeBench-v6, Pass@1), and procedural reasoning across four cognitive skills (Reasoning-Gym, win rate at \textsc{hard}). Both fixed-environment baselines are retrained per backbone from the same base model for $400$ training iterations; Fixed-env RLVE follows the official RLVE sampling and curriculum settings. \colorbox{gray!15}{\strut Avg} is the unweighted mean over the eight benchmarks; green subscripts denote absolute pp gain over the same-model base. Best per column within each backbone block in \textbf{bold}, second best \underline{underlined}.}
  \label{tab:games}
  \centering
  \small
  \setlength{\tabcolsep}{3.0pt}
  \renewcommand{\arraystretch}{1.10}
  \resizebox{\textwidth}{!}{%
  \begin{tabular}{@{}l|cccc|cccc|>{\columncolor{gray!12}}c|c@{}}
    \toprule
    \multirow{2}{*}{\textbf{Model}}
     & AIME'25 & AIME'26 & GPQA-D &
LCB-v6     & \multicolumn{4}{c|}{\textbf{Reasoning-Gym}~\citep{stojanovski2026reasoning}}
     &
     & \multirow{2}{*}{\makecell{\textbf{$\Delta$ vs}\\\textbf{Base}}} \\
     & \citep{aime}
     & \citep{aime}
     & \citep{rein2023gpqa}
     & \citep{jain2024livecodebench}
     & RG-Math & RG-Algo. & RG-Cog. & RG-Logic
     & \multirow{-2}{*}{\textbf{Avg}} & \\
    \midrule
    \multicolumn{11}{@{}l}{\cellcolor{green!5}\textit{\textbf{Qwen3 backbones: fixed-environment baselines and \spade{} (game environment design)}}} \\
    Qwen3-4B-Instruct-2507
     & 47.4 & 58.3 & 55.9 & 35.1 & 31.6 & 11.8 & 16.4 & 54.7 & 38.9 & -- \\
    \quad Fixed-env GRPO
     & 47.1 & 58.6 & 56.2 & 35.4 & 34.0 & 13.5 & 18.1 & 56.3 & 39.9 & +1.0 \\
    \quad Fixed-env RLVE
     & \textbf{49.6} & \textbf{62.1} & \underline{57.3} & \underline{35.9} & \underline{38.6} & \underline{16.3} & \underline{21.0} & \underline{59.1} & \underline{42.5} & +3.6 \\
    \rowcolor{red!10}
    \quad + \spade{} (Games)
     & \underline{48.9}\inc{1.5} & \underline{60.2}\inc{1.9} & \textbf{58.1}\inc{2.2} & \textbf{37.2}\inc{2.1}
     & \textbf{44.6}\inc{13.0} & \textbf{19.8}\inc{8.0} & \textbf{23.1}\inc{6.7} & \textbf{60.8}\inc{6.1}
     & \textbf{44.1} & \textbf{\textcolor{deepgreen}{+5.2}} \\
    \midrule
    Qwen3-8B
     & 67.1 & 71.2 & 59.4 & 46.3 & 47.2 & 19.6 & 24.8 & 63.1 & 49.8 & -- \\
    \quad Fixed-env GRPO
     & 67.4 & 71.0 & 59.9 & 46.8 & 49.8 & 21.9 & 26.5 & 64.6 & 51.0 & +1.2 \\
    \quad Fixed-env RLVE
     & \textbf{69.6} & \textbf{75.0} & \underline{61.6} & \underline{47.4} & \underline{52.7} & \underline{25.0} & \underline{31.0} & \underline{67.9} & \underline{53.8} & +3.9 \\
    \rowcolor{red!10}
    \quad + \spade{} (Games)
     & \underline{68.8}\inc{1.7} & \underline{73.1}\inc{1.9} & \textbf{62.9}\inc{3.5} & \textbf{49.4}\inc{3.1}
     & \textbf{57.3}\inc{10.1} & \textbf{29.2}\inc{9.6} & \textbf{33.8}\inc{9.0} & \textbf{69.7}\inc{6.6}
     & \textbf{55.5} & \textbf{\textcolor{deepgreen}{+5.7}} \\
    \midrule
    Qwen3-30B-A3B-Instruct-2507
     & \underline{61.5} & 73.5 & 70.4 & 43.2 & 45.0 & 18.0 & 23.0 & 67.0 & 50.2 & -- \\
    \quad Fixed-env GRPO
     & 61.2 & \underline{73.8} & \underline{70.9} & \underline{43.7} & 48.1 & 20.3 & 24.6 & 68.4 & 51.4 & +1.2 \\
    \quad Fixed-env RLVE
     & 56.9 & 69.8 & 69.8 & 42.5 & \underline{55.8} & \underline{24.7} & \underline{30.9} & \textbf{73.7} & \underline{53.0} & +2.8 \\
    \rowcolor{red!10}
    \quad + \spade{} (Games)
     & \textbf{62.8}\inc{1.3} & \textbf{74.4}\inc{0.9} & \textbf{75.8}\inc{5.4} & \textbf{47.3}\inc{4.1}
     & \textbf{63.3}\inc{18.3} & \textbf{32.1}\inc{14.1} & \textbf{37.7}\inc{14.7} & \underline{72.8}\inc{5.8}
     & \textbf{58.3} & \textbf{\textcolor{deepgreen}{+8.1}} \\
    \bottomrule
  \end{tabular}}
\end{table*}

\section{Experimental Setup}
\label{sec:setup}

\begin{figure*}[!t]
  \centering
  \includegraphics[width=\textwidth]{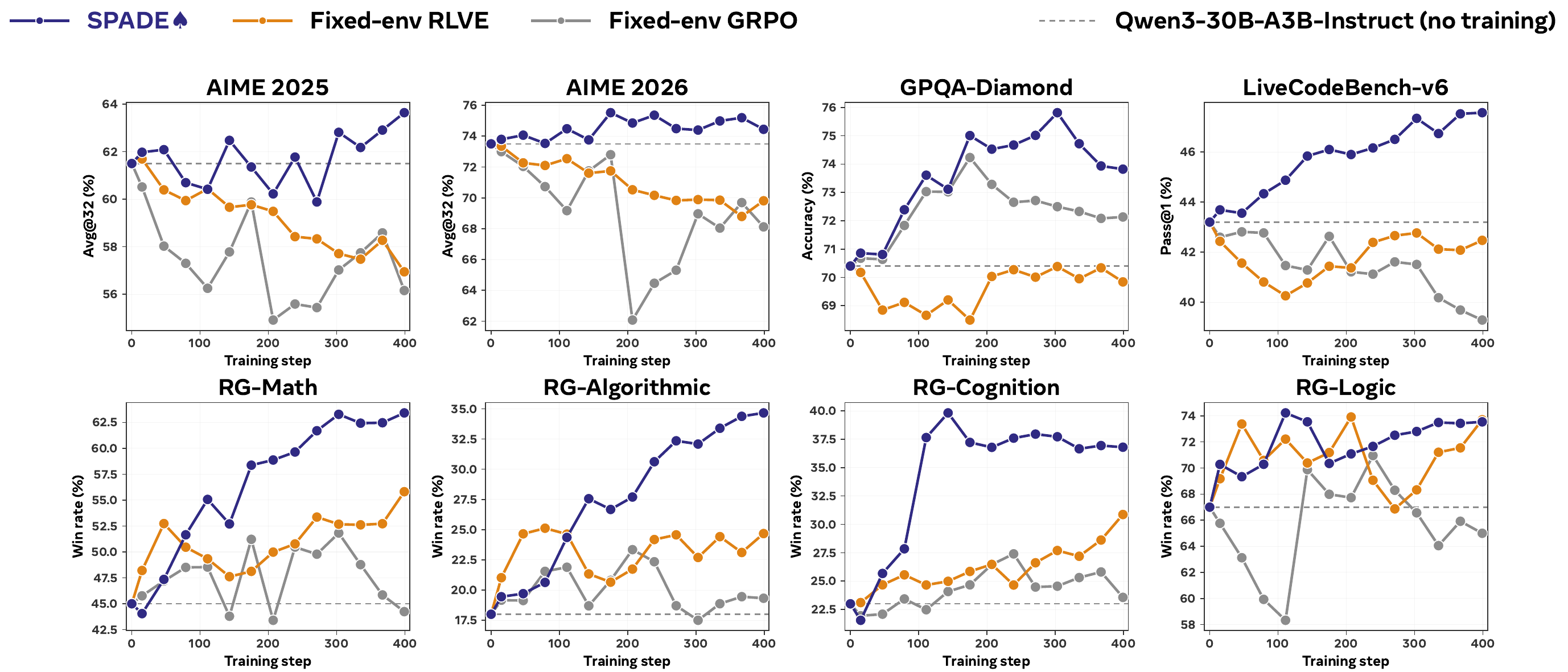}
  \caption{\textbf{Training on diverse synthetic games improves science reasoning, code generation, and procedural reasoning while competition math is preserved} (games setting, Qwen3-30B-A3B-Instruct-2507). Top row: competition math (AIME 2025/2026 Avg@32), science reasoning (GPQA-Diamond accuracy), and code generation (LiveCodeBench-v6 Pass@1). Bottom row: procedural reasoning across four cognitive skills (Reasoning-Gym win rate at \textsc{hard}). Markers denote logged evaluation checkpoints; the dashed line marks the untrained base model.}
  \label{fig:eval-main}
\end{figure*}

\subsection{\spade{} Training Recipe}
\label{sec:setup-shared}
\label{sec:training}

We train three Qwen3 backbones: Qwen3-4B-Instruct-2507, Qwen3-8B, and Qwen3-30B-A3B-Instruct-2507, the last of which is our \emph{primary} model. The 4B and 30B runs use instruct models; the 8B run enables thinking for both the \ED{} and the \RA{}. We train with GRPO~\citep{shao2024deepseekmath} for $400$ rollouts of $24$ environments each. The \ED{} regenerates the environment set every $k$ rollouts and its update is delayed by the same $k$ (Section~\ref{sec:dual-role}), so $k$ sets how long the \RA{} trains on a fixed set before the curriculum moves. The \RA{} plays every environment $16\times k$ times without the hint and $16$ times with it; the regret subtracts hint and no-hint averages measured at the same regeneration step ($16$ plays each). The \ED{} writes the privileged hint itself, with no external model, and each round it also draws high-regret seeds from a memory of past environments. Rewards are normalized per environment, and the stabilizers described earlier (per-role advantages, the delayed \ED{} update with truncated importance sampling, and the regret floor) are unchanged across settings. Our RL backbone is \textbf{slime}~\citep{slime_github}.

The settings below differ in what the \ED{} generates, how it is grounded, how the \RA{} acts, and the regeneration interval $k$. Every benchmark is held out from training, and component-level controls such as removing \ED{} training or grounding are covered by the ablations of Section~\ref{sec:ablations}. Full hyperparameters, per-run adjustments, and evaluation protocol details are in Appendices~\ref{app:implementation} and~\ref{app:reproducibility}.

\subsection{Domain: Games}
\label{sec:setup-games}

\textbf{Environments.\,} The \ED{} writes each environment as a self-contained Python game with a verifiable reward, and the \RA{} solves it. Each rollout uses $24$ games, eight for each of three active skills; the six cognitive-skill categories (Mathematical Reasoning, Logical Deduction, Spatial Reasoning, Pattern Recognition, Optimization, and Causal Inference) rotate three at a time, with $k{=}4$. The generation is grounded in the $15$k-document math and science corpus of Section~\ref{sec:env-generation}. Besides floored hint-based regret, the \ED{} reward includes the flat-top difficulty anchor of Section~\ref{sec:dual-role} that pays games whose \RA{} win rate falls in the target band, and every game must pass syntax and execution checks before it enters the pool.

\begin{table*}[tbp]
  \caption{\textbf{Synthetic tool-use environments match dedicated data-synthesis systems and surpass them where multi-step interaction matters most.} Per-domain results on BFCL~v4 multi-turn~\citep{patil2025berkeley}, $\tau^2$-bench~\citep{barres2025tau}, and ACEBench-Agent~\citep{chen2025acebench}. Reference rows are transcribed from the cited papers. For our rows, \colorbox{gray!12}{\strut Avg} is the unweighted mean of the shown subcolumns and the final Avg averages the three benchmarks, computed before rounding. For reference rows, Avg is likewise the unweighted mean of the shown subcolumns, or the cited paper's own aggregate where the subcolumns are not reported; Agent-World and AWM print $\tau^2$ aggregates of $61.8$/$65.4$ and (task-weighted) $33.5$/$39.0$, and we print the means of the shown domains for cross-row consistency. The final Avg is omitted because each reference system skips at least one benchmark. `--' = not reported by the cited paper. \spade{} rows in \textbf{bold}.}
  \label{tab:tooluse}
  \centering
  \scriptsize
  \setlength{\tabcolsep}{2.6pt}
  \renewcommand{\arraystretch}{1.12}
  \resizebox{\textwidth}{!}{%
  \begin{tabular}{@{}l|cccc>{\columncolor{gray!12}}c|ccc>{\columncolor{gray!12}}c|cc>{\columncolor{gray!12}}c|>{\columncolor{gray!18}}cc@{}}
    \toprule
    & \multicolumn{5}{c|}{\textbf{BFCL v4 (multi-turn)}}
    & \multicolumn{4}{c|}{\textbf{$\tau^2$-bench}}
    & \multicolumn{3}{c|}{\textbf{ACEBench-Agent}}
    & & \\
    \cmidrule(lr){2-6}\cmidrule(lr){7-10}\cmidrule(lr){11-13}
    \textbf{Model}
      & Base & \makecell{Miss\\Func} & \makecell{Miss\\Param} & \makecell{Long\\Ctx} & Avg
      & Retail & Airline & Telecom & Avg
      & \makecell{Multi\\Step} & \makecell{Multi\\Turn} & Avg
      & \textbf{Avg} & \textbf{$\Delta$} \\
    \midrule
    \multicolumn{15}{@{}l}{\cellcolor{green!5}\textit{\textbf{Synthetic-environment agents (per-split numbers as reported by the cited papers)}}} \\
    AgentScaler-30B-A3B~\citep{fang2025towards}
      & -- & -- & -- & -- & -- & 70.2 & 60.0 & 55.3 & 61.8
      & -- & -- & 60.0 & & \\
    Agent-World-8B~\citep{dong2026agent}
      & -- & -- & -- & -- & 44.5 & 72.8 & 40.0 & 50.9 & 54.6
      & -- & -- & -- & & \\
    Agent-World-14B~\citep{dong2026agent}
      & -- & -- & -- & -- & 53.9 & 74.5 & 52.0 & 56.1 & 60.9
      & -- & -- & -- & & \\
    AWM-8B~\citep{wang2026agent}
      & -- & -- & -- & -- & 45.0 & 41.2 & 38.5 & 23.5 & 34.4
      & -- & -- & -- & & \\
    AWM-14B~\citep{wang2026agent}
      & -- & -- & -- & -- & 51.9 & 63.6 & 31.5 & 17.8 & 37.6
      & -- & -- & -- & & \\
    EnvScaler-4B~\citep{song2026envscaler}
      & 51.0 & 34.0 & 28.0 & 39.0 & 38.0 & -- & -- & -- & --
      & 80.0 & 61.1 & 70.6 & & \\
    EnvScaler-8B~\citep{song2026envscaler}
      & 55.5 & 36.0 & 35.0 & 41.0 & 41.9 & -- & -- & -- & --
      & 85.0 & 60.0 & 72.5 & & \\
    \midrule
    \multicolumn{15}{@{}l}{\cellcolor{green!5}\textit{\textbf{Ours: \spade{} post-training on Qwen3 backbones (tool-use environment design)}}} \\
    Qwen3-4B-Instruct-2507
      & 34.0 & 16.0 & 12.5 & 25.5 & 22.0 & 43.0 & 32.0 & 18.0 & 31.0
      & 55.0 & 41.7 & 48.4 & 33.8 & \\
    \rowcolor{red!10}
    \quad + \spadebrand{} (Tool Use)
      & \textbf{46.0} & \textbf{26.5} & \textbf{22.0} & \textbf{34.7} & \textbf{32.3}\inc{10.3}
      & \textbf{47.2} & \textbf{35.6} & \textbf{21.5} & \textbf{34.8}\inc{3.8}
      & \textbf{65.0} & \textbf{49.5} & \textbf{57.3}\inc{8.9}
      & \textbf{41.4} & \textbf{\textcolor{deepgreen}{+7.7}} \\
    \midrule
    Qwen3-8B
      & 52.0 & 30.0 & 24.0 & 35.6 & 35.4 & 34.0 & 26.5 & 18.0 & 26.2
      & 63.3 & 56.7 & 60.0 & 40.5 & \\
    \rowcolor{red!10}
    \quad + \spadebrand{} (Tool Use)
      & \textbf{58.0} & \textbf{36.0} & \textbf{30.0} & \textbf{43.2} & \textbf{41.8}\inc{6.4}
      & \textbf{37.8} & \textbf{29.5} & \textbf{21.2} & \textbf{29.5}\inc{3.3}
      & \textbf{73.0} & \textbf{65.0} & \textbf{69.0}\inc{9.0}
      & \textbf{46.8} & \textbf{\textcolor{deepgreen}{+6.2}} \\
    \midrule
    Qwen3-30B-A3B-Instruct-2507
      & 66.0 & 44.0 & 38.0 & 48.0 & 49.0 & 62.0 & 50.0 & 35.0 & 49.0
      & 70.0 & 54.0 & 62.0 & 53.3 & \\
    \rowcolor{red!10}
    \quad + \spadebrand{} (Tool Use)
      & \textbf{72.0} & \textbf{50.0} & \textbf{44.0} & \textbf{52.9} & \textbf{54.7}\inc{5.7}
      & \textbf{65.5} & \textbf{53.5} & \textbf{38.8} & \textbf{52.6}\inc{3.6}
      & \textbf{82.0} & \textbf{69.8} & \textbf{75.9}\inc{13.9}
      & \textbf{61.1} & \textbf{\textcolor{deepgreen}{+7.7}} \\
    \bottomrule
  \end{tabular}}
\end{table*}

\textbf{Baselines and evaluation.\,}\label{sec:baselines}\label{sec:eval} We retrain two fixed-environment baselines separately for each backbone from its corresponding base checkpoint, using the same $400$-iteration budget: \emph{Fixed-env RLVE}, GRPO on the official RLVE set~\citep{zeng2025rlve}, and \emph{Fixed-env GRPO}, which is generated offline by that same Qwen3 backbone and held fixed throughout training. Fixed-env RLVE follows the official RLVE sampling and curriculum settings. RLVE has the higher suite average at every backbone scale. Evaluation spans two distances from the training distribution: \emph{procedural reasoning}, the four Reasoning-Gym~\citep{stojanovski2026reasoning} categories at \textsc{hard} difficulty, and \emph{out-of-distribution (OOD) reasoning and code}, AIME 2025/2026 (Avg@32)~\citep{aime}, GPQA-Diamond (accuracy)~\citep{rein2023gpqa}, and LiveCodeBench-v6 (Pass@1)~\citep{jain2024livecodebench}. Table~\ref{tab:games} reports all eight for every backbone.

\subsection{Domain: Tool Use}
\label{sec:setup-tooluse}

\textbf{Environments.\,} The \ED{} instead writes an executable tool-use environment: a set of simulated tools in OpenAI function-calling format, a backend state the tools modify, and three to five natural-language user instructions that arrive one at a time, each with a check on the resulting state. Generation is grounded in the $15$k-document code corpus, and tool environments cost more to generate, so we used $k{=}8$. The \RA{} solves an environment by calling tools over multiple turns and is rewarded only when it completes every user instruction. The privileged hint is a step-by-step plan (look up a record, update it, then reconcile the result), so the \RA{} still has to find the exact arguments by calling the tools. Validation adds two checks beyond the game ones: a deterministic reset gate that exercises every success criterion under several seeds, and an LLM check that every success criterion can be met by some tool (details in Appendix~\ref{app:lifecycle}). The generation prompt targets the multi-turn function-calling task family of the evaluation suites (schema-defined simulated tools, a backend state, and per-instruction checks); no benchmark tasks or data are shown to the \ED{}.

\noindent\begin{minipage}{\textwidth}
\textbf{Baselines and evaluation.\,} We compare \spade{} with four synthetic-environment systems: AgentScaler~\citep{fang2025towards}, Agent-World~\citep{dong2026agent}, Agent World Model (AWM)~\citep{wang2026agent}, and EnvScaler~\citep{song2026envscaler}. Their results are transcribed from the cited papers and differ from ours in training data and budget, in some cases base model, and in evaluation protocol (Appendix~\ref{app:comparability}). Evaluation uses BFCL~v4 multi-turn~\citep{patil2025berkeley}, $\tau^2$-bench~\citep{barres2025tau}, and ACEBench-Agent (the Agent category of ACEBench-en)~\citep{chen2025acebench}, reported per domain in Table~\ref{tab:tooluse}.
\end{minipage}

\section{Experimental Results}
\label{sec:results}

We evaluate \spade{} on two settings, game and tool-use environment design, across the three Qwen3 backbones of Section~\ref{sec:setup-shared}. Section~\ref{sec:domain-results} reports held-out benchmark performance against the fixed-environment baselines, and Section~\ref{sec:training-dynamics} examines what the \ED{} and \RA{} produce over training.

\subsection{Quantitative Analysis}
\label{sec:domain-results}

\textbf{Synthetic game environments improve science, code, and procedural reasoning.\,} At 30B-A3B, \spade{} reaches a suite average of $58.3$: $+8.1$ over base and $+5.3$ over the strongest fixed-environment baseline (Fixed-env RLVE), a margin that grows with model size (Table~\ref{tab:games}). The \ED{} trains on synthetic games, never on any held-out task, yet the gains transfer to science, code, and procedural reasoning and hold late in the $400$-step run (Figure~\ref{fig:eval-main}), while competition math is preserved. They concentrate on procedural reasoning, where every cognitive-skill category improves: the generated games exercise each skill through many problem structures rather than a fixed task set.

\textbf{Synthetic tool-use environments improve multi-step interaction.\,} The same recipe applied to tool-use environment design improves every backbone (Table~\ref{tab:tooluse}), and at 30B-A3B the size of the gain tracks how closely a benchmark's task structure matches the generated environments: ACEBench-Agent gains most ($+13.9$), and its stateful, multi-step tasks mirror the generated pattern of a database, a tool schema, and a multi-call goal. BFCL~v4 multi-turn follows ($+5.7$, and $+10.3$ at 4B), then $\tau^2$-bench ($+3.6$). \spade{} at 30B-A3B leads the dedicated data-synthesis systems on both BFCL~v4 multi-turn and ACEBench-Agent, so structural training signal transfers where domain-specific data collection does not reach.

\subsection{Qualitative Analysis}
\label{sec:training-dynamics}

\begin{figure}[t]
    \centering
    \includegraphics[width=\textwidth]{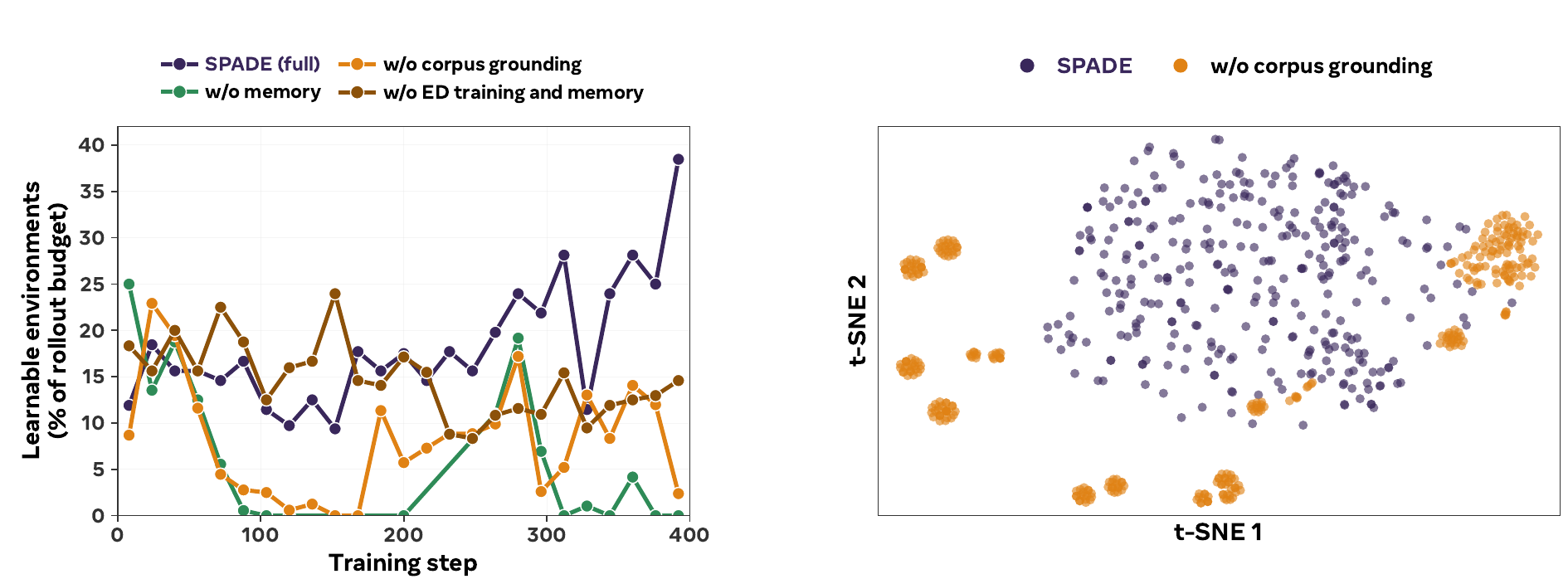}\\[4pt]
    \begin{minipage}[t]{0.49\textwidth}
    \caption{\textbf{Full \spade{} raises the learnable share of its environment budget to roughly a third by the end of training; component ablations decline or collapse.} Share of each rollout's 24 environments that is \emph{learnable}, defined as \RA{} win rate in $[0.2, 0.8]$ and weighted by the number of valid environments generated in each 16-step window. Matched 30B-A3B settings over a common 400-step budget; unfilled rollout capacity contributes zero by construction.}
    \label{fig:difficulty-dynamics}
    \end{minipage}\hfill
    \begin{minipage}[t]{0.49\textwidth}
    \caption{\textbf{Corpus grounding sustains environment diversity: Vendi/$n$ is $0.68$ with the corpus and $0.04$ without.} t-SNE of SBERT-embedded environments (330 sampled per run) from \spade{} and the no-corpus ablation. Vendi/$n$ counts effective distinct environments per 100. The projection is illustrative; the quantitative comparison rests on the calibrated Vendi score (Appendix~\ref{app:qa-diversity-metrics}).}
    \label{fig:embed-ablation}
    \end{minipage}
\end{figure}

\textbf{The \ED{} keeps supplying environments the \RA{} can learn from.\,} We examine what each role produces over the main 30B games run. Figure~\ref{fig:difficulty-dynamics} tracks the learnable share of each rollout, the fraction of generated environments on which the \RA{} wins between $20\%$ and $80\%$ of the time. Full \spade{} raises this share over training, reaching roughly a third late in the 400-step run; the corpus, the memory, and \ED{} training together sustain that supply. Figure~\ref{fig:embed-ablation} traces the diversity of that supply to the corpus: corpus grounding keeps the environments diverse (Vendi/$n$ $0.68$, versus $0.04$ without it), and the control without \ED{} training or memory, but with the corpus, retains full diversity ($0.70$, Appendix~\ref{app:qa-diversity-metrics}). Corpus grounding provides the breadth, while the full \ED{} training configuration sharpens the difficulty. \citet{ivison2026diversity} reaches the same conclusion from the failure side: an ungrounded proposer collapses onto a handful of near-identical programs, explicit diversity rewards get hacked once they are optimized against, and conditioning on an external corpus is the intervention that holds longest. Figure~\ref{fig:embed-ablation} shows that corpus grounding sustains diversity, whereas the ungrounded run collapses.

\textbf{\ED{} training makes environments harder in measurable, code-level ways.\,} The generation instructions are fixed across all 400 steps (the same template asking for hidden state, multi-turn structure, and partial reward), while the grounding document is resampled from the corpus each step with no trend over training, so any systematic change in the generated environments traces to \ED{} training rather than to a prompt schedule. Two measurements move together. First, within Physics, the share of environments whose opening observation prints the governing formula falls from $25\%$ to $5\%$ over the $473$ Physics environments of the canonical run (Figure~\ref{fig:ed-outputs}). Second, rewards become more finely graded, from $3.7$ to $5.8$ distinct levels per environment, of which $2.2 \to 4.0$ are strictly partial (Appendix~\ref{app:training-dynamics}). The no-corpus run (w/o corpus grounding in Table~\ref{tab:ablations}) shows none of this: over steps 290--312 it emits the same rotating-maze task 41 times in a row.

\begin{figure}[t]
\centering
\includegraphics[width=\textwidth]{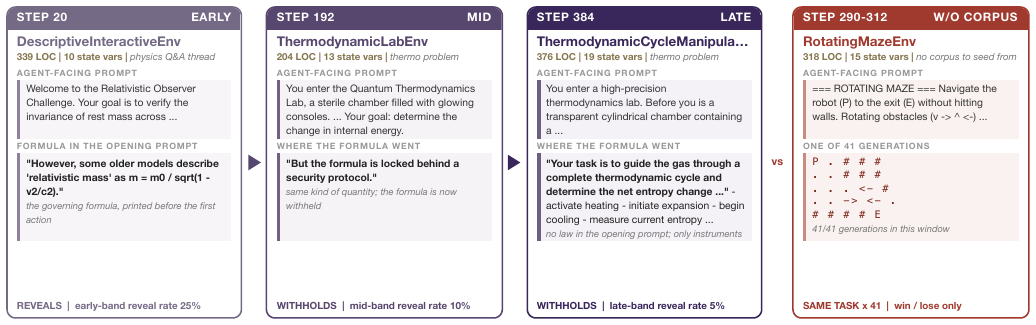}
\caption{\textbf{Trained \ED{} environments stop revealing the solution method in the prompt.} Physics environments from one 30B-A3B run at steps 20, 192, and 384; the step-20 environment retains the scaffold's default class name. The formula-reveal rate (percentage on each panel) falls from $25\%$ to $5\%$ over $473$ environments. Rightmost (red): over steps 290--312 the no-corpus ablation emits the same \texttt{RotatingMazeEnv} task 41 consecutive times. The complete source of the step-384 environment, together with one further exemplar, appears in Appendix~\ref{app:env-full-source}.}
\label{fig:ed-outputs}
\end{figure}

\textbf{The \RA{} learns to act on evidence rather than derive in advance.\,} Figure~\ref{fig:ra-outputs} shows the \RA{}'s side of the same run. At step 0 it reasons entirely up front and cannot recover once the interface rejects its answer; by step 200 it tests short hypotheses and revises them as results return; by step 300 it gathers evidence first and derives once, when the evidence is enough. The shift appears only where the task rewards inference: on procedural environments the same policy acts in short commands (late-step median $8$ tokens). The model still derives at length when the task calls for it, and the late-checkpoint benchmark gains (Table~\ref{tab:games}) confirm it keeps the ability.

\begin{figure}[b]
\centering
\includegraphics[width=\textwidth]{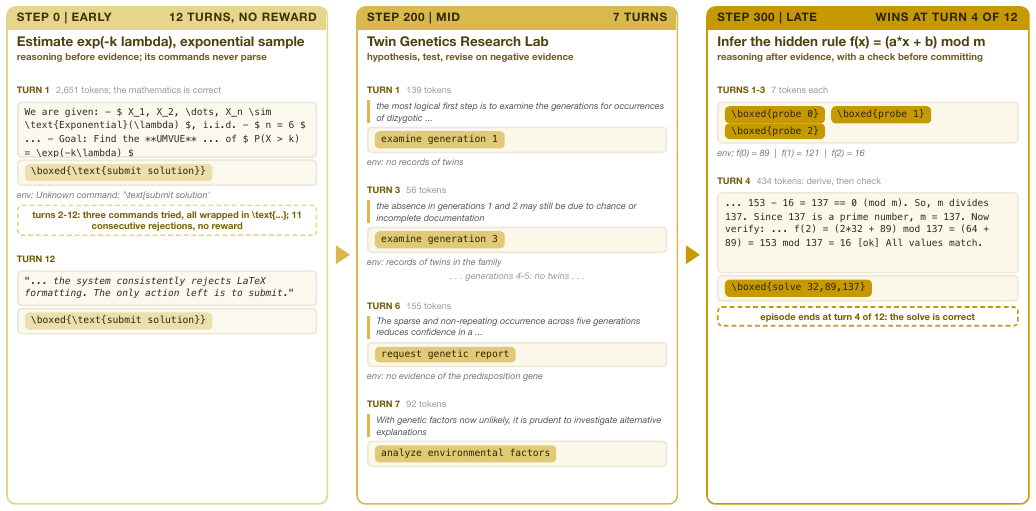}
\caption{\textbf{From front-loaded derivation to evidence-first interaction.} \RA{} episodes from one 30B-A3B run at steps 0, 200, and 300. At step 0 the agent derives in advance and cannot recover from format errors; by step 200 it tests short hypotheses and revises on evidence; by step 300 it probes first and derives once. Benchmark gains of late checkpoints (Table~\ref{tab:games}) confirm the model keeps its long-form derivation ability. Transcripts verbatim (math glyphs transliterated to ASCII; environment feedback abridged); token counts use the backbone's tokenizer.}
\label{fig:ra-outputs}
\end{figure}

\section{Ablations}
\label{sec:ablations}

We ablate two design choices for the \ED{}: whether it adapts during training (Section~\ref{sec:abl-ed-training}), and how it is rewarded (Section~\ref{sec:abl-ed-reward}). Table~\ref{tab:ablations} covers the ablation for training settings and environment sources. The first two settings remove corpus grounding and the environment memory, one at a time. The remaining two variants remove online \ED{} training in different ways: one continues self-generation but removes both \ED{} training and the environment memory; the other trains the \RA{} on a static pool generated offline by GPT-5.5, with corpus grounding. All variants use games on Qwen3-30B-A3B-Instruct-2507 and are evaluated on the same eight-benchmark suite as the main experiments. Each variant reports its best checkpoint on the suite average; full eval curves are in Figure~\ref{fig:eval-ablation}, and per-ablation breakdowns in Appendix~\ref{app:results}.

\begin{table*}[tbp]
  \caption{\textbf{The full adaptive configuration outperforms every partial and non-adaptive control.} Games setting, Qwen3-30B-A3B-Instruct-2507. Best checkpoint per variant on suite average; Avg is the unweighted mean over the same eight benchmarks as Table~\ref{tab:games}; full trajectories in Figure~\ref{fig:eval-ablation}. Best in \textbf{bold}, second best \underline{underlined}.}
  \label{tab:ablations}
  \centering
  \small
  \setlength{\tabcolsep}{3.5pt}
  \renewcommand{\arraystretch}{1.12}
  \resizebox{\textwidth}{!}{%
  \begin{tabular}{@{}l cccc|cccccccc|>{\columncolor{gray!12}}c@{}}
    \toprule
    & \multicolumn{4}{c|}{\textbf{Components}}
    & \multicolumn{8}{c|}{\textbf{Benchmarks}} & \\
    \cmidrule(lr){2-5}\cmidrule(lr){6-13}
    \textbf{Setting}
      & \makecell{ED\\design}
      & \makecell{ED\\trained}
      & \makecell{Corpus\\grounding}
      & \makecell{Env.\\memory}
      & AIME'25
      & AIME'26
      & GPQA-D
      & LCB-v6
      & RG-Math
      & RG-Algo.
      & RG-Cog.
      & RG-Logic
      & \textbf{Avg} \\
    \midrule
    Qwen3-30B-A3B-Instruct-2507 & -- & -- & -- & --
      & \underline{61.5} & 73.5 & 70.4 & 43.2 & 45.0 & 18.0 & 23.0 & 67.0 & 50.2 \\
    \midrule
    \rowcolor{red!10}
    \spadebrand{} & Self & \cmark & \cmark & \cmark
      & \textbf{62.8}
      & \underline{74.4}
      & \textbf{75.8}
      & \textbf{47.3}
      & \textbf{63.3}
      & \textbf{32.1}
      & \textbf{37.7}
      & \textbf{72.8}
      & \textbf{58.3} \\
    w/o memory & Self & \cmark & \cmark & \xmark
      & 59.3 & \textbf{75.0} & 72.3 & 45.7 & 49.1 & 22.9 & 30.7 & \underline{70.9} & 53.2 \\
    w/o corpus grounding & Self & \cmark & \xmark & \cmark
      & 61.1
      & 74.1
      & 71.8
      & \underline{46.3}
      & \underline{51.6}
      & 22.3
      & \underline{32.4}
      & 68.7
      & \underline{53.5} \\
    w/o ED training and memory & Self & \xmark & \cmark & \xmark
      & 59.4 & 73.5 & 65.8 & 39.1 & 22.5 & 10.0 & 7.6 & 46.0 & 40.5 \\
    \midrule
    Fixed \ED{} & GPT-5.5 & \xmark & \cmark & \xmark
      & 59.9
      & 72.8
      & \underline{74.2}
      & 42.6
      & 51.2
      & \underline{24.3}
      & 30.7
      & 68.0
      & 53.0 \\
    \bottomrule
  \end{tabular}}
\end{table*}

\begin{figure*}[htbp]
  \centering
  \includegraphics[width=\textwidth]{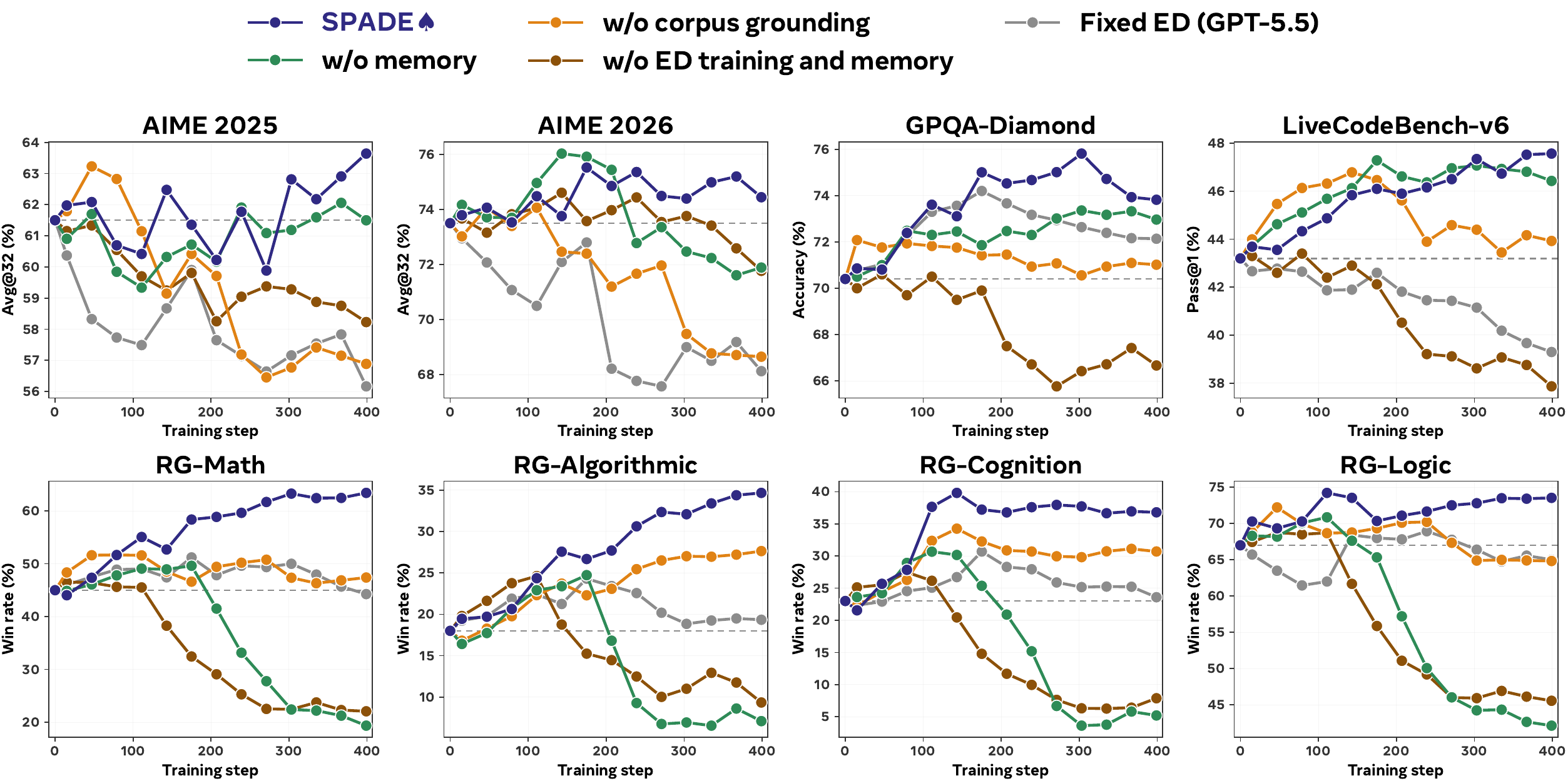}
  \caption{\textbf{Removing \ED{} training and memory together drops self-play below base; removing memory or corpus grounding alone yields an above-base selected checkpoint but peaks early and can fall below base late.} One curve per variant of Table~\ref{tab:ablations} (Qwen3-30B-A3B-Instruct-2507, games setting). Top row: AIME 2025/2026 Avg@32, GPQA-Diamond accuracy, and LiveCodeBench-v6 Pass@1; bottom row: the four Reasoning-Gym categories; the dashed line marks the untrained base model.}
  \label{fig:eval-ablation}
\end{figure*}

\subsection{Ablation: \ED{} Adaptation}
\label{sec:abl-ed-training}

As shown in Table \ref{tab:ablations} and Figure \ref{fig:eval-ablation}, the full \spade{} run is strongest across settings. Removing both \ED{} training and memory makes the model worse than no training at all, with the eight-benchmark average falling to $40.5$, $9.7$ points \emph{below} the untrained base. The static GPT-5.5 pool, generated offline with corpus grounding but no environment memory, does better, beating the base ($53.0$ versus $50.2$), but recovers only about $35\%$ of \spade{}'s $+8.1$ gain and does not improve code (LiveCodeBench-v6 $42.6$ versus $43.2$). The partial and non-adaptive variants peak early and then fade (the no-memory and no-corpus runs near step~111, the GPT-5.5 pool near step~175), while full \spade{} stays strongest late in training (Figure~\ref{fig:eval-ablation}). Each control changes more than one factor: the self-generation control removes both \ED{} training and memory, while the GPT-5.5 control changes the generator and uses an offline, memory-free pool. Together they show that the full co-adaptive setup in \spade{} wins.

\subsection{Ablation: \ED{} Reward}
\label{sec:abl-ed-reward}

\spade{}'s \ED{} reward is built on hint-based regret (Eq.~\ref{eq:regret}), blended with the difficulty anchor of Section~\ref{sec:dual-role}. The
alternative below is cheaper: it reuses the \RA{} rollouts already collected and needs no
hinted replay.

\textbf{EMA-based learning potential.}
This reward design~\citep{kanitscheider2021multitaskcurriculumlearningcomplex, zhang2023omni}
scores an environment by how far the \RA{}'s success on it departs from its skill's
recent average. Environments are grouped by skill $s$. Let $\bar{r}_A(e)$ be the \RA{}'s
average return on environment $e$ (Eq.~\ref{eq:regret}), and $\bar{r}_{A,t}(s)$ its mean
over the skill's environments in scoring round $t$. Two moving averages of
$\bar{r}_{A,t}(s)$ are tracked at rates $\gamma_1 > \gamma_2$:
\begin{equation}
\label{eq:lp-ema}
\mu^{\gamma}_t(s) \;=\; (1-\gamma)\,\mu^{\gamma}_{t-1}(s) \;+\; \gamma\,\bar{r}_{A,t}(s),
\qquad
\mu_{\text{fast}} \equiv \mu^{\gamma_1},\quad \mu_{\text{slow}} \equiv \mu^{\gamma_2}.
\end{equation}
Here $\gamma$ weights the new observation, so $\mu_{\text{fast}}$ adapts faster than
$\mu_{\text{slow}}$, and $\mu^{\gamma}$ has half-life $\log(0.5)/\log(1-\gamma)$ rounds.
The reward uses $\mu_{\text{slow}}$ as the skill baseline:
\begin{align}
\label{eq:lp-raw}
\rho(e) \;&=\; \bigl|\,\bar{r}_A(e) - \mu_{\text{slow},t}(s)\,\bigr|, \\
\label{eq:lp}
r_D^{\text{LP}}(e) \;&=\; \rho(e) - \tfrac{1}{|\mathcal{E}_s|}\textstyle\sum_{e' \in \mathcal{E}_s} \rho(e') ,
\end{align}
where $\mathcal{E}_s$ is the round's environment set for skill $s$.

Two consequences follow. The deviation is unsigned, so an environment the \RA{} always
solves can score as highly as one it never solves when the slow mean sits mid-range,
though neither gives gradient; only a variance-style bonus such as
$\bar{r}_A(e)\bigl(1-\bar{r}_A(e)\bigr)$ would favour mixed outcomes. And $\mu_{\text{fast}}$ is tracked but unused: the
classical gap $|\mu_{\text{fast}} - \mu_{\text{slow}}|$ is logged only as a diagnostic.
The signal also needs a per-skill history before it is meaningful, and cannot separate
deviation caused by environment design from \RA{} drift or sampling noise.

\begin{wrapfigure}{r}{0.38\textwidth}
  \vspace{-1.2em}
  \centering
  \includegraphics[width=\linewidth]{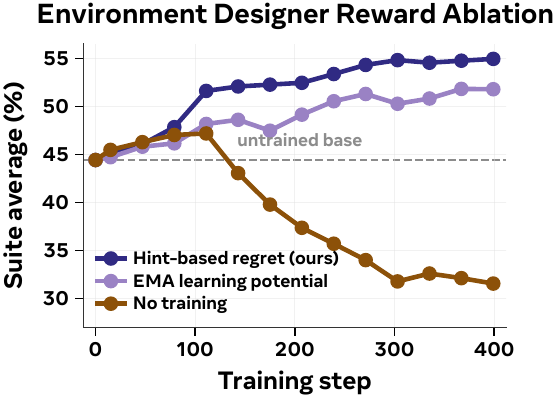}
  \captionof{figure}{\textbf{Both trained-\ED{} reward variants outperform the control without \ED{} training or memory, and regret leads.} Averaged trajectories over GPQA-Diamond, LiveCodeBench-v6, and the four Reasoning-Gym categories (Qwen3-30B-A3B-Instruct-2507, games setting); the dashed line marks the corresponding untrained base. Eight-benchmark checkpoint results are reported in Table~\ref{tab:ablation-breakdown}.}
  \label{fig:abl-ed-reward}
  \vspace{-1.0em}
\end{wrapfigure}
\textbf{Results.} Figure~\ref{fig:abl-ed-reward} shows the trajectories for the two trained-\ED{} rewards and the control without \ED{} training or memory on the six-benchmark trajectory average, while Table~\ref{tab:ablation-breakdown} gives their selected checkpoints on the full eight-benchmark suite. Hint-based regret lifts the eight-benchmark average from $50.2$ to $58.3$ ($+8.1$);
EMA-based learning potential reaches around $70\%$ of that gain ($+5.7$, to $55.9$) and
climbs more slowly, the two signals separating after the first ${\sim}50$ steps. Both stay
far clear of the control without \ED{} training or memory, whose selected checkpoint is $9.7$ points \emph{below} the untrained model.
The comparison between the two trained-\ED{} variants isolates the reward choice, the regret-based blend versus the standalone learning-potential signal; the third comparison does not isolate \ED{} training from memory. The form of Eq.~\ref{eq:lp} helps explain the remaining reward gap: an unsigned
deviation from a per-skill running mean needs history before it means anything, and even
then scores mastered and hopeless environments alike, so it finds the frontier later and
less sharply than a hint gap measured on the current policy. Per-benchmark numbers are in
Table~\ref{tab:ablation-breakdown}.

\FloatBarrier
\section{Scaling Results}
\label{sec:scaling}

\begin{figure*}[t]
  \centering
  \includegraphics[width=0.475\linewidth]{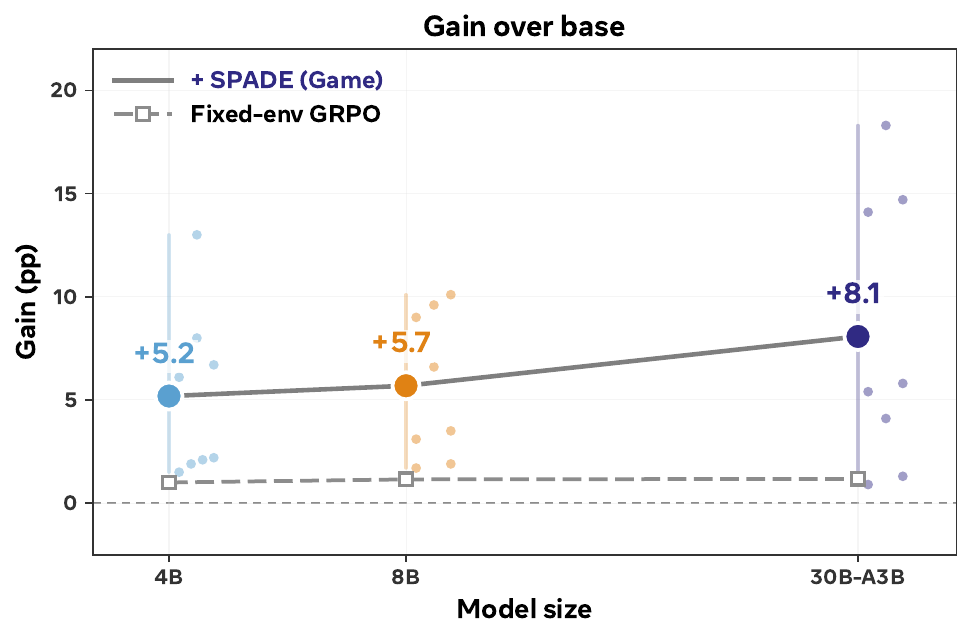}\hfill
  \includegraphics[width=0.510\linewidth]{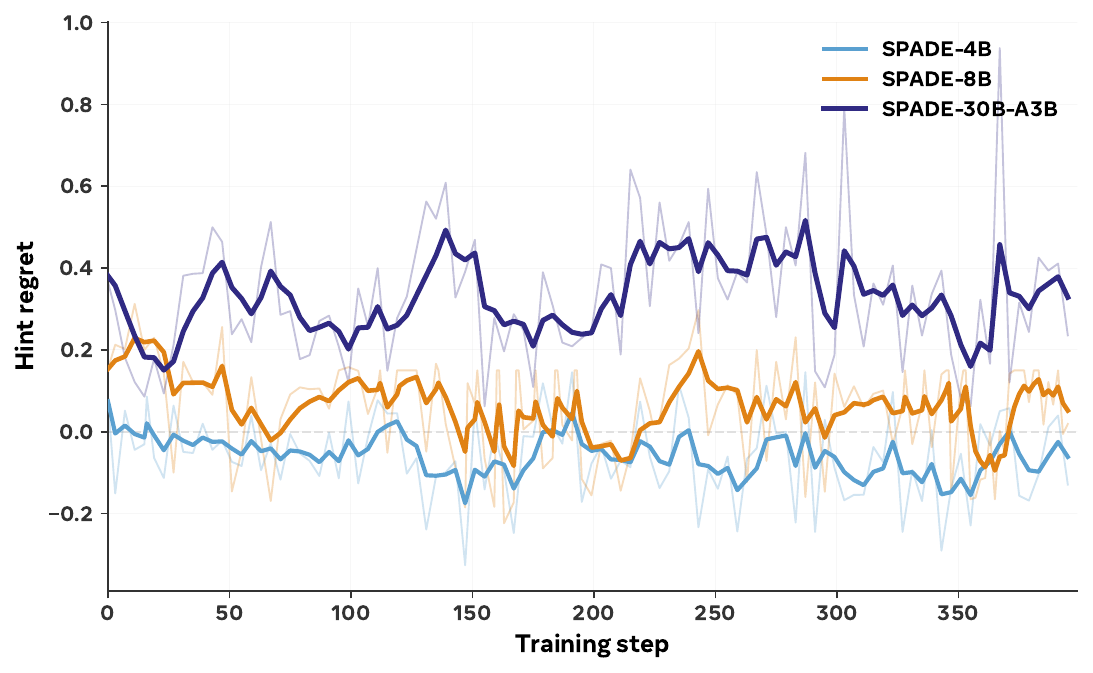}
  \caption{\textbf{\spade{}'s average gain over base grows with model size, from $+5.2$ at 4B to $+8.1$ at 30B-A3B, while matched-budget Fixed-env GRPO stays near $+1.2$.} \textbf{Left:} average gain over each backbone's own base across the eight benchmarks of Table~\ref{tab:games} (large markers), the eight per-benchmark gains beside each mean, Fixed-env GRPO in gray. \textbf{Right:} \ED{} hint-based regret over training (dark: EMA-smoothed; light: per-step). Only the 30B-A3B estimate stays positive; at 4B and 8B it dips below zero for long stretches, where the finite-sample estimate turns negative even though regret is non-negative at the optimum (Section~\ref{sec:method}). Both smaller backbones still gain over base ($+5.2$, $+5.7$), so the environments help even where the signal is noisy.}
  \label{fig:scaling}
\end{figure*}

\begin{wrapfigure}{r}{0.38\textwidth}
  \vspace{-1.2em}
  \centering
  \includegraphics[width=\linewidth]{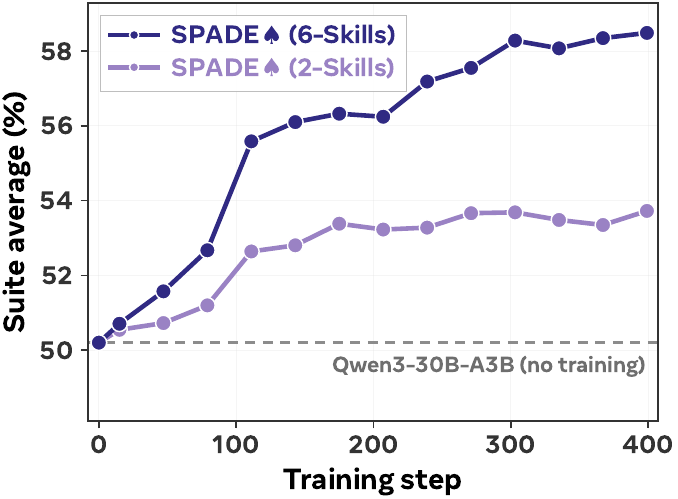}
  \captionof{figure}{\textbf{Curriculum breadth accounts for most of the gain.} Suite average (eight benchmarks of Table~\ref{tab:games}), six-skill vs.\ two-skill curriculum. Per-benchmark panels: Figure~\ref{fig:skill-diversity}.}
  \label{fig:skill-avg}
  \vspace{-1.0em}
\end{wrapfigure}

\textbf{Scaling model size.\,} Under a shared training recipe (Section~\ref{sec:setup-shared}), the average gain over each backbone's base grows from $+5.2$ at 4B and $+5.7$ at 8B to $+8.1$ at 30B-A3B, while Fixed-env GRPO stays near $+1.2$ at every size (Figure~\ref{fig:scaling}; Table~\ref{tab:games}). Fixed-env GRPO supplies a static training signal, whereas the \ED{} keeps adapting the curriculum to the \RA{}; the widening gap is consistent with larger models benefiting more from that adaptivity.

\textbf{Scaling curriculum diversity.\,} The cognitive-skill curriculum is the set of skill categories the \ED{} targets when it generates environments: the six listed in Section~\ref{sec:setup-games}, three active per regeneration in round-robin. We compare the full six-skill curriculum against a two-skill version, with everything else held fixed (Figure~\ref{fig:skill-avg}; per-benchmark panels in Figure~\ref{fig:skill-diversity}, Appendix~\ref{app:results}). The two-skill run also improves, but it captures only about half of the Reasoning-Gym gains and much smaller GPQA-Diamond and LiveCodeBench-v6 gains (best checkpoint on the eight-benchmark suite of Table~\ref{tab:games}: 53.7 vs.\ 58.3). The gains grow with the diversity of the curriculum, not with any single game family.

\section{Discussion}
\label{sec:discussion}

\spade{}'s results support two conclusions. First, training the \ED{} to produce environments adapted to the \RA{}'s current ability outperforms training on fixed environment sources: both the main Fixed-env GRPO pools generated once offline by the corresponding Qwen3 backbones and held fixed throughout training (Table~\ref{tab:games}, Section~\ref{sec:domain-results}), and the offline pool generated by the stronger GPT-5.5 model in the ablation (Table~\ref{tab:ablations}, Section~\ref{sec:ablations}). The widening gap over Fixed-env GRPO with model scale is consistent with an adaptivity advantage. Second, one code-as-environment interface spans two very different settings. In the games setting, reasoning skills the \RA{} develops (planning, constraint satisfaction, strategic thinking) generalize to held-out mathematics, science, and code, well beyond the game format they were learned in. In the tool-use setting, the same recipe lifts multi-step agentic benchmarks most where interaction matters, by $+13.9$ on ACEBench-Agent and $+5.7$ on BFCL~v4 multi-turn at 30B-A3B. The same learnable environment-design role is effective in both settings.

\noindent\textbf{Limitations.\,} (a)~\emph{Complexity bounded by scale and the invisible leash}~\citep{chae2025towards}. The \ED{} cannot produce environments more complex than its base model can express in context, so reachable environment complexity grows with model scale and generation budget. (b)~\emph{A human-designed optimizer.} Both roles are updated by a fixed, human-authored RL algorithm (GRPO); \spade{} does not modify its own learning rule. (c)~\emph{No formal optimality, and fixed-task evaluation.} Hint-based regret is motivated by PAIRED~\citep{dennis2020emergent} but not proven to yield an optimal curriculum, and current benchmarks measure fixed-task performance rather than open-ended reasoning growth.

\noindent\textbf{Future directions.\,} We improve the \ED{} with gradient updates, but co-adaptation between the \ED{} and \RA{} might also come from an \ED{} that improves without weight updates, accumulating and refining design strategies from past attempts in context; whether learned weights or in-context evolution makes the better designer, and at what scale, is an open question. \spade{} also automates one stage of post-training; combining adaptive environment design with systems that automate the remaining stages, from data curation through the learning rule itself, could extend co-adaptive self-play across the whole training pipeline.

\section{Conclusion}
\label{sec:conclusion}

We presented \spade{}, a framework that makes environment design a learnable component of LLM post-training through self-play. Three contributions enable this: (1)~a hint-based regret reward that trains the \ED{} to produce environments at the \RA{}'s learning frontier, grounded in minimax regret theory; (2)~environment design anchored in a pretraining corpus and an environment memory, with a code-as-environment interface that unifies single-turn and multi-turn settings; and (3)~a practical recipe at 30B+ scale on Qwen3 models. In the games setting, \spade{} raises held-out reasoning and code benchmarks by $+8.1$ average points over base at 30B-A3B ($+5.2$ and $+5.7$ at 4B and 8B); these gains remain at late checkpoints of the $400$-step run, and the margin over the strongest fixed-environment baseline grows with scale. The same recipe lifts multi-step tool-use benchmarks by $+13.9$ on ACEBench-Agent and $+5.7$ on BFCL~v4 multi-turn.

\spade{} shows that a single model can design its own training environments and improve from them, a step from fixed benchmarks toward open-ended, continual self-improvement.

\section*{Acknowledgments}
This research was supported by the UW-Amazon Science Gift Hub, UW-Tsukuba Amazon NVIDIA Cross Pacific AI Initiative (XPAI), Sony Research Award, Modal Research Grants, Tinker Research Grants, Character.AI, DoorDash, Open Philanthropy, Coefficient Giving, Toyota Research Institute, and the Schmidt AI2050 Fellows program. This material is based upon work supported by the Defense Advanced Research Projects Agency and the Air Force Research Laboratory, contract number(s): FA8650-23-C-7316. Any opinions, findings and conclusions, or recommendations expressed in this material are those of the author(s) and do not necessarily reflect the views of AFRL or DARPA.

\clearpage   %
\newpage
\bibliographystyle{plainnat}
\bibliography{technical_report}

\clearpage \appendix \part{Appendix} \phantomsection\label{app:toc} \parttoc

\newpage \FloatBarrier

\section[Notation]{Notation\backtotoc}
\label{app:notation}

Table~\ref{tab:notation} consolidates symbols used throughout the paper.

\begin{table}[h]
\centering
\small
\setlength{\tabcolsep}{6pt}
\renewcommand{\arraystretch}{1.15}
\resizebox{.95\textwidth}{!}{
\begin{tabular}{@{}l l@{}}
\toprule
\textbf{Symbol} & \textbf{Description} \\
\midrule
\multicolumn{2}{@{}l}{\emph{MDP and environment}} \\
$\mathcal{S}$ & State space \\
$\mathcal{A}$ & Action space \\
$T(s' \mid s, a)$ & Transition function \\
$R(s, a)$ & Reward function \\
$\rho_0$ & Initial state distribution \\
$s, s', a$ & State, next state, action \\
$\mathcal{E}$ & Space of valid (executable Python) environments \\
$e \in \mathcal{E}$ & A single environment instance \\
$e_b$ & The $b$-th environment in a generation batch \\
\midrule
\multicolumn{2}{@{}l}{\emph{Policy and roles}} \\
$\pi_\theta$ & Shared LLM policy with parameters $\theta$ \\
$\piD$ & Policy in \ED{} role, $\pi_\theta(\cdot \mid \text{role}{=}D)$ \\
$\piA$ & Policy in \RA{} role, $\pi_\theta(\cdot \mid \text{role}{=}A)$ \\
$\text{role}{=}D$ & System-prompt switch selecting \ED{} role \\
$\text{role}{=}A$ & System-prompt switch selecting \RA{} role \\
\midrule
\multicolumn{2}{@{}l}{\emph{Hints and rewards}} \\
$h$ & Privileged hint (strategy / partial solution / key observation) \\
$h_b$ & Hint for the $b$-th environment \\
$y, y_i$ & \RA{} response (rollout) \\
$y'_i$ & \RA{} rollout sampled with the privileged hint in context \\
$r_A(y \mid e)$ & Per-rollout correctness reward (without hint) \\
$r_A(y \mid e, h)$ & Per-rollout correctness reward conditioned on hint \\
$\bar{r}_A(e)$ & Average \RA{} return on $e$ without hints \\
$\bar{r}_A(e \mid h)$ & Average \RA{} return on $e$ with hint $h$ \\
$r_D(e)$ & \ED{} reward: hint-based regret $\bar{r}_A(e \mid h) - \bar{r}_A(e)$ \\
\midrule
\multicolumn{2}{@{}l}{\emph{GRPO and training}} \\
$x$ & Prompt / input sequence (generic GRPO notation) \\
$\mathcal{L}(\theta)$ & GRPO clipped-surrogate training objective \\
$\pi_{\text{old}}, \pi_{\text{ref}}$ & Behavior policy (importance ratio) and KL reference policy \\
$\hat{A}^i$ & Group-normalized advantage (generic); role-specific forms below \\
$\hat{A}_D^b$ & Group-normalized advantage for the $b$-th designed environment \\
$\hat{A}_A^i$ & Group-normalized advantage for the $i$-th agent rollout \\
$\varepsilon_{\text{low}}, \varepsilon_{\text{high}}$ & Asymmetric PPO clipping range (\textsc{DAPO} clip-higher~\citep{yu2025dapo}) \\
$\beta_{\text{KL}}$ & KL-regularization coefficient \\
$B$ & Generation batch size (environments per iteration) \\
$G$ & Group size (\RA{} rollouts per environment) \\
$k$ & Regeneration interval: rollouts per environment set, and the \ED{} update delay \\
$N$ & Total number of training iterations \\
$p_d$ & Domain prompt sampled to seed environment generation \\
$r^i, r^j$ & Scalar reward of the $i$-th / $j$-th response in a GRPO group \\
$M$ & Environment memory: buffer of past environments with regret scores and skill tags \\
$C$ & Pretraining corpus used to ground environment design \\
\bottomrule
\end{tabular}
}
\caption{Symbols used throughout the paper.}
\label{tab:notation}
\end{table}

\newpage

\clearpage

\section[Theoretical Analysis]{Theoretical Analysis\backtotoc}
\label{app:theory}

This section formally analyzes the incentive structure of the hint-based regret reward (hereafter hint-regret) introduced in Section~\ref{sec:hint-regret}. We model the interaction between the \ED{} and \RA{} as a two-player game: the \ED{} selects an environment distribution, while the \RA{} selects a policy. Since LLM inference is stochastic, all payoffs are defined through expected verifier returns.

The main result is an equilibrium characterization of the idealized game. If positive hint-regret remains anywhere, then the current unhinted \RA{} is suboptimal there, and the \ED{} can profitably concentrate on it. Consequently, at any pure Nash equilibrium, the \ED{}'s expected regret is zero, the \RA{} is hint-free optimal on \emph{every} environment in $\mathcal M$, and privileged hints become vacuous.

\subsection{Setup and Assumptions}
\label{app:theory-setup}

\paragraph{Environment class.} Let $\mathcal U$ be a finite universe of candidate environments. The set of \emph{mathematically valid executable} environments is denoted by $\mathcal M\subseteq\mathcal U$. These are environments whose specifications are internally consistent and whose attempted solutions can be evaluated by an external verifier. In the implementation, $\mathcal M$ is approximated by syntax checks, executability checks, and, in the tool-use setting, solvability filtering.

\paragraph{Distribution notation.} For any finite set $\mathcal X$, let $\Delta(\mathcal X)$ denote the set of probability distributions over $\mathcal X$. If $D\in\Delta(\mathcal M)$ (the \ED{}'s environment distribution; we reuse the letter $D$ for this distribution, while the subscript in $u_D$ below labels the \ED{} role) and $\mathcal B\subseteq\mathcal M$, write \[ D(\mathcal B) := \sum_{e\in\mathcal B}D(e). \] A property holds for $D$-almost every environment if the set on which it fails has $D$-mass zero.

\paragraph{Inference protocol.} Fix an inference budget $B_{\text{inf}}$. This budget includes all evaluation-time choices that affect the set of possible \RA{} trajectories: model architecture, context length, prompt format, maximum generation length, sampling rule, temperature, number of rollouts, tool access, memory access, and verifier. Let $\mathcal Y_{B_{\text{inf}}}$ denote the finite trajectory space induced by this budget.

\paragraph{Fine-tuning budget.} Fix a fine-tuning budget $F$. This budget includes all training-time choices that determine which policies the procedure can produce: optimizer, objective, number of updates, and regularization. Let $\Pi$ denote the set of policies attainable under this budget; we take $\Pi$ to be finite, so maxima over $\Pi$ are attained.

\paragraph{Returns with and without hints.} For each environment $e\in\mathcal M$, let $h(e)$ denote the privileged hint emitted by the \ED{}; hints are modeled as a fixed map $e \mapsto h(e)$, so strategic hint choice by the \ED{} is outside this idealization. A \RA{} policy $\pi\in\Pi$ induces a distribution over trajectories in $\mathcal Y_{B_{\text{inf}}}$ given either the unhinted input $e$ or the hinted input $(e,h(e))$. Let \[ r(e,y)\in[0,1] \] denote the verifier return of trajectory $y\in\mathcal Y_{B_{\text{inf}}}$ on environment $e$. Returns are normalized to $[0,1]$ for the analysis; training uses a shaped reward in $[-1,1]$ (Algorithm~\ref{alg:spade}). Here $r(e,y)$ is the per-trajectory return written $r_A(y \mid e)$ in Section~\ref{sec:method}, and $R_\pi(e)$, $R_\pi^h(e)$ below are the population analogues of the empirical means $\bar{r}_A(e)$ and $\bar{r}_A(e \mid h)$.

The unhinted expected return of policy $\pi$ on environment $e$ is \[ R_\pi(e) := \mathbb E_{y\sim \pi(\cdot\mid e)} [ r(e,y) ], \] and the hinted expected return is \[ R_\pi^h(e) := \mathbb E_{y\sim \pi(\cdot\mid e,h(e))} [ r(e,y) ]. \]

\paragraph{Game payoffs.} The \ED{} chooses a distribution $D\in\Delta(\mathcal M)$. Its payoff is the expected hint-based regret \[ u_D(D,\pi) := \mathbb E_{e\sim D} \left[ R_\pi^h(e)-R_\pi(e) \right]. \] The \RA{}'s payoff is its unhinted expected return \[ u_A(D,\pi) := \mathbb E_{e\sim D} [ R_\pi(e) ]. \] Together these define the expected-regret self-play game on $\Delta(\mathcal M)\times\Pi$.

\paragraph{Pure Nash equilibrium.} A pair $(D^\circ,\pi^\circ)\in\Delta(\mathcal M)\times\Pi$ is a pure Nash equilibrium of this game if neither player can improve by unilateral deviation: \[ u_D(D^\circ,\pi^\circ) \ge u_D(D,\pi^\circ) \qquad \forall D\in\Delta(\mathcal M), \] and \[ u_A(D^\circ,\pi^\circ) \ge u_A(D^\circ,\pi) \qquad \forall \pi\in\Pi. \]

\paragraph{Optimal values.} For each environment, define the optimal unhinted value \[ R^\star(e) := \max_{\pi\in\Pi} R_\pi(e). \]

\begin{assumption}[Sound generation]
\label{assump:sound-generation}
The \ED{} only samples mathematically valid executable environments: \[ D\in\Delta(\mathcal M). \] Operationally, this corresponds to syntax checks, executability checks, and, in the tool-use setting, solvability filtering. (Stated for completeness: the game is defined over $\Delta(\mathcal M)$ throughout.)
\end{assumption}

\begin{assumption}[Articulated hints]
\label{assump:admissible-hints}
Conditioning on the hint attains the optimal unhinted value: for every policy $\pi\in\Pi$ and every environment $e\in\mathcal M$, \[ R_\pi^h(e)=R^\star(e). \] In particular $R_\pi^h(e)\ge R_\pi(e)$.
\end{assumption}

\begin{assumption}[Internalizability of hinted behavior]
\label{assump:trajectory-expressivity}
Hinted behavior is attainable without the hint: for every policy $\pi\in\Pi$ there exists a policy $\pi'\in\Pi$ with \[ R_{\pi'}(e)=R_\pi^h(e) \qquad \text{for all } e\in\mathcal M. \]
\end{assumption}

\subsection{Main Results}

Theorem~\ref{thm:nash-expected-regret} shows that every pure Nash equilibrium in this setup drives hint-regret to zero, so that the \RA{} is optimal on every environment in $\mathcal M$. The ideal hint-regret enables this result, because it vanishes on environments the \RA{} already solves and on unsolvable environments, where the hint cannot help, and is positive only where the \RA{} fails unaided but a hint would close the gap. The \ED{} therefore profits exactly by targeting this learnable frontier, and at equilibrium no gap can remain anywhere, since the \ED{} could otherwise deviate to it profitably. This is what distinguishes the regret reward from the plain adversarial-difficulty reward $-\mathbb E_{e\sim D}[R_\pi(e)]$, which provides no mechanism restricting the \ED{}'s support to useful environments.

\label{app:theory-results}

\begin{lemma}[Hint-regret equals hint-free regret]
\label{lem:positive-regret-suboptimality}
Under Assumption~\ref{assump:admissible-hints}, for any \RA{} policy $\pi\in\Pi$ and any environment $e\in\mathcal M$, \[ R_\pi^h(e)-R_\pi(e) = R^\star(e)-R_\pi(e). \] In particular the hint-regret is nonnegative, and strictly positive if and only if $R_\pi(e)<R^\star(e)$.
\end{lemma}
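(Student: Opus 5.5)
The identity is a direct substitution, so I will take it first and then derive the two consequences. Fix $\pi\in\Pi$ and $e\in\mathcal M$. Assumption~\ref{assump:admissible-hints} gives $R_\pi^h(e)=R^\star(e)$, and subtracting $R_\pi(e)$ from both sides yields
\[
R_\pi^h(e)-R_\pi(e)=R^\star(e)-R_\pi(e).
\]
Nothing about the structure of $\pi$ or $e$ is needed beyond the fact that both expectations are well defined. They are, because $r(e,y)\in[0,1]$ and $\mathcal Y_{B_{\text{inf}}}$ is finite, so each expectation is a finite sum.

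For nonnegativity I will use the definition $R^\star(e)=\max_{\pi'\in\Pi}R_{\pi'}(e)$. This maximum is attained because $\Pi$ is finite. Since $\pi\in\Pi$, we have $R_\pi(e)\le R^\star(e)$, so the right-hand side of the identity is $\ge 0$. This also recovers the ``in particular'' clause of Assumption~\ref{assump:admissible-hints}.

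The ``if and only if'' statement then follows at once: a nonnegative real number is strictly positive exactly when it is nonzero. Hence the hint-regret is positive if and only if $R^\star(e)-R_\pi(e)>0$, that is, $R_\pi(e)<R^\star(e)$.

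There is no real obstacle here, since the lemma is essentially a restatement of Assumption~\ref{assump:admissible-hints}. The only point needing care is that $\pi$ ranges over the same finite set $\Pi$ used to define $R^\star$, which is what makes $R_\pi(e)\le R^\star(e)$ immediate. Assumption~\ref{assump:trajectory-expressivity} is not needed for this lemma; I expect it to matter only later, in the equilibrium theorem.
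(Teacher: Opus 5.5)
Your proof is correct and follows the paper's argument exactly: substitute $R_\pi^h(e)=R^\star(e)$ from the articulated-hints assumption to get the identity, then use $R^\star(e)\ge R_\pi(e)$ (because $\pi\in\Pi$) to obtain nonnegativity and the strict-positivity equivalence. Your remarks on well-definedness and on Assumption~\ref{assump:trajectory-expressivity} being unnecessary here are accurate.
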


\begin{proof}
By Assumption~\ref{assump:admissible-hints}, $R_\pi^h(e)=R^\star(e)$; subtracting $R_\pi(e)$ gives the identity. The remaining claims follow since $R^\star(e)\ge R_\pi(e)$.
\end{proof}

\begin{theorem}[Nash equilibria imply hint-free optimality on every environment]
\label{thm:nash-expected-regret}
Under Assumptions~\ref{assump:sound-generation}--\ref{assump:trajectory-expressivity}, every pure Nash equilibrium $(D^\circ,\pi^\circ)$ of the expected-regret self-play game satisfies \[ u_D(D^\circ,\pi^\circ)=0. \] Moreover, the pointwise regret vanishes everywhere: for \emph{every} $e\in\mathcal M$, \[ R_{\pi^\circ}(e)=R^\star(e), \qquad R_{\pi^\circ}^h(e)=R_{\pi^\circ}(e). \]
\end{theorem}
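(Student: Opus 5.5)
The plan is to use the \RA{}'s best-response condition to pin down $\pi^\circ$ on the support of $D^\circ$. Then, if any environment in $\mathcal M$ still carries positive regret, the \ED{}'s best-response condition will yield a contradiction. Throughout, I would rewrite both payoffs with Lemma~\ref{lem:positive-regret-suboptimality}:
\[
u_D(D,\pi)=\mathbb E_{e\sim D}\bigl[R^\star(e)-R_\pi(e)\bigr]\ge 0 .
\]
So the \ED{}'s payoff is exactly the hint-free suboptimality gap averaged under $D$.

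\textbf{Step 1 (the \RA{} is optimal on the support).} Apply Assumption~\ref{assump:trajectory-expressivity} to $\pi^\circ$. This gives $\pi'\in\Pi$ with $R_{\pi'}(e)=R^h_{\pi^\circ}(e)$, and by Assumption~\ref{assump:admissible-hints} this equals $R^\star(e)$ for all $e\in\mathcal M$. The \RA{}'s equilibrium condition then gives
\[
\mathbb E_{D^\circ}[R_{\pi^\circ}]\ge \mathbb E_{D^\circ}[R_{\pi'}]=\mathbb E_{D^\circ}[R^\star].
\]
Combined with the pointwise bound $R_{\pi^\circ}\le R^\star$, this forces $R_{\pi^\circ}(e)=R^\star(e)$ for $D^\circ$-a.e.\ $e$. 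Since $\mathcal U$ is finite, this holds on every $e$ with $D^\circ(e)>0$. Hence $u_D(D^\circ,\pi^\circ)=0$, which is the first claim.

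\textbf{Step 2 (no positive regret off the support).} Suppose some $e_0\in\mathcal M$ had $R_{\pi^\circ}(e_0)<R^\star(e_0)$. The \ED{} could deviate to the point mass $\delta_{e_0}\in\Delta(\mathcal M)$ and obtain payoff $R^\star(e_0)-R_{\pi^\circ}(e_0)>0=u_D(D^\circ,\pi^\circ)$. This violates the \ED{}'s equilibrium condition. Therefore $R_{\pi^\circ}(e)=R^\star(e)$ for every $e\in\mathcal M$. Assumption~\ref{assump:admissible-hints} then gives $R^h_{\pi^\circ}(e)=R^\star(e)=R_{\pi^\circ}(e)$, completing the pointwise statement.

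The main obstacle is Step 1, and specifically making sure the deviation $\pi'$ is a legitimate competitor. The \RA{}'s best-response condition only constrains behavior on the support of $D^\circ$, so a single policy must attain $R^\star$ simultaneously on all environments rather than one environment at a time. This is exactly what the uniform quantifier in Assumption~\ref{assump:trajectory-expressivity}, together with Assumption~\ref{assump:admissible-hints}, supplies. Once that is in place, Step 2 is a one-line deviation argument, and it is what extends optimality from the support to all of $\mathcal M$.
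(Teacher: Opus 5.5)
Your proposal is correct and matches the paper's proof: the \RA{}-side deviation to the policy $\pi'$ from Assumption~\ref{assump:trajectory-expressivity} forces zero regret on the support of $D^\circ$ and hence $u_D(D^\circ,\pi^\circ)=0$, and the \ED{}-side deviation to the point mass $\delta_e$ extends this to every $e\in\mathcal M$. The only difference is cosmetic: you phrase the second step as a contradiction, whereas the paper writes it as a direct inequality.
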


Such equilibria exist: Assumptions~\ref{assump:admissible-hints} and~\ref{assump:trajectory-expressivity} guarantee a uniformly optimal policy $\pi^\star$ with $R_{\pi^\star}(e)=R^\star(e)$ for all $e$, and any pair $(D,\pi^\star)$ is a pure Nash equilibrium, so the statement is not vacuous.

\begin{proof}
Let $(D^\circ,\pi^\circ)$ be a pure Nash equilibrium. By Lemma~\ref{lem:positive-regret-suboptimality}, \[ \rho^{\text{reg}}(e):=R_{\pi^\circ}^h(e)-R_{\pi^\circ}(e)=R^\star(e)-R_{\pi^\circ}(e)\ge 0 \qquad \text{for every }e\in\mathcal M. \]

\emph{\RA{} side.} By Assumption~\ref{assump:trajectory-expressivity}, there is $\pi'\in\Pi$ with $R_{\pi'}(e)=R_{\pi^\circ}^h(e)$ for all $e$; by Assumption~\ref{assump:admissible-hints}, $R_{\pi^\circ}^h(e)=R^\star(e)$, so $R_{\pi'}(e)=R^\star(e)\ge R_{\pi^\circ}(e)$ for every $e\in\mathcal M$. Hence \[ u_A(D^\circ,\pi') = \mathbb E_{e\sim D^\circ}[R^\star(e)] \ge \mathbb E_{e\sim D^\circ}[R_{\pi^\circ}(e)] = u_A(D^\circ,\pi^\circ). \] Since $\pi^\circ$ is a best response to $D^\circ$, equality holds, so $\mathbb E_{e\sim D^\circ}[R^\star(e)-R_{\pi^\circ}(e)]=0$. The integrand is nonnegative, hence $\rho^{\text{reg}}(e)=0$ for $D^\circ$-almost every $e$, and therefore \[ u_D(D^\circ,\pi^\circ)=\mathbb E_{e\sim D^\circ}[\rho^{\text{reg}}(e)]=0. \]

\emph{\ED{} side.} Fix any $e\in\mathcal M$. The \ED{} may deviate to the point mass $\delta_e\in\Delta(\mathcal M)$, which yields $u_D(\delta_e,\pi^\circ)=\rho^{\text{reg}}(e)$. Since $(D^\circ,\pi^\circ)$ is a Nash equilibrium, \[ \rho^{\text{reg}}(e) = u_D(\delta_e,\pi^\circ) \le u_D(D^\circ,\pi^\circ) = 0. \] Combined with $\rho^{\text{reg}}(e)\ge 0$ from Lemma~\ref{lem:positive-regret-suboptimality}, this gives $\rho^{\text{reg}}(e)=0$ for \emph{every} $e\in\mathcal M$. Therefore \[ R_{\pi^\circ}(e)=R^\star(e), \qquad R_{\pi^\circ}^h(e)=R_{\pi^\circ}(e) \] for every $e\in\mathcal M$.
\end{proof}

\clearpage
\section[Method and Experiment Details]{Method and Experiment Details\backtotoc}
\label{app:method-details}

\subsection{System Prompts and Templates}
\label{app:prompts}

\paragraph{Environment-generation prompt.} The \ED{} receives the prompt below to generate a new single-turn game as executable Python; the skill name, its description, and a short list of example concepts fill the slots shown in angle brackets. In the canonical corpus-grounded runs the \ED{} instead conditions on a freshly sampled corpus document, and the example-concept slot is unused. The single-turn prompt serves the single-turn setting of Section~\ref{sec:mdp-prelim} (one derivation task, graded per attempt), while the multi-turn prompts demand stateful interaction. The \texttt{\textbackslash boxed\{\}} extraction in the displayed contracts stops at the first closing brace, so generated environments use plain, brace-free answer strings.

\begin{promptcard}{Single-turn environment-generation prompt (\ED{})}
\begin{lstlisting}[style=promptlisting]
<|im_start|>system
You are an expert Python programmer and game designer. You create educational language games for training Large Language Models.<|im_end|>
<|im_start|>user
Create a challenging single-player text-based game as a Python class that tests: <SKILL_NAME> (<SKILL_DESCRIPTION>).

GAME CONCEPT IDEAS (pick one or invent your own):
  <EXAMPLE_GAMES>

RULES:
- Be creative with the game concept - it can be a puzzle, riddle, logic problem, code tracing, word game, optimization task, or any reasoning challenge.
- One task per episode. Player answers with \boxed{answer}.
- Give your class a descriptive, unique name.
- The task must be DETERMINISTICALLY SOLVABLE from the observation alone - all information needed to derive the answer must be explicitly stated. No hidden state the player cannot see.
- The observation should be clearly written so a careful reader can follow the logic to the answer.

INFORMATION HIDING (critical for RL training):
- The observation must NOT contain the answer. The player must DERIVE it.
- For sequence/pattern games: show only partial data, never the target value.
- Never state the pattern rule directly - the player must discover it.
- Wrong-answer feedback must give hints (e.g., higher/lower, which part is wrong), never reveal the full solution.
- The player must use \boxed{answer} format - remind them in the observation.

DIFFICULTY:
- The task MUST require at least 3 distinct reasoning steps - not solvable in one or two arithmetic operations.
- Random guessing should succeed < 1% of the time - use large answer spaces.
- Use randomized parameters large enough that the answer is not obvious.
- Do NOT generate simple formula-substitution or single-operation problems.
- Compute the solution from the generated puzzle; never hardcode it.
  WRONG: return 'recursive'  # same answer regardless of puzzle
  WRONG: return str(random.randint(1,10))  # not derived from the actual task
  RIGHT: generate puzzle first, then compute self._solution from it

INTERFACE CONTRACT:
```python
import random
import re
from typing import Any, Optional, Tuple, Dict, List
from math_verify import parse, verify

def verify_answer(player_answer, solution):
    """Check if answer is correct: string match first, then math equivalence."""
    if str(player_answer).strip() == str(solution).strip():
        return True
    try:
        return verify(parse(player_answer), parse(solution))
    except:
        return False

class DescriptiveNameEnv:
    def __init__(self, max_turns=10, **kwargs):
        ...
        self.reset()

    def reset(self, seed=None) -> Tuple[str, dict]:
        # Generate a new task. Returns (observation_with_instructions, {})
        ...

    def solution(self) -> str:
        # REQUIRED: Return the exact answer string that solves the current task.
        ...

    def step(self, action: str) -> Tuple[str, float, bool, bool, dict]:
        self.turn_count += 1
        truncated = self.turn_count >= self.max_turns
        match = re.search(r'\\boxed\{([^}]+)\}', action)
        if match and verify_answer(match.group(1), self.solution()):
            return ("Correct! ...", 1.0, True, False, {})
        elif truncated:
            return (f"Time's up. The answer was {self.solution()}. [task]", 0.0, False, True, {})
        else:
            return ("Wrong. [specific hint about why]. Try again. [task description]", 0.0, False, False, {})
        # EVERY code path must return (str, float, bool, bool, dict) - no exceptions.

    def close(self): pass
```

EPISODE STRUCTURE (critical for RL training):
1. reset(): Generate ONE task/puzzle for this episode
2. step(): Player tries to solve that SAME task multiple times
3. Correct answer -> terminated=True (episode ends with success)
4. Max turns reached -> truncated=True, reveal the solution so the model learns from failure
5. NEVER generate a new task inside step() - the task is fixed for the whole episode

OBSERVATION REQUIREMENTS:
- Turn 1 (reset): Show instructions + task + remind player to use \boxed{answer} format
- Turn 2+ (step): Show specific feedback on the attempt + the SAME task (no instructions)
- The task/puzzle MUST be visible every turn or the model won't know what to solve
- Feedback must be specific to the attempted answer, not generic ('Wrong. Try again.' is not enough)
- No \boxed{} in action: remind the player of the format + show task (do not terminate)

ROBUSTNESS:
- Never divide by a value that could be zero - regenerate if needed
- Initialize ALL instance variables in __init__ before calling reset()
- Return empty dict {} for info (never strings)
- Every code path in step() must return a 5-tuple (str, float, bool, bool, dict).
- Naming: never store data in self.solution - that shadows the required solution() method. Use self._solution or self._answer instead.
- Brace escaping: in f-strings, {word} evaluates word as a Python expression. To write a literal \boxed{answer} escape it as \boxed{{answer}}. Same rule with .format(): any literal brace must be doubled.

Generate the complete Python code in a ```python block.<|im_end|>
<|im_start|>assistant
\end{lstlisting}
\end{promptcard}

\paragraph{Multi-turn environment-generation prompt.} Because gameplay spans multiple turns, the \ED{} also receives a dedicated prompt for generating interactive, multi-turn games (state that evolves across turns, branching actions), shown below with the same placeholder convention.

\begin{promptcard}{Multi-turn environment-generation prompt (\ED{})}
\begin{lstlisting}[style=promptlisting]
<|im_start|>system
You are an expert Python programmer and game designer specializing in interactive, multi-turn text-based games. You create environments where the player must make sequential decisions across multiple turns, with each action changing the game state.<|im_end|>
<|im_start|>user
Create an interactive multi-turn text-based game as a Python class that tests: <SKILL_NAME> (<SKILL_DESCRIPTION>).

GAME CONCEPT IDEAS for <SKILL_NAME> (pick one or invent similar):
  - <EXAMPLE_GAMES>

WHAT MAKES A GOOD MULTI-TURN GAME:
Think of classic text adventures, board games, or strategy games. The player
navigates a world that changes with every action. Good examples:
  - Grid exploration: move through rooms, find keys to unlock doors, reach the exit
  - Trading: buy low / sell high across rounds with fluctuating prices
  - Survival: manage health, food, tools while exploring - wrong choices kill you
  - Investigation: question suspects, search locations, piece together clues
  - Tower defense: place defenses, then waves arrive - adapt strategy each wave
  - Crafting: gather materials, combine them in the right order to build something

WHAT TO AVOID (common failure modes):
  BAD: A math puzzle where the player guesses the answer and retries on failure.
       That is a single-turn puzzle with retry, NOT a multi-turn game.
  BAD: Increment/decrement a variable until it matches a target.
       That is a linear search, not a game - there are no decisions.
  BAD: The player has only one reasonable action each turn (e.g., always 'allocate').
       If the optimal path is obvious, there is no strategic depth.
  BAD: The game can be solved in 1-3 turns. Too short for multi-turn training.
  BAD: Embedding single-turn math/logic puzzles inside a multi-turn frame.
       E.g., 'go to forest, solve 2+3, collect wood' - the actual decisions are trivial.
  BAD: Using input() to get player answers. NEVER use input(). The action parameter
       to step() IS the player's full response. Parse everything from it.

DESIGN REQUIREMENTS:
- The game world has STATE that changes on every action (positions, inventories,
  health, money, unlocked areas, NPC attitudes, etc.).
- Each turn, the player chooses from 2+ meaningfully different actions.
- Some actions are BETTER than others - wrong choices waste turns or cause harm.
- The game will be played for at most 20 turns. Design difficulty,
  resource budgets, and pacing accordingly. Optimal play should win in
  roughly 10-15 turns.
- Random play should lose most of the time.
- The game must have a clear WIN condition and ideally a LOSE condition too.

ACTION FORMAT:
- The player wraps every action in \boxed{action}. The step() method extracts
  the content inside \boxed{} and interprets it.
- Show available actions clearly in every observation, e.g.:
    Actions: \boxed{go north}, \boxed{go south}, \boxed{pick up key}, \boxed{rest}
- If the player's input has no \boxed{}, return a format reminder. Do NOT terminate.

OBSERVATION FORMAT:
- Each observation must show: (1) current state, (2) result of last action,
  (3) available actions, (4) progress (e.g., 'Turn 3/12 | HP: 7/10 | Items: [key]').
- The initial observation (from reset) includes rules, goal, and starting state.

REWARD:
- Win: reward = 1.0, terminated = True
- Lose: reward = 0.0, terminated = True
- Intermediate turns: reward = 0.0
- Max turns without winning: reward = 0.0, truncated = True

INTERFACE:
```python
import random
import re
from typing import Tuple
from math_verify import parse, verify

def verify_answer(player_answer, solution):
    """Check if answer is correct: string match first, then math equivalence."""
    if str(player_answer).strip() == str(solution).strip():
        return True
    try:
        return verify(parse(player_answer), parse(solution))
    except:
        return False

class DescriptiveNameEnv:
    def __init__(self, max_turns=20, **kwargs):
        self.max_turns = max_turns
        self.turn_count = 0
        # ALL state variables initialized here
        self.reset()

    def reset(self, seed=None) -> Tuple[str, dict]:
        if seed is not None:
            random.seed(seed)
        self.turn_count = 0
        # Randomize the game world
        return observation, {}

    def step(self, action: str) -> Tuple[str, float, bool, bool, dict]:
        self.turn_count += 1
        match = re.search(r'\\boxed\{([^}]*)\}', action)
        if not match:
            return ('Use \\boxed{action} format.', 0.0, False, False, {})
        cmd = match.group(1).strip().lower()
        # Update state, check win/lose
        # Use verify_answer(cmd, solution) for numeric comparisons
        truncated = self.turn_count >= self.max_turns
        return (observation, reward, terminated, truncated, {})

    def close(self): pass
```

REFERENCE EXAMPLE - Trading Game (shows the pattern; yours must be DIFFERENT):
```python
class TradingGameEnv:
    """Buy low, sell high across rounds with fluctuating prices. Reach target gold."""
    def __init__(self, max_turns=12, **kwargs):
        self.max_turns = max_turns
        self.turn_count = 0
        self.gold = 0
        self.stock = 0
        self.price = 0
        self.price_history = []
        self.target_gold = 0
        self.trend = 0  # hidden: +1 rising, -1 falling
        self.reset()

    def reset(self, seed=None):
        if seed is not None:
            random.seed(seed)
        self.turn_count = 0
        self.gold = 100
        self.stock = 0
        self.price = random.randint(8, 15)
        self.price_history = [self.price]
        self.target_gold = 200
        self.trend = random.choice([-1, 1])
        return (f'Welcome to the Trading Game! Reach {self.target_gold} gold to win.\n'
                f'Price: {self.price} | Gold: {self.gold} | Stock: {self.stock}\n'
                f'History: {self.price_history}\n'
                f'Turn 0/{self.max_turns}\n'
                f'Actions: \\boxed{{buy N}}, \\boxed{{sell N}}, \\boxed{{wait}}'), {}

    def step(self, action):
        self.turn_count += 1
        match = re.search(r'\\boxed\{([^}]*)\}', action)
        if not match:
            return ('Use \\boxed{action} format.', 0.0, False, False, {})
        cmd = match.group(1).strip().lower()
        truncated = self.turn_count >= self.max_turns
        msg = ''
        # Parse action
        if cmd.startswith('buy '):
            try:
                n = int(cmd[4:])
                cost = n * self.price
                if cost <= self.gold and n > 0:
                    self.gold -= cost
                    self.stock += n
                    msg = f'Bought {n} at {self.price}.'
                else:
                    msg = f'Cannot buy {n} (need {cost} gold, have {self.gold}).'
            except ValueError:
                msg = 'Invalid number.'
        elif cmd.startswith('sell '):
            try:
                n = int(cmd[5:])
                if n <= self.stock and n > 0:
                    self.gold += n * self.price
                    self.stock -= n
                    msg = f'Sold {n} at {self.price}.'
                else:
                    msg = f'Cannot sell {n} (have {self.stock}).'
            except ValueError:
                msg = 'Invalid number.'
        elif cmd == 'wait':
            msg = 'You wait.'
        else:
            msg = 'Unknown action.'
        # Update price with trend + noise
        if random.random() < 0.3:  # trend reversal
            self.trend *= -1
        self.price = max(1, self.price + self.trend * random.randint(1, 4))
        self.price_history.append(self.price)
        # Check win
        total = self.gold + self.stock * self.price
        if self.gold >= self.target_gold:
            return (f'{msg} Gold: {self.gold}. You win!', 1.0, True, False, {})
        if truncated:
            return (f'{msg} Time up. Gold: {self.gold}, Stock: {self.stock}. Needed {self.target_gold}.', 0.0, False, True, {})
        obs = (f'{msg}\nPrice: {self.price} | Gold: {self.gold} | Stock: {self.stock}\n'
               f'History: {self.price_history[-5:]}\n'
               f'Turn {self.turn_count}/{self.max_turns}\n'
               f'Actions: \\boxed{{buy N}}, \\boxed{{sell N}}, \\boxed{{wait}}')
        return (obs, 0.0, False, False, {})

    def close(self): pass
```

ROBUSTNESS:
- NEVER use input(). The step(action) parameter IS the player's response.
- Never divide by a value that could be zero.
- Initialize ALL instance variables in __init__ before calling reset().
- Return empty dict {} for info (never strings).
- Every code path in step() must return a 5-tuple (str, float, bool, bool, dict).
- Handle unexpected player input gracefully (show valid actions, don't crash).
- Use only the Python standard library (random, re, collections, etc.).
- Brace escaping: in f-strings, literal braces must be doubled {{ }}.
- NEVER put backslashes inside f-string expressions. Use a variable instead:
    BAD:  f'{"\n".join(items)}'
    GOOD: sep = '\n'; f'{sep.join(items)}'  OR  '\n'.join(items)

Generate the complete Python code in a ```python block.
Remember: the game must have BRANCHING DECISIONS where different choices
lead to different outcomes. A game with only one reasonable action per turn is not acceptable.<|im_end|>
<|im_start|>assistant
\end{lstlisting}
\end{promptcard}

\paragraph{Corpus-grounded interactive generation prompt.} The canonical games runs use the corpus-grounded variant below: the sampled document replaces the example-concept list, and the specification demands hidden state, multi-turn structure, and partial reward. The skill, difficulty, and document slots are shown in angle brackets.

\begin{promptcard}{Corpus-grounded multi-turn environment-generation prompt (\ED{})}
\begin{lstlisting}[style=promptlisting]
You are given a reference document. Create an INTERACTIVE, MULTI-TURN Python game
environment grounded in a concept/technique from this document.

<REFERENCE_DOCUMENT>
<DOCUMENT_TEXT>
</REFERENCE_DOCUMENT>
This MUST be a genuine multi-turn INTERACTIVE environment, NOT a one-shot question-and-answer quiz.

HOW THE AGENT ACTS (this matches the training pipeline - follow it exactly):
- Every turn the agent submits ONE action wrapped as \boxed{<action>}, e.g. \boxed{measure node 3},
  \boxed{move north}, \boxed{open valve A}. step() receives that string; extract the action with
  re.search(r'\\boxed\{(.+?)\}', action) and treat the extracted text as an interactive COMMAND.
- The extracted text is a COMMAND / move / query that CHANGES the environment - it is NOT a final
  answer to grade once. step() must UPDATE the environment's state from it and return a NEW
  observation reflecting the changed state.

WHAT MAKES IT MULTI-TURN (required):
- The goal CANNOT be reached in one action. It needs a SEQUENCE of actions - explore to uncover
  hidden information, then act on it; or manipulate state step-by-step toward a target.
- HIDDEN STATE: the agent does NOT see everything at reset(); it must act to reveal/probe state
  across turns (the observation grows/changes as it acts).
- PARTIAL reward for progress toward the goal; terminated=True only when the goal is reached.
- Each turn's observation must (a) reflect the updated state and (b) remind the agent to reply
  with its next action as \boxed{<action>}.

DO NOT (these collapse it back to single-shot QA):
- Do NOT extract \boxed{answer}, grade it once, and terminate. \boxed carries a per-turn COMMAND,
  not a final answer.
- Do NOT state the full problem at reset() and just check one submitted value.
- Do NOT repeat the SAME static task every turn with only 'wrong, try again' feedback.

GROUNDING: the interaction must require understanding a concept/technique from the document
(e.g. if it describes an algorithm, the agent EXECUTES it step-by-step interactively; if it
describes a system, the agent OPERATES it over turns). Never reference the document in the game
text (no 'according to the passage'); write it as a standalone environment.

INTERFACE CONTRACT (interactive; \boxed wraps each turn's COMMAND):
```python
import random, re
from typing import Tuple, Dict

class DescriptiveInteractiveEnv:
    def __init__(self, max_turns=12, **kwargs):
        self.reset()
    def reset(self, seed=None) -> Tuple[str, dict]:
        self.turn_count = 0
        # set up HIDDEN state + a goal needing several actions; randomize by seed.
        # return (observation: situation + AVAILABLE ACTIONS + 'reply with \boxed{<action>}',
        #         but NOT the full solution), {}
        ...
    def step(self, action: str) -> Tuple[str, float, bool, bool, dict]:
        self.turn_count += 1
        truncated = self.turn_count >= self.max_turns
        m = re.search(r'\\boxed\{(.+?)\}', action)
        cmd = (m.group(1) if m else action).strip()      # the per-turn COMMAND
        # 1) PARSE cmd into an operation; 2) UPDATE self state; 3) build a NEW observation
        #    reflecting the updated state; 4) reward = partial progress in [0,1],
        #    terminated=True only when the goal is reached.
        # every code path returns (str, float, bool, bool, dict)
        ...
    def solution(self) -> str:
        # a reference action-sequence (or goal description) that solves it
        ...
    def close(self): pass
```

REWARD SCALE: keep total reward in [0, 1]; partial progress < 1.0, full success = 1.0.

ROBUSTNESS (the most common failures - follow ALL of these or the env won't load):
- BRACE ESCAPING: in f-strings and .format(), {word} evaluates word; to write a LITERAL brace you
  must DOUBLE it. To print \boxed{<action>} inside an f-string, write \boxed{{<action>}}.
- Initialize ALL instance variables in __init__ BEFORE calling reset() (no AttributeError mid-episode).
- Imports: use ONLY the Python standard library plus `math` and `random`. Import everything you use;
  never call an unimported name (no bare `integrate`, no `np.`, no `x.cos()` - use `math.cos(x)`).
- Every step() code path returns the 5-tuple (str, float, bool, bool, dict); info is always a dict {}.
- Never crash on an unrecognized/garbage command - return a helpful observation + reward 0.0 instead.
- Never divide by a value that could be zero; never store data in self.solution (use self._solution).

SELF-VERIFICATION before finalizing:
- Trace TWO different action sequences from reset(seed=0): does the observation CHANGE based on the
  actions? (If it never changes, it is NOT interactive - fix it.)
- Does \boxed carry a per-turn COMMAND that evolves state, rather than a final answer that ends it?
- Is partial reward given for progress, kept in [0,1]?

Target skill: <SKILL_NAME> (<SKILL_DESCRIPTION>). DIFFICULTY: <DIFFICULTY> - the goal
should need <N>+ interaction steps.

Generate the complete Python code in a ```python block.
\end{lstlisting}
\end{promptcard}

\paragraph{Tool-use generation prompt.} The tool-use runs generate environments with the multi-turn variant below (selected by \texttt{SPARE\_MULTITURN\_ENV\_GEN=1} in the released launchers). The skill, difficulty, and example slots fill as in the games prompts; each generation call additionally samples one of ten application domains and one of four task variants (weighted $3{:}2{:}1{:}1$), whose filler blocks follow the main template.

\begin{promptcard}{Multi-turn tool-use environment-generation prompt (\ED{})}
\begin{lstlisting}[style=promptlisting]
Create a MULTI-TURN tool-use environment by subclassing ToolUseBaseEnv.
The environment tests: __SKILL_NAME__ (__SKILL_DESCRIPTION__).

THIS TASK MUST MIRROR THE STRUCTURE OF A REAL MULTI-TURN INTERACTION.
A real user does NOT describe the entire workflow up front. Instead, they
issue ONE atomic instruction, wait for it to be done, then issue the next.
Your env must reproduce this turn-by-turn structure.

--------------------------------------------------------------------------
WHAT YOU IMPLEMENT (the base class handles parsing/dispatch):

  - reset(seed) -- generate state, define self._tools, define
        self._user_messages (list of 3-5 atomic instruction strings),
        self._message_criteria (list of callables: state -> bool),
        self._current_msg = 0, self._expected_answer = "done".
        Return (self._user_messages[0], {})

  - tool_xxx(**kwargs) -> str -- one method per tool. After mutating
        state, EACH tool method must call self._advance_if_done() and
        APPEND its output to the tool result, like:
          base_result = "Moved file.pdf to /temp"
          progress = self._advance_if_done()  # appends next instruction or completion marker
          return base_result + progress

  - solution() -> str -- describe the full multi-turn solution

  - _advance_if_done(self) -> str -- helper you write yourself:
      ```
      def _advance_if_done(self):
          if self._current_msg >= len(self._user_messages):
              return ""
          criterion = self._message_criteria[self._current_msg]
          if criterion(self._state):
              self._current_msg += 1
              if self._current_msg >= len(self._user_messages):
                  return "\n\n[ALL STEPS COMPLETE] Submit <answer>done</answer>."
              next_msg = self._user_messages[self._current_msg]
              return f"\n\n[STEP {self._current_msg} COMPLETE - NEW INSTRUCTION] {next_msg}"
          return ""
      ```

  - Override _check_answer(answer) so it returns True only if
        answer == "done" AND self._current_msg == len(self._user_messages).

--------------------------------------------------------------------------
USER MESSAGE STRUCTURE (THE CORE OF MULTI-TURN):

self._user_messages must be 3-5 ATOMIC instructions, e.g.:
  [
    "Navigate to the document folder.",
    "Move final_report.pdf to a new 'temp' subdirectory.",
    "Now create an 'archive' folder and move all .txt files there.",
    "Finally, list the contents of the archive folder.",
  ]

self._message_criteria are callables that read self._state and return True
when the criterion is met:
  [
    lambda s: s["cwd"] == "/document",
    lambda s: "final_report.pdf" in s["tree"].get("/document/temp", []),
    lambda s: all(f in s["tree"].get("/document/archive", []) for f in s["txt_files"]),
    lambda s: s["last_ls_path"] == "/document/archive",
  ]

--------------------------------------------------------------------------
DOMAIN RULES:

- Each user message is a SHORT, SPECIFIC, ATOMIC instruction (1-2 sentences).
- Do NOT describe the whole workflow in advance. The user reveals
  instructions ONE AT A TIME as previous ones are completed.
- Do NOT mention tool names, file structures, or implementation details
  in user messages. Use natural-language goals only.
- Each instruction's success criterion must be checkable from self._state.
- The actor must complete the CURRENT message before progressing. Trying
  to skip ahead won't reveal future messages.
- Allow 4-8 tool calls per message.

--------------------------------------------------------------------------
CRITICAL - SOLVABILITY & CRITERION CORRECTNESS (the single biggest source of BROKEN games):

A game is BROKEN if a success criterion cannot be satisfied by following its
instruction, or is already satisfied before the agent acts. Broken games waste
training and produce deadlocks. Obey ALL of these:

1. FALSE AT RESET / NOT FREE. Every criterion MUST require the agent to actually
   perform its instruction's action. For a WRITE instruction, the criterion must
   check the CHANGE it makes and be False on the state reset() returns - a
   criterion already True at reset (e.g. `s['inventory']['monitor'] >= 5` when
   monitor starts at 8, or `any('phone' in i for i in s['electronics'])` when a
   phone is already stocked) lets the agent skip the step for free. For a READ /
   VERIFY instruction ("check the line is active", "confirm the balance"), do NOT
   use a criterion that is already True at reset (e.g. `s['line_status']=='active'`
   when it starts 'active') - that advances on ANY tool call without the agent
   doing the read. Instead have the read tool record that it ran (e.g. set
   `s['last_checked']='line_status'`) and make the criterion check THAT flag, so
   the step requires actually calling the read tool.

2. TYPE-CORRECT. The criterion must read self._state with the SAME shape the
   tools write it. If order_history is a list of DICTS, check
   `any(o['item']=='laptop' for o in s['order_history'])`, NOT
   `'laptop' in s['order_history']` - string-in-list-of-dicts is ALWAYS False, so
   the step can NEVER complete and the game deadlocks. Type mismatches are the
   most common unsolvable-game bug.

3. DERIVABLE BY THE AGENT. Every value a criterion requires must be obtainable by
   the agent from EITHER (a) the words of its instruction, OR (b) a tool result it
   can read. NEVER hide an exact date / id / amount / threshold the instruction
   does not state and no tool reveals. BAD: instruction "schedule a visit next
   week" but criterion `s['last_updated'] >= '2024-01-22'` - the agent cannot know
   the threshold, must guess, and usually fails. FIX: state it in the instruction
   ("schedule for 2024-01-22 or later"), OR accept ANY valid action
   (`s['last_updated'] != '2024-01-15'`, i.e. any new date), OR have a tool reveal
   the value. Same for ids: if a criterion needs order '1024', the instruction
   must name it or a tool must return it.

4. EVERY REQUIRED TOOL CAN SUCCEED. For each instruction, the tool that satisfies
   its criterion must have a reachable success path. Trace the field the tool
   looks up - it must be a key some tool (or reset) actually SETS. BAD:
   tool_process_refund finds a refund by `r['refund_id']` but tool_create_refund
   never stores a 'refund_id' key -> process_refund can NEVER succeed. If a tool
   reads an id/field, an earlier tool or reset MUST write that exact key.

5. INSTRUCTION MATCHES CRITERION. Each instruction must describe exactly what its
   criterion checks - no more, no less. Do NOT write "after the refund is
   processed, list inventory" if the criterion only checks that inventory was
   listed (and process_refund is broken/unneeded): the agent chases the
   irrelevant step and stalls.

6. SELF-TRACE BEFORE YOU FINISH (REQUIRED). Mentally run reset(seed=0), then
   execute your own solution() sequence step by step. After EACH solution step the
   matching criterion must flip False->True, IN ORDER, and no later criterion may be
   True yet; at the end self._current_msg must equal len(self._user_messages) and
   _check_answer("done") must be True. If any criterion is True at reset, never
   becomes True, or throws (KeyError/IndexError), the game is BROKEN - fix it
   before returning.

--------------------------------------------------------------------------
DIFFICULTY: __DIFFICULTY__

__DOMAIN_BLOCK__
__TASK_TYPE_BLOCK__
SKILL FOCUS: __SKILL_CATEGORY__
EXAMPLE TASKS (pick one or invent your own):
  __EXAMPLES__

__CORPUS_SECTION__

--------------------------------------------------------------------------
RULES:
- Class MUST inherit from ToolUseBaseEnv.
- Do NOT import ToolUseBaseEnv. Just write: class MyEnv(ToolUseBaseEnv):
- Only standard library imports (random, json, re, math). No custom packages.
- Tools simulated as methods. No real APIs, network, or subprocess.
- Tools must be DETERMINISTIC. All randomness in reset() only.
- 3-8 tools as tool_xxx() methods returning strings.
- Give at least one tool a typed/constrained parameter using 'enum' and 'required' in its
  schema (e.g. status: {'type':'string','enum':['pending','shipped','cancelled']}).
- Include 1-2 DISTRACTOR tools: plausible but wrong for the task, so the agent must select
  the correct one from alternatives.
- Each tool ENDS with `return base_result + self._advance_if_done()`.
- Mutation tools must error gracefully on bad preconditions
  (e.g., mv without target dir -> "Error: target dir not found").
- CRITICAL: tool method parameter names MUST EXACTLY MATCH the names declared
  in self._tools[name]['parameters']['properties']. If schema says
  {'account': {...}}, the method MUST be `def tool_check(self, account=...)`.
  NOT `account_type`, NOT `acct`, NOT a renamed alias. Mismatched names
  cause every tool call to fail with "got an unexpected keyword argument".

--------------------------------------------------------------------------
EXAMPLE STRUCTURE (multi-turn filesystem task):

```python
import random
import json
from typing import Tuple

class FileWorkflowMTEnv(ToolUseBaseEnv):
    def reset(self, seed=None) -> Tuple[str, dict]:
        self.turn_count = 0
        self._call_history = []
        if seed is not None:
            random.seed(seed)
        self._state = {
            'cwd': '/home',
            'tree': {
                '/home': ['document', 'temp.txt'],
                '/home/document': ['report.txt', 'notes.txt', 'log.txt'],
            },
            'last_ls_path': None,
        }
        self._tools = {
            'pwd': {'description': 'Show current dir', 'parameters': {'type':'object','properties':{}}},
            'ls': {'description': 'List a directory', 'parameters': {'type':'object','properties':{'path':{'type':'string'}}, 'required':['path']}},
            'cd': {'description': 'Change dir', 'parameters': {'type':'object','properties':{'path':{'type':'string'}}, 'required':['path']}},
            'mkdir': {'description': 'Create a subdirectory in current dir', 'parameters': {'type':'object','properties':{'name':{'type':'string'}}, 'required':['name']}},
            'mv': {'description': 'Move file from current dir to a destination dir', 'parameters': {'type':'object','properties':{'src':{'type':'string'},'dst':{'type':'string'}}, 'required':['src','dst']}},
        }
        self._user_messages = [
            "Navigate to the document folder.",
            "Create a 'temp' subdirectory and move report.txt into it.",
            "Now list what's in the temp folder.",
        ]
        self._message_criteria = [
            lambda s: s['cwd'] == '/home/document',
            lambda s: 'report.txt' in s['tree'].get('/home/document/temp', []),
            lambda s: s['last_ls_path'] == '/home/document/temp',
        ]
        self._current_msg = 0
        self._expected_answer = "done"
        return (self._user_messages[0], {})

    def _advance_if_done(self) -> str:
        if self._current_msg >= len(self._user_messages):
            return ""
        if self._message_criteria[self._current_msg](self._state):
            self._current_msg += 1
            if self._current_msg >= len(self._user_messages):
                return "\n\n[ALL STEPS COMPLETE] Submit <answer>done</answer>."
            next_msg = self._user_messages[self._current_msg]
            return f"\n\n[STEP {self._current_msg} COMPLETE - NEW INSTRUCTION] {next_msg}"
        return ""

    def _check_answer(self, answer: str) -> bool:
        return answer.strip().lower() == 'done' and self._current_msg >= len(self._user_messages)

    def tool_pwd(self) -> str:
        return self._state['cwd'] + self._advance_if_done()

    def tool_ls(self, path='') -> str:
        if path not in self._state['tree']:
            return f"Error: {path} not found" + self._advance_if_done()
        self._state['last_ls_path'] = path
        out = json.dumps(self._state['tree'][path])
        return out + self._advance_if_done()

    def tool_cd(self, path='') -> str:
        if path not in self._state['tree']:
            return f"Error: {path} not found" + self._advance_if_done()
        self._state['cwd'] = path
        return f"Changed to {path}" + self._advance_if_done()

    def tool_mkdir(self, name='') -> str:
        new_path = f"{self._state['cwd']}/{name}"
        if new_path in self._state['tree']:
            return f"Error: {new_path} exists" + self._advance_if_done()
        self._state['tree'][new_path] = []
        self._state['tree'][self._state['cwd']].append(name)
        return f"Created {new_path}" + self._advance_if_done()

    def tool_mv(self, src='', dst='') -> str:
        cwd_files = self._state['tree'].get(self._state['cwd'], [])
        if src not in cwd_files:
            return f"Error: {src} not in current dir" + self._advance_if_done()
        dst_path = f"{self._state['cwd']}/{dst}"
        if dst_path not in self._state['tree']:
            return f"Error: target {dst} not found" + self._advance_if_done()
        cwd_files.remove(src)
        self._state['tree'][dst_path].append(src)
        return f"Moved {src} to {dst}" + self._advance_if_done()

    def solution(self) -> str:
        return ("1. cd(path='/home/document') 2. mkdir(name='temp') "
                "3. mv(src='report.txt', dst='temp') 4. ls(path='/home/document/temp') "
                "5. <answer>done</answer>")
```

Notice:
- self._user_messages defines the multi-turn STRUCTURE
- Each tool_xxx() ends with self._advance_if_done() so when criterion is met,
  the actor sees "[STEP N COMPLETE - NEW INSTRUCTION] ..." inline in the result
- _check_answer requires "done" AND all messages completed
- Reset returns ONLY the first instruction; the rest are revealed progressively

Generate the complete Python code in a ```python block.
\end{lstlisting}
\end{promptcard}

\begin{promptcard}{Domain and task-variant filler blocks}
\begin{lstlisting}[style=promptlisting]
__DOMAIN_BLOCK__ (one of ten application domains per call):

REAL-WORLD DOMAIN (build the tools, state, and user goal for THIS domain; do NOT
use the reference snippet's academic subject as the domain):
  __DOMAIN__
The task must be a realistic user accomplishing a goal here (search, book, order,
update, cancel, transfer, schedule) - NOT an exam or puzzle about any document.

__TASK_TYPE_BLOCK__ (one per call; execute and read_then_write are the common cases):

TASK VARIANT: execute (happy-path). The user issues atomic instructions; the
agent calls the right tools in order. Each criterion checks the resulting state.
TASK VARIANT: read_then_write. At least ONE instruction must force the agent to
FIRST query state with a read tool, branch on a predicate over the result (e.g.
'for every order over $100', 'all lines past their data cap'), THEN make the
dependent write. Its criterion must check the final state reflects the
predicate-conditioned writes - not merely that one tool was called.
TASK VARIANT: missing_value. At least ONE instruction references a value the user
does NOT state (e.g. 'cancel my most recent order' with no order_id; 'pay the
overdue invoice' with no invoice id). The agent must DISCOVER it via a read tool
before the write. The write tool MUST return an error if given a wrong/guessed
value, so a fabricated argument cannot satisfy the criterion.
TASK VARIANT: with_irrelevant. Include EXACTLY ONE instruction that NONE of the
domain tools can satisfy (e.g. in a retail env, 'what will the weather be
tomorrow?'). Add a tool named 'decline_request' (one required string param
'reason') whose method sets a state flag and then advances; that instruction's
criterion is met ONLY by calling decline_request. Provide NO other tool that
could plausibly fulfill the irrelevant request.
\end{lstlisting}
\end{promptcard}

\paragraph{\RA{} gameplay prompt.} The \RA{} receives each turn's observation wrapped in the template below and answers in \texttt{\textbackslash boxed\{\}} form.

\begin{promptcard}{\RA{} gameplay prompt}
{\footnotesize\begin{verbatim}<|im_start|>user
You are playing a language game. Make valid actions to win.
Observation: <OBSERVATION>
Please reason step by step, and put your final answer within \boxed{}.<|im_end|>
<|im_start|>assistant
\end{verbatim}
}
\end{promptcard}

\subsection{Implementation Details}
\label{app:implementation}
\label{app:lifecycle}

Two auxiliary \ED{} reward terms are disabled in all full-\spade{} runs: a per-environment EMA-based learning-potential bonus~\citep{kanitscheider2021multitaskcurriculumlearningcomplex, zhang2023omni}, enabled only as the standalone reward replacement evaluated in Section~\ref{sec:abl-ed-reward}, and a frontier bonus (extra reward for environments with a large fast-versus-slow EMA gap $|\mu_{\text{fast}}-\mu_{\text{slow}}|$). No separate variance bonus (a $\bar{r}_A(1-\bar{r}_A)$-style reward for mixed outcomes) is used. Failed environment candidates are regenerated up to a fixed number of attempts. Table~\ref{tab:hparams} lists the full training configuration.

\textbf{Pool lifecycle.} At each regeneration the previous environment set is deleted and replaced; the environment memory evicts oldest-first at its $200$-record cap; rejected candidates are persisted for inspection; generation retries up to five attempts per environment.

\textbf{Hint generation detail.} The privileged hint is produced by a separate designer-side call conditioned on the generated environment's source code: the hint writer sees the code, while the \RA{} never does.

\textbf{Validation detail.} Every candidate environment must pass a programmatic smoke test: the class is instantiated, \texttt{reset()} is called, and a few probe actions are stepped through, discarding candidates that fail to parse or crash. The tool-use setting adds the two semantic checks of Section~\ref{sec:setup-tooluse}: a deterministic reset gate that rejects environments in which a success criterion errors on the freshly reset state under every tested seed, and an LLM screen that rejects impossible environments (unreachable, pre-satisfied, or underivable success criteria) while keeping hard-but-feasible ones.

\begin{table}[htbp]
  \centering
  \caption{\textbf{Training hyperparameters.} Shared across the three games-setting \spade{} backbone runs; per-model exceptions appear in parentheses.}
  \label{tab:hparams}
  \small
  \begin{tabular}{@{}ll@{}}
    \toprule
    \textbf{Hyperparameter} & \textbf{Value} \\
    \midrule
    Models & Qwen3-4B-Instruct-2507, Qwen3-8B, Qwen3-30B-A3B-Instruct-2507 \\
    \midrule
    Learning rate & $1\times10^{-6}$ (constant) \\
    Optimizer & Adam, $\beta=(0.9,0.98)$, weight decay $0.1$ \\
    KL penalty $\beta_{\text{KL}}$ & $0$ ($0.005$ for 8B) \\
    Clipping $\varepsilon_{\text{low}}/\varepsilon_{\text{high}}$ & $0.20 / 0.28$ \\
    Truncated importance sampling & yes \\
    Reward normalization & outcome-only, per-game $z$-score \\
    \midrule
    Rollout batch & $24$ \\
    Global batch & $192$ (dynamic) \\
    Group size $G$ & $16$ \\
    Total rollouts & $400$ \\
    Environments per rollout & $24$ ($8\times3$ active skills of $6$, round-robin) \\
    Regeneration interval $k$ & $4$ rollouts \\
    \midrule
    \ED{} temperature & $0.6$ \\
    \ED{} max tokens & $16{,}384$ ($20{,}000$ for 8B) \\
    \RA{} temperature & $0.6$ \\
    \RA{} max tokens & $8{,}192$ \\
    Max turns per episode & $25$ \\
    Max context length & $32{,}768$ ($49{,}152$ later in the 4B run) \\
    \midrule
    \ED{} reward blend & plateau $0.6$, band $[0.4,0.6]$ ($[0.2,0.4]$ later for 4B) \\
                          & $+$ floored regret $0.4$ \\
    Regret scale (normalizer) & $0.15$ \\
    Plateau ramp width & $0.25$ \\
    Delayed \ED{} update & $4$ rollouts \\
    Corpus grounding & $15$k docs ($10$k math, $5$k science) \\
    Environment memory & on \\
    \bottomrule
  \end{tabular}
\end{table}

\subsection{Reproducibility}
\label{app:reproducibility}

The training and evaluation code is released. The configuration files for every run reported here are released with the paper. The evaluation output JSONs and the scripts that produce every figure are included alongside the code.

\clearpage
\subsubsection{Comparability Notes for Table~\ref{tab:tooluse}}
\label{app:comparability}

The reference systems in Table~\ref{tab:tooluse} were trained and evaluated by their own authors on their own harnesses; we reprint their published scores for context and record the protocol differences here. \textbf{BFCL versions.} Our rows and Agent-World~\citep{dong2026agent} report the BFCL~v4 multi-turn suite. AWM~\citep{wang2026agent} and EnvScaler~\citep{song2026envscaler} label the suite v3, which contains the same four multi-turn subcategories; EnvScaler evaluates a frozen 2024-09-22 data snapshot whose repository notes minor differences from later releases. \textbf{$\tau^2$-bench variants.} AWM evaluates $\tau^2$-bench-verified, a fork with task corrections, runs the Telecom domain in solo mode (agent only, no user simulator), and uses a GPT-5.1 user simulator in the other domains; AgentScaler~\citep{fang2025towards} does not state its user simulator. We evaluate the standard $\tau^2$-bench release. \textbf{Aggregation.} Reference-row Avg cells follow the rule stated in the table caption; Agent-World and AWM print their own aggregates (61.8/65.4 and task-weighted 33.5/39.0), which do not equal the unweighted means of the domain scores they report. \textbf{Base models.} Our rows post-train Qwen3-4B-Instruct-2507, Qwen3-8B, and Qwen3-30B-A3B-Instruct-2507; the reference systems post-train their own base checkpoints, with training data and budgets that differ from ours.

\section[Extended Ablations]{Extended Ablations\backtotoc}
\label{app:results}

\begin{figure*}[htbp]
  \centering
  \includegraphics[width=\textwidth]{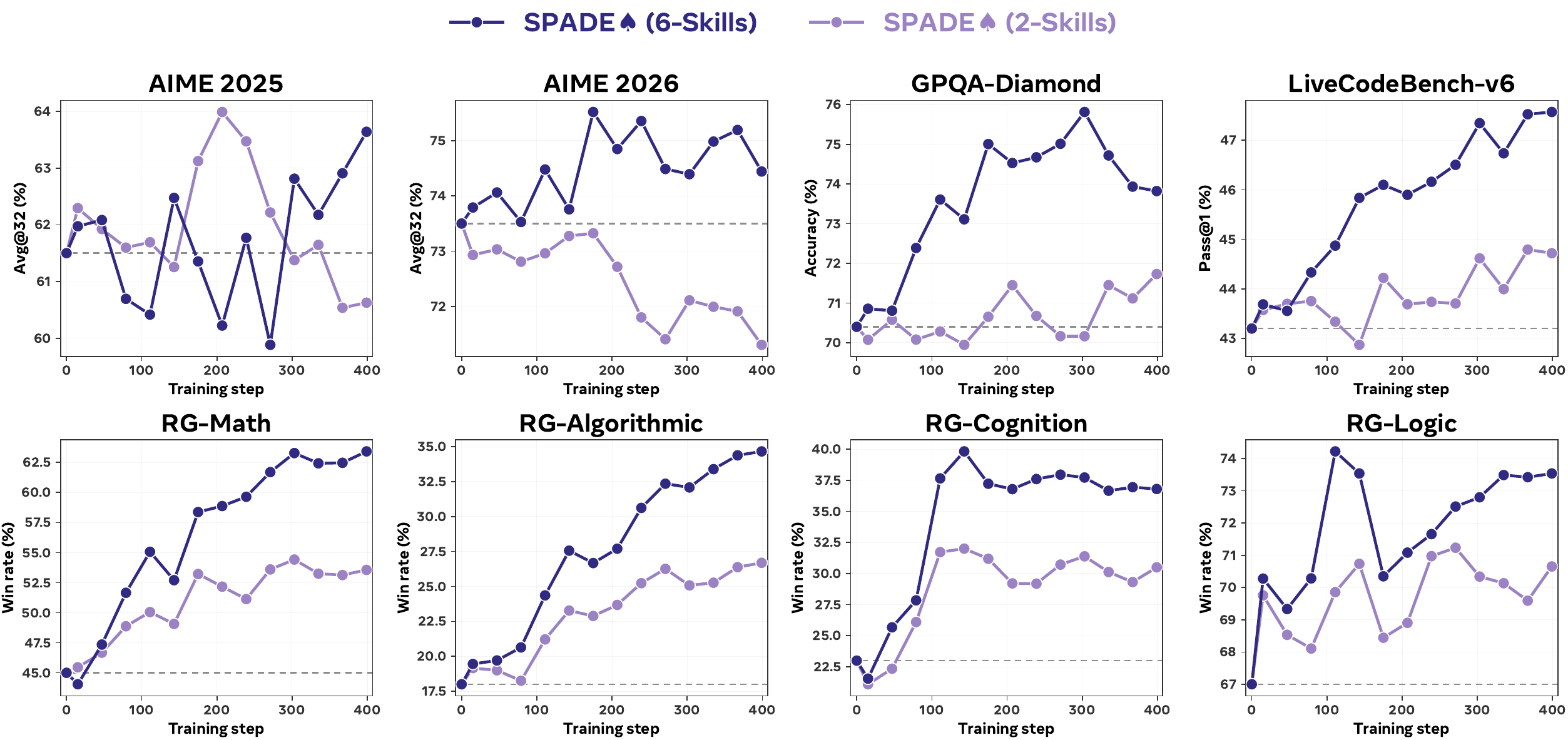}
  \caption{\textbf{The full 6-skill curriculum lifts held-out benchmarks more than the restricted 2-skill variant; curriculum breadth drives the gains.} Qwen3-30B-A3B-Instruct-2507, games setting. Top row: AIME 2025/2026 Avg@32, GPQA-Diamond accuracy, and LiveCodeBench-v6 Pass@1. Bottom row: the four Reasoning-Gym categories; the dashed line marks the untrained base model. Discussed in Section~\ref{sec:scaling}.}
  \label{fig:skill-diversity}
\end{figure*}

\begin{figure*}[htbp]
  \centering
  \includegraphics[width=0.5\textwidth]{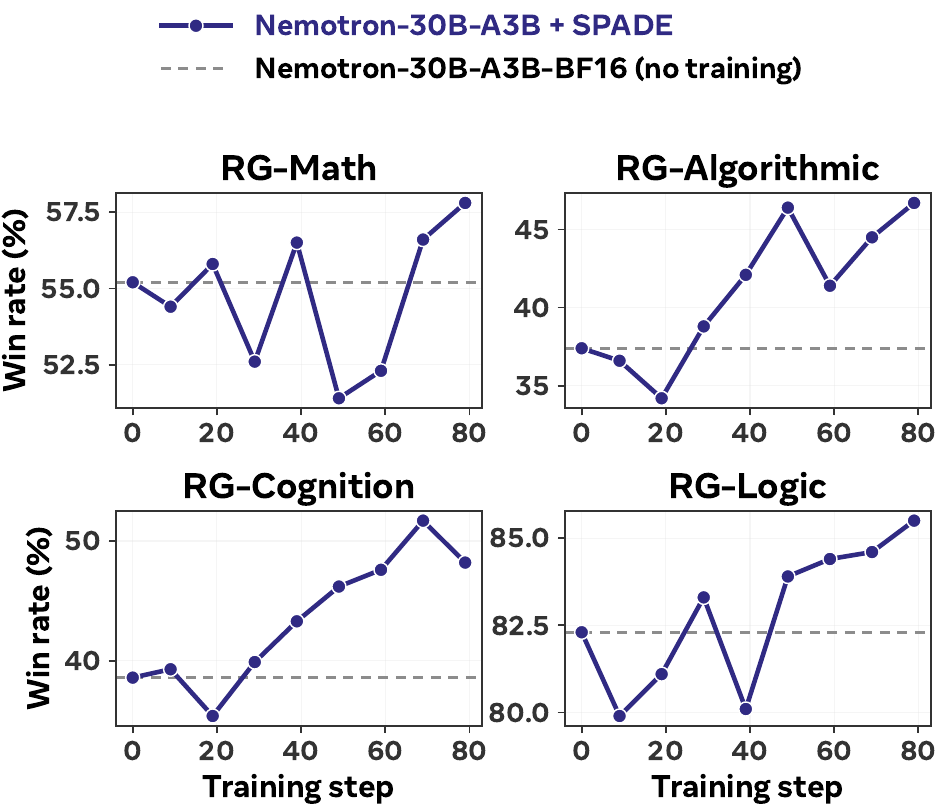}
  \caption{\textbf{\spade{} improves a second backbone family: all four Reasoning-Gym categories end above the untrained Nemotron-30B-A3B-BF16 base (RG-Cognition $+9.6$, RG-Algorithmic $+9.3$, RG-Math $+2.6$, RG-Logic $+3.2$).} Reasoning-Gym win rate across checkpoints; the dashed line marks the untrained base model. Gains arrive after an initial dip early in training, consistent with the \ED{} initially generating environments too hard for the \RA{}, as observed on the Qwen backbones.}
  \label{fig:nemotron-rg}
\end{figure*}

\textbf{Cross-family transfer.\,} As a check on a non-Qwen backbone, we ran the same recipe on Nemotron-30B-A3B-BF16, evaluating only the Reasoning-Gym suite; AIME 2025/2026, GPQA-Diamond, and LiveCodeBench-v6 were not evaluated on this backbone, so this checks that the training dynamics are not Qwen-specific rather than serving as a matched cross-family comparison. All four Reasoning-Gym categories rise above the untrained base, with the largest Nemotron gains in RG-Cognition ($+9.6$) and RG-Algorithmic ($+9.3$; Figure~\ref{fig:nemotron-rg}).

Table~\ref{tab:ablation-breakdown} expands the component-controlled ablations of Table~\ref{tab:ablations} to every games-setting variant reported in this paper, including the skill-diversity run, with each variant's selected checkpoint and its GEM overall win rate. Checkpoints are selected per variant by the best suite average (the mean of the eight benchmark columns); the corresponding evaluation trajectories appear in Figures~\ref{fig:eval-ablation}, \ref{fig:skill-diversity}, and~\ref{fig:abl-ed-reward}.

\begin{table*}[htbp]
  \caption{\textbf{Full ablation breakdown (games setting, Qwen3-30B-A3B-Instruct-2507).} Best checkpoint per variant on the suite average. AIME reports Avg@32; GPQA-D accuracy; LCB-v6 Pass@1; Reasoning-Gym (RG) win rate at \textsc{hard}; GEM the overall win rate across the GEM game suite~\citep{liu2025gem}. Best in \textbf{bold}.}
  \label{tab:ablation-breakdown}
  \centering
  \small
  \setlength{\tabcolsep}{3.5pt}
  \renewcommand{\arraystretch}{1.10}
  \resizebox{\textwidth}{!}{%
  \begin{tabular}{@{}l c|cccccccc|c|>{\columncolor{gray!12}}c@{}}
    \toprule
    \textbf{Setting} & \textbf{Ckpt}
      & AIME'25 & AIME'26 & GPQA-D & LCB-v6 & RG-Math & RG-Algo. & RG-Cog. & RG-Logic
      & GEM & \textbf{Avg} \\
    \midrule
    Qwen3-30B-A3B-Instruct-2507 & --
      & 61.5 & 73.5 & 70.4 & 43.2 & 45.0 & 18.0 & 23.0 & 67.0 & 41.0 & 50.2 \\
    \midrule
    \rowcolor{red!10}
    \spadebrand{}            & 303 & \textbf{62.8} & 74.4 & \textbf{75.8} & \textbf{47.3} & \textbf{63.3} & \textbf{32.1} & \textbf{37.7} & \textbf{72.8} & \textbf{50.2} & \textbf{58.3} \\
    \ED{} w/ learning potential & -- & {62.4} & {74.1} & {74.2} & {46.1} & {57.8} & {27.9} & {33.3} & {71.1} & {47.4} & {55.9} \\
    2-skill curriculum       & 399 & 60.6 & 71.3 & 71.7 & 44.7 & 53.6 & 26.7 & 30.5 & 70.7 & 46.0 & 53.7 \\
    w/o corpus grounding     & 111 & 61.1 & 74.1 & 71.8 & 46.3 & 51.6 & 22.3 & 32.4 & 68.7 & 45.1 & 53.5 \\
    w/o memory               & 111 & 59.3 & \textbf{75.0} & 72.3 & 45.7 & 49.1 & 22.9 & 30.7 & 70.9 & 42.4 & 53.2 \\
    w/o \ED{} training and memory & 271 & 59.4 & 73.5 & 65.8 & 39.1 & 22.5 & 10.0 & 7.6 & 46.0 & 26.4 & 40.5 \\
    Fixed \ED{} (GPT-5.5)             & 175 & 59.9 & 72.8 & 74.2 & 42.6 & 51.2 & 24.3 & 30.7 & 68.0 & 45.6 & 53.0 \\
    \bottomrule
  \end{tabular}}
\end{table*}

\clearpage
\section[Extended Related Work]{Extended Related Work\backtotoc}
\label{app:related}

\subsection{Self-Play for LLMs}
\label{app:related-selfplay}

Self-play has been a cornerstone of AI since \citet{tesauro1995temporal} trained TD-Gammon through self-play in backgammon. AlphaGo~\citep{silver2016mastering} combined Monte Carlo tree search with deep neural networks and self-play to defeat a human Go champion, and AlphaZero~\citep{silver2017mastering,silver2018general} extended the approach to chess, shogi, and Go from tabula rasa with no human data. OpenAI Five~\citep{berner2019dota} and AlphaStar~\citep{vinyals2019grandmaster} scaled multi-agent self-play to Dota~2 and StarCraft~II, while Cicero~\citep{meta2022human} combined language models with strategic reasoning in Diplomacy. \citet{littman1994markov} formalized the Markov-game framework underlying these results, and \citet{irving2018ai} proposed debate as an AI-safety mechanism grounded in self-play. The asymmetric self-play paradigm of \citet{sukhbaatar2017intrinsic}, in which one agent proposes challenges while another solves them, provides the conceptual template for teacher-student environment design.

Translating self-play to LLMs introduces new challenges because rewards are sparse, outputs are discrete sequences, and the ``environment'' is open-ended language. SPIN~\citep{chen2024self} applies self-play fine-tuning by training the LLM to distinguish its own outputs from human demonstrations, converting weak models to strong ones. Self-Rewarding Language Models~\citep{yuan2024self} let the LLM judge its own outputs to generate preference data, iteratively improving both generation and evaluation. SPAG~\citep{cheng2024self} uses adversarial language games (Adversarial Taboo) to incentivize strategic reasoning through a zero-sum objective. ReST$^{\mathrm{EM}}$~\citep{singh2023beyond} alternates between sampling model solutions filtered by a binary correctness reward and supervised fine-tuning on the filtered correct rollouts. Prover-Verifier Games~\citep{kirchner2024prover} train a prover to produce legible solutions that a weaker verifier can check, improving output interpretability. ReMA~\citep{wan2025rema} decomposes reasoning via multi-agent RL into a high-level meta-thinking agent (strategy and decomposition) and a low-level reasoning agent (step-by-step execution), trained jointly.

The most relevant line of recent work uses LLMs as their own source of training data in self-play, with varying degrees of reliance on external data. At one extreme, fully data-free methods bootstrap from minimal seeds: Absolute Zero Reasoner (AZR)~\citep{zhao2025absolute} has a single model (shared parameters) propose and solve code-reasoning triplets (deduction, abduction, induction) with proposer reward $1 - \bar{r}_{\text{solve}}$ computed from solver Monte-Carlo success rates verified by a code executor, bootstrapping from a single identity function. R-Zero~\citep{huang2025r} instead instantiates a Challenger and a Solver as two independently optimized copies of the same base LLM, training the Challenger via GRPO with a self-consistency uncertainty reward; the paper reports that this two-model design outperforms a single-model variant in ablations but plateaus after a few iterations. PopuLoRA~\citep{castanyer2026populora} extends this asymmetric paradigm to co-evolving populations of LoRA-adapter teachers and students with cross-evaluation between sub-populations. In a related but mechanistically distinct line, G-Zero~\citep{huang2026g} performs DPO-based self-distillation over hint-induced preference pairs: a Proposer trained via GRPO emits a hint $h$ for query $q$, the Generator $\pi_G$ produces both an unassisted response $a_{\text{hard}} \sim \pi_G(\cdot \mid q)$ and a hint-conditional response $a_{\text{assisted}} \sim \pi_G(\cdot \mid q, h)$, and DPO is applied with $a_{\text{assisted}}$ as chosen and $a_{\text{hard}}$ as rejected; the Proposer's reward is Hint-$\delta$, the per-token mean log-probability shift induced on $a_{\text{hard}}$ when the hint is prepended to the Generator's context. All supervision is derived from a single model's own outputs under hint vs no-hint contexts, placing the method closer to the self-rewarding and self-distillation line~\citep{chen2024self,yuan2024self,singh2023beyond} than to verifier-grounded agent-environment co-evolution. Tool-R0~\citep{acikgoz2026tool} adapts the same Challenger-Solver paradigm to tool use, training the generator with a solve-rate-band reward plus a semantic alignment score and the solver with soft tool-call matching, similarly finding that separate parameters outperform a shared-weight setup. Self-Questioning Language Models~\citep{chen2025selfquestioning} have a single model propose and solve problems from only a topic prompt, with the proposer rewarded for majority-vote calibration (problems neither always solved nor always missed). Language Self-Play~\citep{kuba2025language} proposes a data-free framework where question generation and answer improvement reinforce each other. PasoDoble~\citep{zhang2025better} pairs an adversarial proposer and solver grounded in a pretraining knowledge base, sustaining improvement beyond R-Zero's plateau. At the other extreme, methods that retain limited seed data or curated evaluation sets include SeRL~\citep{fang2025serl}, which uses a few-shot-prompted question generator with a difficulty-band filter and trains only the solver via Reinforce++ on majority-vote rewards from 500 seed instructions, and \citet{sundaram2026teaching}, who ground the teacher's reward in measured student improvement on curated target problems instead of proxy statistics. Self-Challenging Language Model Agents~\citep{zhou2025self} let agents propose harder task variants for themselves and then attempt to solve them. Autodata~\citep{kulikov2026autodata} generalizes the self-instruct line by casting an LLM agent as a meta-optimizing data scientist that iteratively constructs training data; its Agentic Self-Instruct instantiation selects examples by the separation between a strong and a weak reference solver, targeting learnable examples over merely hard ones, a frontier-targeting principle analogous to \spade{}'s hint-based regret but operating over offline data creation with fixed reference solvers, without the online co-evolution of a gradient-trained generator. Vision-Zero~\citep{wang2025vision} brings gamified multi-agent self-play (Who Is the Spy?) to vision-language models with iterative self-play policy optimization. SPELL~\citep{yang2025spell} co-trains a three-role self-play loop (questioner, responder, verifier) on long documents, using a history-memory curriculum to generate progressively harder questions for evolving long-context capabilities. SPC~\citep{chen2025spc} trains a self-play critic via adversarial games to improve LLM reasoning evaluation. TextArena~\citep{guertler2025textarena} provides an open-source collection of 57+ competitive text-based game environments with a Gym-compatible API for LLM evaluation and self-play training. \citet{sarkar2025training} train LLMs for Among Us-style social deduction with multi-agent RL. Self-Play with Execution Feedback~\citep{dong2024self} uses code execution to verify instruction-following in a self-play loop. SPIRAL~\citep{liu2025spiral} introduces self-play on zero-sum games as a multi-agent, multi-turn RL framework that incentivizes reasoning through game-theoretic competition. SPICE~\citep{liu2025spice} extends self-play to corpus environments via information asymmetry: a Challenger mines documents from a large corpus to generate diverse reasoning tasks with document-grounded answers, while a Reasoner solves them without document access; the corpus prevents hallucination amplification and information-symmetry collapse seen in ungrounded self-play. SWE-RL~\citep{wei2025swe} applies RL to open software evolution tasks, and SSR~\citep{wei2025toward} trains software engineering agents through self-play. Anchored Self-Play~\citep{choi2026anchored} has a single model play a bug-generator and a fixer for code repair with a difficulty-band reward, adding reference-bug mixing and embedding-similarity shaping to keep self-generated bugs on the realistic distribution, preventing drift off it, and GASP~\citep{jana2026gasp} guides asymmetric self-play for coding LLMs with grounding in real data. \citet{chae2025towards} provide a systematic analysis of when LLM self-play succeeds and when it fails, analyzing the ``invisible leash'' where generation quality is bounded by base-model capability. \citet{shafayat2025can} investigate whether large reasoning models can self-train and find that naive self-play degrades performance without careful curriculum control. \citet{liu2026self} analyze when self-synthetic self-play actually improves, showing that sustained evolution requires the synthetic pipeline to guarantee learnable information gain.

Two themes emerge across this body of work. First, all data-free self-play methods generate \emph{tasks} (a problem statement paired with a sparse terminal reward) rather than full environments with state transitions. Second, the generator in every data-free case is either frozen or trained with heuristic proxy rewards (learnability, difficulty, variance) that risk reward hacking and distributional drift. By contrast, \spade{} generates \emph{full MDP environments} (state space, transition function, reward function, and verification code) as executable Python, and trains the \ED{} via RL with hint-based regret that is grounded in measured \RA{} return rather than proxy statistics.

\subsection{Unsupervised Environment Design and Open-Endedness}
\label{app:related-ued}

Curriculum learning~\citep{bengio2009curriculum} established the principle that ordering training examples from easy to hard accelerates learning. Quality-Diversity (QD) algorithms, exemplified by MAP-Elites~\citep{mouret2015illuminating}, illuminate the search space by maintaining an archive of high-performing solutions across a structured behavior space, providing the algorithmic substrate for many open-ended environment-generation systems. Paired Open-Ended Trailblazer (POET)~\citep{wang2019paired} pioneered the paradigm of co-evolving environments alongside agents, using evolutionary search to grow a population of parameterized BipedalWalker terrains together with the policies that solve them, with explicit transfer attempts moving high-performing agents across environments. Unsupervised Environment Design (UED) extends this idea by automatically generating training environments instead of selecting from a fixed pool. PAIRED~\citep{dennis2020emergent} formalized UED via a minimax regret objective: an adversary generates environment parameters to maximize the gap between an antagonist's and a protagonist's returns, producing curricula of increasing complexity while avoiding unsolvable environments. \citet{dennis2020emergent} proved that at Nash equilibrium, the protagonist plays a minimax regret policy, connecting UED to decision theory. Prioritized Level Replay (PLR)~\citep{jiang2021prioritized} replaced the learned adversary with a replay buffer that prioritizes high-regret levels, achieving comparable curriculum quality with lower computational cost. Replay-Guided Adversarial Environment Design~\citep{jiang2021replay} combined the adversarial generator with prioritized replay, using the replay distribution to guide the adversary toward high-regret regions of the environment space. ACCEL~\citep{parker2022evolving} introduced evolutionary mutations of high-regret levels, compounding complexity over training without domain-specific heuristics and recovering minimax regret guarantees at a fraction of the compute required by population-based methods like POET. \citet{mediratta2023stabilizing} stabilized PAIRED-style learned adversaries via entropy regularization and behavioral cloning between protagonist and antagonist, addressing the entropy collapse that destabilizes UED. \citet{rutherford2024no} investigated regret approximations used in PLR and ACCEL, showing that common proxies (positive value loss, maximum Monte Carlo) correlate with success rate rather than true regret, and proposing Sampling For Learnability (SFL) as an improved estimator. \citet{monette2025optimisation} recast UED as a nonconvex-concave optimization over a categorical level distribution, proving convergence to first-order Nash equilibria and generalizing learnability scores to continuous-return settings. CENIE~\citep{teoh2024improving} augmented regret-based UED with a novelty objective that measures how much a candidate environment pushes the agent into unexplored regions of the state-action space, finding that novelty and regret are synergistic rather than competing. DISCOVER~\citep{diaz2026discover} provides a complementary formal-analysis view from goal-conditioned RL: it scores candidate goals via current-policy value estimates balancing achievability, relevance, and novelty, and proves a UCB-style bound on time-to-target-achievability that depends only on the agent's initial distance to the target, independent of task-space volume. The three-axis selection rule parallels \spade{}'s hint-based regret as a frontier-targeting curriculum signal, but DISCOVER operates over a fixed goal pool whereas \spade{} generates the environment distribution itself.

A broader thread of open-endedness research motivates the need for unbounded environment spaces. \citet{hughes2024open} argue that open-endedness, defined as simultaneous novelty and learnability, is essential for superhuman AI, and that systems plateau when the environment parameterization is exhausted. \citet{baker2019emergent} demonstrated emergent tool use from multi-agent autocurricula in hide-and-seek, showing that competitive self-play in rich physics environments produces increasingly sophisticated strategies. OMNI~\citep{zhang2023omni} uses foundation models as a ``Model of Interestingness'' to filter open-ended task proposals, focusing the curriculum on tasks that are both learnable and interesting. OMNI-EPIC~\citep{faldor2024omni} extends this line by representing each environment as executable code generated by a foundation model, with a frozen FM judge of interestingness gating which generated tasks enter the archive; \spade{} shares the code-as-environment representation but trains the \ED{} via RL with hint-based regret, with no frozen interestingness judge. Imagined Autocurricula~\citep{guzel2025imagined} applies PLR inside a learned diffusion world model trained from offline data, demonstrating that UED principles transfer to imagined environments. SIMA 2~\citep{bolton2025sima} scales open-ended self-improvement to embodied 3D worlds: a Gemini-based agent uses foundation models to generate its own tasks and rewards and acquires new skills in previously unseen and even model-generated (Genie 3) environments, though it distributes the process across separate task-setter, agent, and reward models plus a distinct generative world model instead of a single policy that authors its own environments. \citet{goldfeder2026ai} argue for Superhuman Adaptable Intelligence over AGI, emphasizing speed of adaptation over static benchmark performance. PAPRIKA~\citep{tajwar2025training} trains generally curious agents on ten hand-designed task groups, showing that diverse multi-turn training produces transferable exploration strategies.

All classical UED methods operate in \emph{parameterized} environment spaces (maze dimensions, terrain friction, grid layouts) with small, fixed design vocabularies. To our knowledge, \spade{} is the first system to bring \emph{regret-based UED with an RL-trained \ED{} that produces full MDP environments} to LLM post-training: the environment space is unbounded (arbitrary Python code), the \ED{} \emph{is} the LLM rather than a separate adversary network, and hint-based regret provides the co-evolution signal without requiring a trained antagonist.

\subsection{Synthetic Environment Generation}
\label{app:related-synthenv}

A growing body of work uses LLMs to programmatically generate training environments for agentic RL, addressing the bottleneck of hand-curated environment design. Agent World Model (AWM)~\citep{wang2026agent} decomposes environment synthesis into five stages (scenario, task, database, interface, verification) to produce 1,000 SQLite-backed tool-use environments with MCP interfaces, training agents via GRPO with hybrid step-level and task-level rewards. ScaleEnv~\citep{tu2026scaleenv} generates multi-turn tool-use environments by synthesizing API specifications, databases, and verification code from seed domains, scaling to thousands of environments for generalist interactive agent training. TermiGen~\citep{zhu2026termigen} synthesizes high-fidelity terminal environments inside Docker containers with automated trajectory verification, producing diverse command-line tasks for terminal agent training. Nemotron-Terminal~\citep{pi2026data} introduces Terminal-Task-Gen, a two-stage pipeline that combines \emph{dataset adaptation} (transforming math, code, and SWE benchmarks into terminal-formatted prompts) with \emph{synthetic task generation} (seed-based and skill-based, the latter driven by a Skill Taxonomy of nine domains and primitive skills); the resulting Terminal-Corpus is open-sourced, and Qwen3-32B post-trained via SFT reaches 27.4\% on Terminal-Bench 2.0 (from a 3.4\% baseline), surpassing Qwen3-Coder-480B (23.9\%) at a fraction of the parameter count. SkillSynth~\citep{fan2026toward} constructs a scenario-mediated skill graph with 82{,}073 scenarios as nodes, 57{,}214 filtered skills as directed transitions, and 185{,}529 LLM-verified bridges, samples directed paths through the graph as workflow abstractions, and instantiates them via a multi-agent harness with dual execution-based and rubric-based verification (95.7\% oracle pass rate, 3{,}560 verified task instances per run); the explicit objective is maximizing the diversity of execution trajectories over raw task count. Eurekaverse~\citep{liang2024eurekaverse} uses LLMs to generate Python terrain-program curricula (height-field functions) for robot skill learning, producing progressively harder physics environments guided by agent performance feedback, and validates on quadrupedal parkour. LLM-in-Sandbox~\citep{cheng2026llm} places LLMs in sandboxed code-execution environments and shows that agentic intelligence emerges from multi-turn interaction with executable feedback. EvoCUA~\citep{xue2026evocua} evolves computer-use agents by synthesizing scalable GUI interaction experiences, generating training trajectories across diverse desktop applications. \citet{xue2026autonomous} extend this to autonomous continual learning where computer-use agents adapt to new environments through self-generated experience. DreamGym~\citep{chen2025scaling} trains an LLM-based world model on demonstration trajectories from existing agentic environments (WebShop, ALFWorld, WebArena) and uses it to generate abstract-text rollouts that replace expensive real-environment interactions during RL. Simia~\citep{li2025simulating} pushes this direction further by removing real environments entirely: Simia-SFT amplifies small seed trajectories into diverse SFT data via reasoning-model-simulated feedback, and Simia-RL performs PPO/GRPO directly against LLM-generated environment transitions; fine-tuned 7--32B Qwen and Llama models reach 36--59 average on $\tau^2$-bench (Airline+Retail subset) without ever executing real tool code, but the framework still inherits the LLM-as-transition-model hallucination risk and the reduced two-domain $\tau^2$ setup is not directly comparable to the three-domain Avg used elsewhere. \citet{lu2025don} argue that tuning the environment (reward shaping, observation design) can be more effective than tuning the agent, providing a complementary perspective on environment optimization. Exploratory Iteration~\citep{jiang2025bootstrapping} grows a self-improvement task space by sampling informative intermediate solution iterates from previous episodes as starting points for new single-step training tasks, training $K$-step inference-time self-improvement while only training on single-step transitions. Golden Goose~\citep{lu2026golden} synthesizes RLVR tasks from unverifiable internet text by extracting verifiable sub-claims, converting passive corpora into training signal without requiring curated datasets. AutoEnv~\citep{zhang2025autoenv} provides automated environments for measuring cross-environment agent transfer, enabling systematic evaluation of generalization. Synthetic Sandbox~\citep{zhou2026synthetic} generates sandboxed ML-engineering environments for training software agents on realistic development workflows. RLAnything~\citep{wang2026rlanything} proposes a dynamic RL system that jointly optimizes policy and reward model while adapting the task distribution via perturbation of existing tasks. GenEnv~\citep{guo2025genenv} targets difficulty-aligned environment generation with a curriculum reward signal. Agent-World~\citep{dong2026agent} mines real-world environments and toolsets from web content (1{,}978 environments, 19{,}822 tools) and applies a self-evolving training arena that diagnoses agent weaknesses and generates targeted tasks across rounds. AgentScaler~\citep{fang2025towards} clusters over 30{,}000 real APIs into more than 1{,}000 simulated tool-use domains via Louvain community detection on a parameter-similarity graph, materializes each tool as a Python class operating on a per-domain database schema, and trains 4B/8B/30B-A3B agents via two-phase supervised fine-tuning on filtered agent-user trajectories without RL. EnvScaler~\citep{song2026envscaler} programmatically synthesizes tool-interactive environments for multi-turn tool-use agent training, and EnvFactory~\citep{xu2026envfactory} scales executable tool-use environment synthesis paired with a stable RL recipe. From Trainee to Trainer~\citep{chen2026trainee} narrows the adaptivity gap by letting the current RL checkpoint reconfigure its own environment generator from diagnosed failures across training rounds, though it tunes a fixed set of generator parameters and does not emit executable environment code. AutoPlay~\citep{ramrakhya2025scaling} addresses task synthesis for UI agents (mobile and desktop) by first running an MLLM explorer in each app to gather state and functionality information, then having a frozen GPT-4o task-generator condition on those exploration trajectories plus task-guideline prompts to propose grounded tasks; the executor is trained via SFT and GRPO with binary outcome rewards from an MLLM verifier, gaining 20.6 points on AndroidWorld over Qwen2.5-VL-7B. InfiniteWeb~\citep{zhang2026infiniteweb} synthesizes complete static-HTML/localStorage websites alongside tasks and dense-reward JavaScript evaluators via task-centric test-driven development; UI-TARS-1.5-7B post-trained with GRPO on 600 generated tasks improves on OSWorld from 24.5 to 31.4 and on Online-Mind2Web from 23.0 to 28.7, with the largest gains concentrated in easy and medium difficulty buckets. EvoEnv~\citep{shi2026learning} co-trains a single LLM under generator and solver role-conditioning via shared-parameter GRPO, producing reusable Python verifier-and-prompt artifacts whose oracle and scorer remain frozen at training time; the generator reward combines staged validation, solver-relative difficulty calibration targeting 30\% solver accuracy, and an embedding-based novelty bonus, with evaluation on three model families restricted to single-shot reasoning benchmarks (Qwen3-4B-Thinking-2507 improves from 72.4 to 74.8 averaged across eight math, code, and science benchmarks). The environment interface is single-shot scoring (sampler, oracle, renderer, scorer) without state evolution or step-level rewards, so the framework does not extend to multi-turn agentic settings such as tool use or terminal workflows; \spade{} differs in three respects: (i) the \ED{} is rewarded via hint-based regret grounded in \RA{} return rather than validation and novelty signals, (ii) environments are full MDPs with reset/step interfaces that unify single-turn and multi-turn settings, and (iii) evaluation spans games and tool use in addition to reasoning.

These methods produce useful training material, and some incorporate feedback-driven adaptation where the generation distribution shifts in response to agent progress: Eurekaverse evolves environments based on agent performance statistics, and EvoCUA iterates synthesis with an improving policy. However, even in these cases the generator LLM receives no RL gradients; adaptation operates through prompting heuristics or iterative data filtering, without joint optimization. By contrast, \spade{}'s \ED{} is a genuine RL learner, trained via hint-based regret in the same optimization loop as the \RA{}, producing full MDP environments at the \RA{}'s competence frontier.

\subsection{Environment Scaling}
\label{app:related-envscaling}

A growing consensus holds that the next frontier for agentic RL is scaling environments rather than algorithms. \citet{silver2025welcome} articulate this vision most directly, arguing that the field is entering an ``era of experience'' in which the primary bottleneck is generating sufficiently diverse and abundant training environments. \citet{zhang2026scalable} argue in the same vein that scalable, diverse environments are what drive generalizable agents. AgentRL~\citep{zhang2025agentrl} provides a multi-turn, multi-task RL framework with a fully asynchronous generation-training pipeline, cross-policy sampling for improved exploration, and task advantage normalization, training a single agent across five agentic domains to outperform frontier models. SCALER~\citep{xu2026scaler} converts competitive programming problems (from CodeContests) into parameterized single-turn reasoning environments at controlled difficulty levels, demonstrating that scaling the number and diversity of training problems yields consistent improvements in mathematical reasoning. RLVE~\citep{zeng2025rlve} creates 400 hand-engineered verifiable environments for RL training with adaptive difficulty levels and per-environment custom verifiers (mixing rule-based parsing, soft scoring, and code execution) in place of LLM-as-judge. WebScale-RL~\citep{cen2025webscale} builds an automated data pipeline that harvests web content to create RL training environments at pretraining scale, showing that more diverse environments produce better generalization. Endless Terminals~\citep{gandhi2026endless} procedurally generates 3{,}255 verified terminal-use tasks via a four-stage pipeline and demonstrates that vanilla PPO with binary rewards yields substantial gains as the number of training environments scales, with transfer to TerminalBench 2.0. Self-Evolving Curriculum~\citep{chen2025selfevolving} dynamically adjusts the difficulty distribution of training problems based on the learner's current performance, preventing saturation on easy examples. TTCS~\citep{yang2026ttcs} co-evolves a gradient-trained problem synthesizer with a solver via GRPO, using a capability-adaptive reward that targets the solver's learning frontier at test time. \citet{xu2025toward} survey the path toward Large Reasoning Models, organizing the field into three pillars (automated data construction, learning-to-reason via RL, and test-time scaling) and identifying environment scaling as a key open problem. \citet{song2024mind} examine self-improvement capabilities of LLMs and find that naive self-training saturates quickly, motivating the need for adaptive curricula. Embodied Co-Design~\citep{wang2025embodied} provides a taxonomy of co-design approaches where agent morphology and controller evolve jointly, drawing parallels to \spade{}'s co-evolution of environment and agent. Agent Learning via Early Experience~\citep{zhang2025agent} introduces a reward-free training paradigm in which agents propose alternative actions at expert-visited states and learn from the resulting future states (implicit world modeling and self-reflection), bridging imitation learning and full RL across eight agentic environments.

These works independently demonstrate that more environments yield better generalization and that curriculum design matters at least as much as algorithm choice. Several incorporate adaptive mechanisms: RLVE adjusts per-environment difficulty levels, SCALER tracks accuracy for difficulty control, and Self-Evolving Curriculum learns a sampling policy over problem categories. \spade{} combines gradient-trained environment design with the code-as-environment representation, producing an adaptive curriculum of executable MDPs that co-evolves with the \RA{}.

\subsection{Agentic Memory Design and Self-Improving Code Systems}
\label{app:related-memory}

MemRL~\citep{zhang2026memrl} introduces self-evolving agents that learn via runtime reinforcement learning on episodic memory, storing and retrieving past interaction experiences to improve future decision-making without additional gradient updates. ALMA~\citep{xiong2026learning} meta-learns agentic memory designs themselves: a foundation model searches the space of Python memory-architecture programs (an open-ended scaffold over update/retrieve operations), discovering memory designs that improve continual learning at test time. HyperAgents~\citep{zhang2026hyperagents} extends self-referential code evolution by fusing the task-solver and the meta-improver into a single editable program, so the self-modification mechanism is itself modifiable; tasks are externally fixed and the foundation model is frozen. These directions are complementary to \spade{}: they address how agents \emph{use} accumulated experience or \emph{rewrite their own scaffolding} at the meta-level, while \spade{} addresses how to \emph{generate} the training environments that produce that experience in the first place via RL. HyperAgents explicitly leaves co-evolving the task distribution as future work, which \spade{} addresses directly. The approaches are composable: \spade{}'s adaptive environment generation could populate the experience streams that ALMA-style memory systems and MemRL-style runtime RL operate over.

\clearpage
\section[Extended Quantitative Analysis]{Extended Quantitative Analysis\backtotoc}
\label{app:ext-quantitative}

Throughout, environment text is embedded with SBERT (\texttt{all-MiniLM-L6-v2})~\citep{reimers2019sentencebertsentenceembeddingsusing}, and every embedding-based result is replicated under a second, purely lexical embedding (TF-IDF with LSA-128), following the multi-embedding robustness protocol of \citet{abdulhai2026llmsdistortwrittenlanguage}; conclusions are identical under both. Step-level training dynamics use the matched 30B-A3B runs (full \spade{} is the resume-stitched 0--399 step run; ablations end earlier), with curves EMA-smoothed over real logged values, expanding on Section~\ref{sec:training-dynamics}.

\subsection{Games}
\label{app:quant-games}

The games-setting analyses below expand the environment-quality and diversity signals summarized in the main text.

\subsubsection{Environment Quality}
\label{app:qa-quality-metrics}
\label{app:training-dynamics}

We track five quality signals; four are tracked over training (Table~\ref{tab:quality-over-training}) and executability is measured once over the raw generations. \textbf{(i)~Learnability (ground truth):} the fraction of environments in the learnable band (\RA{} win rate in $[0.2, 0.8]$) rises from $0.16$ to $0.31$, and \RA{} win-rate rises $0.30 \to 0.62$: the \ED{} keeps targeting tasks at the frontier of the improving agent (Figure~\ref{fig:difficulty-dynamics}). \textbf{(ii)~Well-posedness:} an LLM rubric over all environments scores $97$ to $98\,\%$ well-posed, flat over training. \textbf{(iii)~Verifiability:} $90$ to $93\,\%$ of environments have an objectively checkable terminal answer, flat to slightly rising. \textbf{(iv)~Executability:} re-executing raw generated code in a sandbox (parse, instantiate, \texttt{reset()}), $84.9\,\%$ runs as emitted and $90.3\,\%$ after stripping a stray Markdown fence; the dominant residual failure is a systematic f-string brace-escaping bug in \texttt{\textbackslash boxed\{\}} action templates ($7.4\,\%$), which the training pipeline's sanitizer repairs before execution ($100\,\%$ post-filter validity). \textbf{(v)~Richness:} environments hold steady at ${\sim}320$ lines of code, ${\sim}13$ hidden state variables, and $8$ to $10$ interaction turns per episode; the \ED{} does not shortcut to trivial programs to inflate its reward. Reward granularity also rises, from $3.7$ to $5.8$ distinct levels per environment ($2.2 \to 4.0$ strictly partial; Figure~\ref{fig:reward-granularity}).

\begin{table}[h]
\centering
\small
\begin{tabular}{lccc}
\toprule
\textbf{Quality signal} & \textbf{Early (0--40)} & \textbf{Mid (150--250)} & \textbf{Late (340--396)} \\
\midrule
Learnable-band fraction (win-rate $\in [0.2, 0.8]$) & 0.16 & 0.16 & \textbf{0.31} \\
\RA{} win-rate & 0.30 & 0.46 & \textbf{0.62} \\
Well-posed (LLM rubric) & 0.98 & 0.97 & 0.97 \\
Verifiable terminal answer (LLM rubric) & 0.90 & 0.91 & 0.93 \\
Interaction depth (turns / episode) & 8.8 & 8.2 & 9.8 \\
Program length (lines of code) & 316 & 333 & 321 \\
Hidden state variables & 13.0 & 13.3 & 13.4 \\
\bottomrule
\end{tabular}
\caption{\textbf{Environment quality over training.} Learnability nearly doubles while well-posedness, verifiability, and structural richness hold constant: the quality gains come from sharper difficulty targeting rather than simpler environments. Win-rate rows use the released per-step evaluation logs.}
\label{tab:quality-over-training}
\end{table}

\begin{figure}[htbp]
\centering
\includegraphics[width=\textwidth]{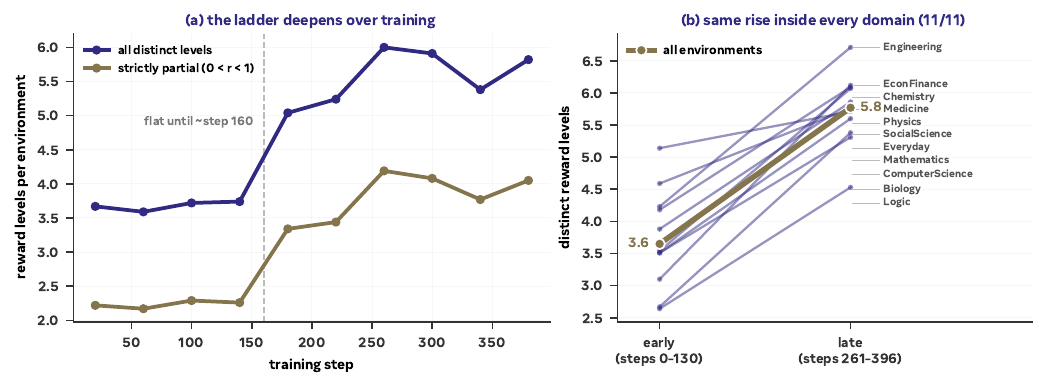}
\caption{\textbf{Reward granularity increases over training.} Left: mean distinct reward levels per environment, including strictly partial levels. Right: early-to-late change in distinct levels overall and by domain (canonical 30B games run).}
\label{fig:reward-granularity}
\end{figure}

\FloatBarrier

Figure~\ref{fig:skill-mastery} gives the per-skill decomposition of the \RA{}'s gains behind Table~\ref{tab:quality-over-training}.

\begin{figure}[htbp]
\centering
\includegraphics[width=\textwidth]{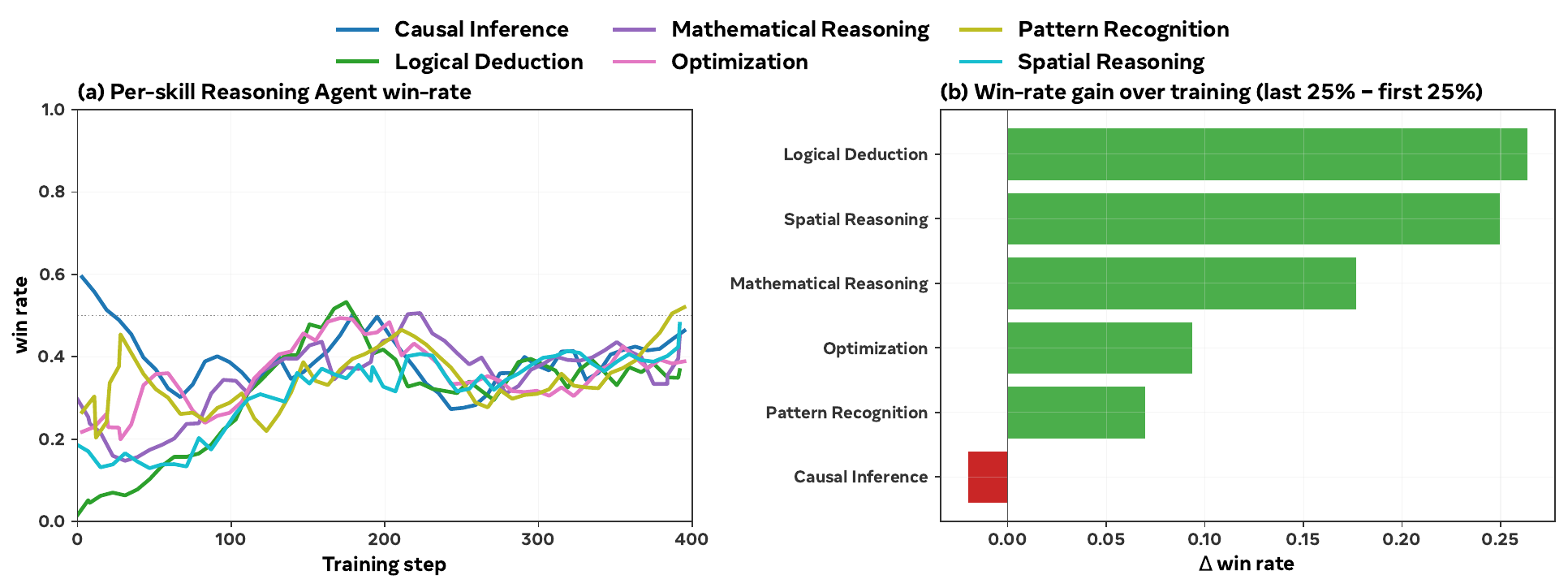}
\caption{\textbf{Per-skill learning} (canonical \spade{}-30B run). \textbf{(a)}~Per-skill \RA{} win rate over training (EMA over ${\sim}26$ logged points/skill). \textbf{(b)}~Win-rate gain (last 25\% minus first 25\%): Logical Deduction and Spatial Reasoning improve most; Causal Inference, which starts high, declines slightly.}
\label{fig:skill-mastery}
\end{figure}

\FloatBarrier

\subsubsection{Environment Diversity}
\label{app:qa-diversity-metrics}
\label{app:qa-diversity-token}

\paragraph{Metric and calibration.} Within each training step's batch we report the Vendi Score~\citep{friedman2023vendiscorediversityevaluation}, the effective number of distinct items in a sample,
\begin{equation}
\mathrm{VS}(K) \;=\; \exp\Big(-\textstyle\sum_{i} \lambda_i \log \lambda_i\Big),
\end{equation}
where $\lambda_i$ are the eigenvalues of the normalized cosine-similarity matrix $K/n$ over SBERT embeddings. The score is calibrated: batches of 24 identical environments score $1.0$, single-domain batches score $14.8$ to $16.5$, and mixed batches drawn across the whole run score $21.3$ (the practical ceiling for $n{=}24$).

\paragraph{Diversity is maintained at the ceiling.} Per-step Vendi is flat for the whole run, $20.8$ (steps 0 to 40) versus $21.0$ (steps 340 to 396), directly at the mixed-population ceiling; mean pairwise cosine distance is likewise flat ($0.94$). Novelty stays saturated (97 of 100 logged steps consist entirely of never-seen initial states), and all 13 domains of an LLM-labeled taxonomy are present from step 0 with $8.4$ per batch of 24 on average (Mathematics 30\,\%, Physics 20\,\%, Medicine 11\,\%, Chemistry 10\,\%, CS 7\,\%, Engineering 6\,\%, other 16\,\%). The distribution is also stationary: after removing seed-document reuse, a linear probe cannot separate early-half from late-half environments (5-fold AUC $0.551 \pm 0.024$).

\paragraph{Corpus grounding drives diversity.} The ablations quantify the matched-step generation contrast of Appendix~\ref{app:ext-qualitative} (Table~\ref{tab:diversity-ablation}, Figure~\ref{fig:diversity-curves}). Diversity remains at the mixed-population reference level when memory alone is removed and when \ED{} training and memory are jointly removed, as long as the corpus is retained ($0.70$ for the joint control); without the corpus, it collapses to $0.04$. These configuration-level contrasts associate corpus grounding with preserved diversity, but do not isolate the independent effect of \ED{} training from memory. The windowed dynamics (Figure~\ref{fig:diversity-curves}) sharpen this further: the no-corpus run starts near-collapsed (Vendi ${\sim}1.6$), RL exploration briefly lifts it to a peak of ${\sim}5.2$ around step 100, and continued optimization then re-collapses it to ${\sim}2$. That run's best evaluation checkpoint (step 111, Table~\ref{tab:ablation-breakdown}) coincides with its diversity peak; once the environment stream re-collapses, downstream evaluation stops improving. The corpus-grounded runs hold the ceiling for their full run lengths under the same optimization pressure (full \spade{} for all 400 steps).

\begin{table}[h]
\centering
\small
\begin{tabular}{lcccc}
\toprule
\textbf{Run} & \textbf{Sample} & \textbf{Vendi/$n$} $\uparrow$ & \textbf{Mean pairwise dist.} $\uparrow$ & \textbf{Reading} \\
\midrule
\spade{} (full) & 3{,}310 & 0.68 & 0.94 & diverse \\
\quad w/o memory & 3{,}746 & 0.69 & 0.94 & diverse \\
\quad w/o \ED{} training, w/o memory & 4{,}929 & \textbf{0.70} & 0.94 & diverse \\
\quad w/o corpus & 866 & \textbf{0.04} & 0.34 & collapsed \\
\bottomrule
\end{tabular}
\caption{\textbf{Corpus ablations collapse environment diversity.} Vendi Score per 100 environments (SBERT embeddings; mean over 20 balanced draws) on the verified-matched 30B-A3B runs: identical backbone, skill set, and \ED{} system prompt, differing in the listed ablations. The identical analysis under TF-IDF/LSA embeddings reproduces the corpus/no-corpus separation ($0.53 / 0.58 / 0.48 / 0.05$). A fifth run with a static pre-generated environment pool (\RA{}-only training, no live \ED{}) is excluded as a different generation protocol; its fixed pool measures Vendi/$n = 0.12$.}
\label{tab:diversity-ablation}
\end{table}

\FloatBarrier

\begin{figure}[t]
    \centering
    \includegraphics[width=0.6\textwidth]{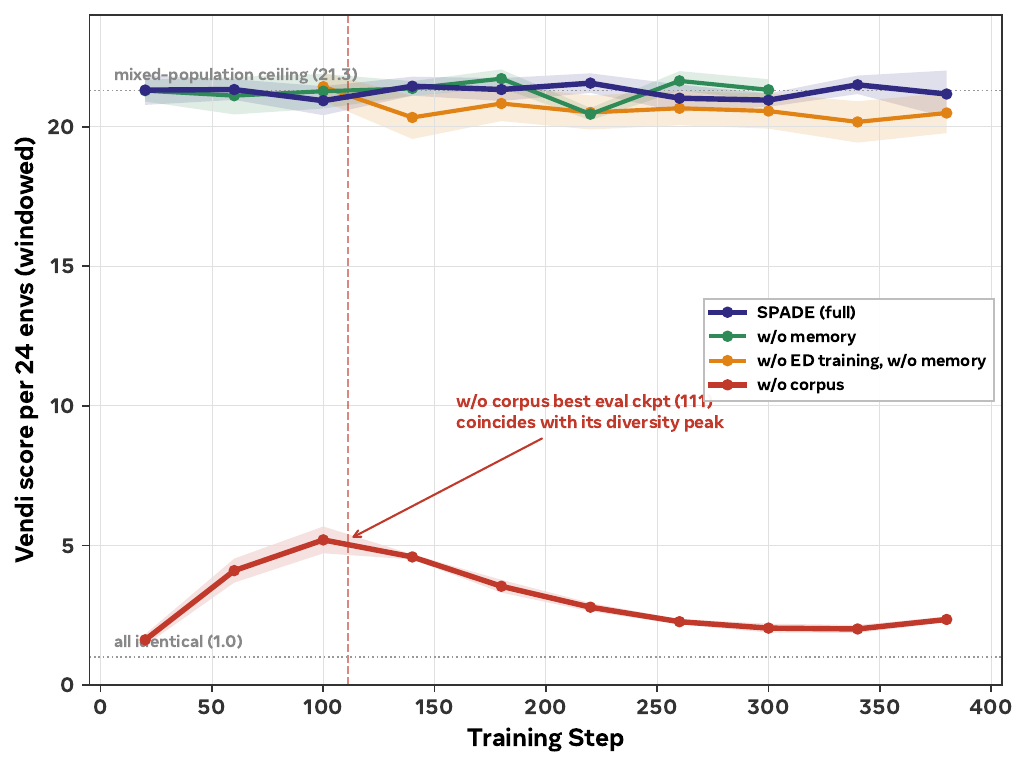}
    \caption{\textbf{Environment-diversity dynamics across the verified-matched runs.} Windowed Vendi score (40-step windows, subsampled to exactly 24 environments; mean $\pm$ s.d.\ over 12 draws). The corpus-grounded variants hold the mixed-population reference level for their full run lengths; the no-corpus run starts near-collapsed, peaks at Vendi ${\sim}5.2$ near step 100, where its best evaluation checkpoint (111) also falls, and re-collapses as optimization continues. Environment counts differ across runs with run length and acceptance rate; the windowed score uses fixed 24-environment subsamples, so counts do not bias the curves.}
    \label{fig:diversity-curves}
\end{figure}

\paragraph{Scope and limitations.} Per-environment scalar rewards are not retained in the public run log, so learnability is measured at the step level (win rate, learnable-band fraction) rather than per environment; the LLM rubric measures absolute difficulty and coherence rather than fit to the current agent; and executability is measured on raw generations, upstream of the pipeline's sanitizer. Diversity numbers use batch size 24 with larger boot-step batches subsampled for comparability.

\FloatBarrier

\clearpage
\section[Extended Qualitative Analysis]{Extended Qualitative Analysis\backtotoc}
\label{app:ext-qualitative}

\begin{findingbox}
\textbf{Summary.} Analyzing all 3{,}310 environments generated by the canonical \spade{}-30B run, each a distinct program, seeded by one of 1{,}513 distinct documents drawn from the 15k-document corpus, four results emerge. \textbf{(1)~Quality:} the share of environments in the learnable band nearly doubles ($0.16 \to 0.31$) while well-posedness, verifiability, and program richness hold constant. \textbf{(2)~Diversity:} semantic diversity stays at the mixed-population ceiling for all 400 training steps, and the stream keeps producing never-seen environments to the end. \textbf{(3)~Mechanism:} the diversity comes from corpus grounding; removing the corpus collapses generation to a single task family (Vendi/$n$: $0.68$ vs.\ $0.04$), whereas removing memory alone or jointly removing \ED{} training and memory leaves it intact. \textbf{(4)~Jointly:} difficulty targeting improves within an unchanged task distribution, so the curriculum sharpens in difficulty while holding its breadth.
\end{findingbox}

We analyze all 3{,}310 environments generated by the canonical \spade{}-30B run at the level of their content: the seed document each was generated from, the generated program, and the opening observation. Every generated environment is a distinct program (3{,}310 of 3{,}310 unique program hashes; 2{,}388 distinct initial states; 1{,}513 distinct seed documents), so all analyses operate on content, never on surface identifiers such as class names (585 distinct; 58\,\% keep the scaffold default despite the prompt's naming instruction). This appendix presents the qualitative evidence; the matching proxy metrics and training dynamics appear in Appendix~\ref{app:ext-quantitative}.

\subsection{Games}
\label{app:qual-games}

For the games setting we give additional privileged-hint examples and trace how a single environment evolves from early to late training.

\subsubsection{Additional Privileged-Hint Examples}
\label{app:hint-examples}

Figure~\ref{fig:hint-examples-appendix} expands the main-text selection to four positive-regret task--hint pairs from the same canonical 30B games run, spanning the regret range from large frontier gaps to a small mastery-regime gap. Each pair is drawn from one logged game record: the task description is a concise summary of its reset observation, while the hint is a verbatim excerpt of the substantive guidance appended to the with-hint arm.

\begin{figure*}[htbp]
\centering
\includegraphics[width=\textwidth]{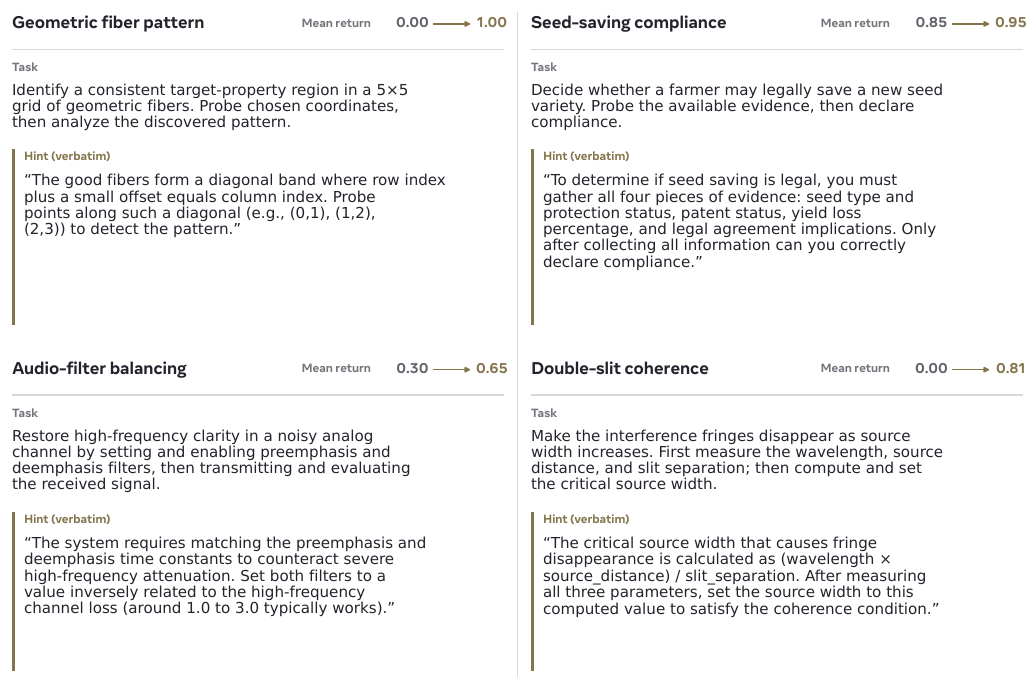}
\caption{\textbf{Task context makes hint utility legible.} Four same-record, positive-regret task--hint pairs from the canonical 30B games run. The task summaries retain the goal, hidden information, and usable interaction while removing generic runtime scaffolding. Hint excerpts omit only the standardized answer-format sentence and runtime wrapper. In each header, the two values report the mean return without hint / with hint.}
\label{fig:hint-examples-appendix}
\end{figure*}
\FloatBarrier

\subsubsection{Environment Evolution over Training}
\label{app:qa-quality-example}

The cards below show one early and one late environment in the exact format produced by the \ED{} (cf.\ the minimal example in Appendix~\ref{app:minimal-env}): a seed document grounds the task; the generated program implements a Gym-style interface with hidden state and a verifiable terminal answer; the opening observation states the goal and action space. The corpus's DCLM slice is filtered web text and includes off-topic documents; the early card's personal-finance seed is one such document. The visible shift over training is toward structured, instrumented ``laboratory'' tasks, while program size, functional hidden state, and verifiability hold steady (about half the early card's raw state variables are write-only narrative fields): the added difficulty comes from state-gated interaction and finer reward grading, not from longer programs.

\begin{envcard}{Training step 0 (early): \texttt{CarOwnershipDisputeEnv} \hfill 370 lines, 29 state variables}
\seedline{I have a car, both me and by grandpa are on the title. As I cannot get loan on my own, my grandpa is my cosigner. My grandpa is almost 90, and chances are he may not be around till the loan is paid}
\begin{minted}{python}
class CarOwnershipDisputeEnv:
    def __init__(self, max_turns=12, seed=None):
        self.max_turns = max_turns
        self.seed = seed
        self.reset(seed)

    def reset(self, seed=None) -> Tuple[str, dict]:
        self.turn_count = 0
        self.rng = random.Random(seed if seed is not None else 42)
        
        # Hidden state: the actual legal situation is not fully visible at start
        self.loan_balance = 12000
        self.car_value = 6000
        self.is_paid_off = False
        self.grandpa_alive = True
        self.grandpa_will_made = self.rng.choice([True, False])  # Hidden: whether will exists
\end{minted}
\vspace{-2pt}
{\footnotesize\sffamily\bfseries Opening observation (what the \RA{} sees):} \obsblock{You are in a tense family situation. Your grandfather is on the title and co-signer of your car loan. You've made consistent payments for 3 years, but still owe \$12,000 on a car worth only \$6,000. Your aunt has threatened to take the car when your grandfather passes. You're unsure if the will exists or what it says. You know your grandfather is still alive and mentally competent. You have not yet consulted a lawyer ...}
\end{envcard}

\begin{envcard}{Training step 384 (late): \texttt{ThermodynamicCycleManipulationLabEnv} \hfill 376 lines, 19 state variables}
\seedline{Subject: physics Problem: A gas has N atoms in volume V\_0 at temperature T\_0. The gas is heated at constant volume up to temperature 3T\_0, then allowed to expand isothermally up to volume}
\begin{minted}{python}
class ThermodynamicCycleManipulationLabEnv:
    def __init__(self, max_turns=12, seed=None):
        self.max_turns = max_turns
        self.seed = seed
        self.reset(seed)

    def reset(self, seed=None) -> Tuple[str, dict]:
        self.turn_count = 0
        self._seed = seed if seed is not None else random.randint(1, 10000)
        random.seed(self._seed)

        # Hidden: thermodynamic system parameters (not directly visible)
        # Represents a monatomic ideal gas undergoing a three-step cycle
        self.n_atoms = random.randint(100, 500)  # N: number of atoms
        self.initial_volume = random.uniform(1.0, 5.0)  # V0: initial volume in m3
        self.initial_temperature = random.uniform(100, 300)  # T0: initial temperature in K
\end{minted}
\vspace{-2pt}
{\footnotesize\sffamily\bfseries Opening observation (what the \RA{} sees):} \obsblock{You enter a high-precision thermodynamics lab. Before you is a transparent cylindrical chamber containing a cloud of gas atoms. The chamber is sealed and connected to external controls:

Available actions: - activate heating (starts constant-volume heating) - initiate expansion (starts isothermal expansion) - begin cooling (starts constant-pressure cooling) - measure current entropy (reveals cumulative ...}
\end{envcard}

The complete source of both environments (369 and 376 lines as generated) is reproduced verbatim in Appendix~\ref{app:env-full-source}.

\subsubsection{Matched-Step Generation Contrast}
\label{app:qa-diversity-example}

The clearest qualitative view of diversity is a matched-step contrast against the no-corpus ablation. At training steps 290 to 312, \spade{} generates environments spanning probability theory, quantum gate tomography, volcanology, hematology, radar signal processing, and hypoelliptic operators, while the no-corpus run generates the \emph{same} rotating-maze navigation task 41 times in a row, varying only the grid layout. Grounding each generation in a sampled corpus document supplies the stream of new task material.

\begin{contrastbox}{Matched-step generation contrast (training steps 290--312)}
{\footnotesize
\textbf{\spade{} (with corpus), one batch at step 296 (24 environments, 24 distinct programs):}\\
\texttt{ProbabilitySpaceExplorationEnv}, \texttt{QuantumGateSetTomographyLabEnv}, \texttt{OptimizationLabEnv}, \texttt{NearDoublesMathLabEnv}, \texttt{RadarDopplerAnalysisLabEnv}, \texttt{MinimaxStrategyLabEnv}, \texttt{HypoellipticOperatorLabEnv}, \texttt{TrigonometricApproximationLabEnv}, plus 16 default-named environments, each a distinct program on a distinct seed document (pattern recognition, genetics, circuit analysis, ...).\\[6pt]
\textbf{w/o corpus, steps 290--312 (41 environments):}\\
\texttt{RotatingMazeEnv}, \texttt{RotatingMazeEnv}, \texttt{RotatingMazeEnv}, \ldots (all 41 generations are the same rotating-obstacle grid-navigation family; only the maze layout and minor rule text vary).}
\par\smallskip\noindent{\footnotesize Counts are accepted generations; the two runs' acceptance rates differ over this window.}
\end{contrastbox}

\subsection{Tool Use}
\label{app:qual-tooluse}

The tool-use \ED{} emits environments in a different register from the games setting: each is a simulated API domain (banking, retail orders, support tickets, telecom accounts, smart-home control) in which a user issues one atomic instruction at a time, and the environment advances only when a programmatic criterion over the hidden state confirms the current instruction is complete. The cards below reproduce one early and one late environment exactly as generated: a code-corpus snippet seeds the generation, the \ED{} builds an unrelated everyday domain on top of it, and each user instruction is paired with a checkable criterion over the backend state (in the early card, coupling the tool call's provenance with its state effect). Criteria check the salient state effect; qualifiers in the instruction text, such as which account pays an invoice, are not always bound, an artifact the validation checks do not catch.

\begin{envcard}{Training step 8 (early): \texttt{BankingMultiTurnEnv} \hfill 286 lines, 8 tools}
\seedline{package com.mayo.client.mayoclientapi.persistence.repository; import com.fasterxml.jackson.databind.ObjectMapper; import com.google.api.core.ApiFuture; import com.google.cloud.Timestamp; import com.google.cloud.firestore.*;}
\begin{minted}[breaklines,fontsize=\scriptsize]{python}
self._user_messages = [
    "List all accounts I have.",
    "Find the checking account and check its current balance.",
    "Transfer $1000 from my savings account to my checking account.",
    "Now pay the unpaid invoice with ID INV2024-003 using my checking account.",
    "Finally, block the card ending in 2468 because it was lost."
]
# Define criteria that must be met for each step to advance
self._message_criteria = [
    lambda s: s['last_searched_account'] is not None and s['last_searched_account'] in s['accounts'],
    lambda s: s['last_query_result'] is not None and s['last_query_result']['account_id'] == 'ACC112233' and s['last_query_result']['balance'] > 0,
    lambda s: s['last_transfer_from'] == 'ACC445566' and s['last_transfer_to'] == 'ACC112233' and s['last_transfer_amount'] == 1000.0,
    lambda s: s['last_invoice_paid'] == 'INV2024-003',
    lambda s: s['last_card_status_change'] == 'blocked'
]
\end{minted}
\vspace{-2pt}
{\footnotesize\sffamily\bfseries Opening observation (what the \RA{} sees):} \obsblock{List all accounts I have.}
\end{envcard}

\begin{envcard}{Training step 392 (late): \texttt{CustomerSupportTicketWorkflowEnv} \hfill 200 lines, 6 tools}
\seedline{import json import os import sqlite3 import openai from pymongo import MongoClient from pymongo.errors import OperationFailure import tiktoken}
\begin{minted}[breaklines,fontsize=\scriptsize]{python}
self._user_messages = [
    "Find all high-priority tickets that are still in 'new' status.",
    "Assign the ticket with ID TICKET-001 to agent_01.",
    "Add a note to ticket TICKET-001 stating 'Customer confirmed issue is reproducible.'",
    "Update the status of ticket TICKET-001 to 'in_progress'.",
    "Finally, confirm that ticket TICKET-001 has been successfully resolved by marking it as 'resolved'."
]
self._message_criteria = [
    lambda s: len([t for t in s['tickets'] if t['priority'] == 'high' and t['status'] == 'new']) == 2,
    lambda s: any(t['id'] == 'TICKET-001' and t['assigned_to'] == 'agent_01' for t in s['tickets']),
    lambda s: any(t['id'] == 'TICKET-001' and any('Customer confirmed issue is reproducible' in n for n in t['notes']) for t in s['tickets']),
    lambda s: any(t['id'] == 'TICKET-001' and t['status'] == 'in_progress' for t in s['tickets']),
    lambda s: any(t['id'] == 'TICKET-001' and t['status'] == 'resolved' for t in s['tickets'])
]
\end{minted}
\vspace{-2pt}
{\footnotesize\sffamily\bfseries Opening observation (what the \RA{} sees):} \obsblock{Find all high-priority tickets that are still in 'new' status.}
\end{envcard}

Across the full 30B run the structural profile of generated environments is stationary, mirroring the distributional stationarity of the games setting (Appendix~\ref{app:qa-diversity-metrics}): mean program length moves from $247$ to $226$ lines between the first and last training band, tools per environment from $7.6$ to $5.9$, while instructions per environment ($4.7$) and the logical complexity of the per-step criteria ($1.9$ to $2.0$ conditions per criterion) stay flat, and roughly one environment in five ships a guarded failure path (locked cards, unavailable agents, denied requests) that the \RA{} must detect and recover from. What changes over training is not the scaffold but the instantiations rotated through it: the \ED{} holds a stable inventory of everyday API domains while regenerating fresh states, identifiers, and instruction sequences each round, exactly the regime the corpus-grounding and memory components are designed to sustain.

\subsection{Generated Environment Gallery}
\label{app:gallery}

This gallery collects representative environments in the exact format the \ED{} emits, starting from a minimal multi-turn example.

\subsubsection{A Minimal Generated Environment}
\label{app:minimal-env}

Listing~\ref{lst:minimal-env} (reproduced from Listing~\ref{lst:gym-example}) shows a minimal multi-turn environment in the format produced by \spade{}'s \ED{}: a single Python class exposing the Gym-style \texttt{reset()}/\texttt{step()} interface described in Section~\ref{sec:mdp-prelim}, with internal state, per-step feedback, and a verifiable terminal reward. This one worked example stands in for the gallery: the same interface and training pipeline also carry single-turn answer-grading tasks and multi-turn tool-use interactions.

\begin{listing}[!ht]
\begin{minted}{python}
import random

class WordleEnv:
    WORDS = ["spade", "trace", "lemon", "graph"]    # truncated

    def reset(self, seed=None):                     # initial state
        self.target = random.Random(seed).choice(self.WORDS)
        self.turns_left = 6
        return "Guess a 5-letter word in 6 tries.", {}

    def step(self, guess):                          # (s', r, term, trunc, info)
        self.turns_left -= 1
        left = [t for g, t in zip(guess, self.target) if g != t]
        fb = ""
        for g, t in zip(guess, self.target):
            if g == t: fb += "G"
            elif g in left: fb += "Y"; left.remove(g)
            else: fb += "-"
        if fb == "GGGGG":
            return fb, 1.0, True, False, {}
        return fb, 0.0, False, self.turns_left == 0, {}
\end{minted}
\caption{\textbf{A minimal \spade{}-generated environment.} A Wordle-flavored multi-turn deduction game implemented as a single Python class with Gym-style \texttt{reset()}/\texttt{step()} interface. The episode is stateful (\texttt{self.target}, \texttt{self.turns\_left}), ends either on a correct guess (terminated, reward $1$) or at the $6$-turn limit (truncated, reward $0$).}
\label{lst:minimal-env}
\end{listing}

\subsubsection{Full Source of Two Exemplar Generated Environments}
\label{app:env-full-source}

The two environments carded in Appendix~\ref{app:qa-quality-example} are reproduced here in full, exactly as emitted by the \ED{} (lightly transliterated to ASCII: typographic quotes, dashes, arrows, and sub-/superscripts). Each is a complete, self-contained program: hidden state initialized in \texttt{reset()}, a multi-turn \texttt{step()} loop with intermediate feedback, and a verifiable terminal reward. Episode return is the final-step reward, clipped to $[-1,1]$ and $0$ if the episode never terminates; the intermediate per-step rewards visible below are progress feedback, not summed.

\paragraph{Training step 0: \texttt{CarOwnershipDisputeEnv} (370 lines).}
\begin{minted}{python}
import random
import re
from typing import Tuple, Dict

class CarOwnershipDisputeEnv:
    def __init__(self, max_turns=12, seed=None):
        self.max_turns = max_turns
        self.seed = seed
        self.reset(seed)

    def reset(self, seed=None) -> Tuple[str, dict]:
        self.turn_count = 0
        self.rng = random.Random(seed if seed is not None else 42)
        
        # Hidden state: the actual legal situation is not fully visible at start
        self.loan_balance = 12000
        self.car_value = 6000
        self.is_paid_off = False
        self.grandpa_alive = True
        self.grandpa_will_made = self.rng.choice([True, False])  # Hidden: whether will exists
        self.grandpa_will_content = self.rng.choice(['gives car to OP', 'gives car to aunt', 'no mention']) if self.grandpa_will_made else None
        self.grandpa_owns_title = True  # Co-owner
        self.op_owns_title = True
        self.loan_coborrower = True
        self.loan_cosigner = True
        self.loan_lender_has_lien = True
        self.title_in_name = True  # Title exists in both names
        self.loan_docs_contain_death_clause = self.rng.choice([True, False])
        self.aunt_threatening = self.rng.choice([True, False])
        self.op_paying_payments = True
        self.op_has_good_payment_history = True
        self.op_credit_score = 580  # Poor
        self.op_has_refinancing_attempt = False
        self.op_has_consulted_lawyer = False
        self.grandpa_is_competent = True
        self.grandpa_will_likely_sign = self.rng.choice([True, False])
        self.grandpa_has_signed_transfer = False
        self.grandpa_has_refinanced = False
        self.grandpa_has_gifted_title = False
        self.legal_right_to_car = True  # OP has right due to ownership and payments

        # Goal: Ensure OP keeps the car after Grandpa's death
        # This requires multiple steps: verify legal rights, assess will, consider refinancing, or get title transfer.
        
        # Initial observation - hidden state, partial information
        obs = (
            "You are in a tense family situation. Your grandfather is on the title and co-signer of your car loan. "
            "You've made consistent payments for 3 years, but still owe $12,000 on a car worth only $6,000. "
            "Your aunt has threatened to take the car when your grandfather passes. "
            "You're unsure if the will exists or what it says. "
            "You know your grandfather is still alive and mentally competent. "
            "You have not yet consulted a lawyer or attempted refinancing. "
            "The loan documents may have a clause about death of a co-signer.\n\n"
            "Available actions:\n"
            "  - check loan documents\n"
            "  - talk to grandpa about the will\n"
            "  - talk to grandpa about transferring title\n"
            "  - research refinancing options\n"
            "  - consult a lawyer\n"
            "  - check car title status\n"
            "  - review state laws on joint ownership\n\n"
            "Reply with your next action as \\boxed{<action>}."
        )
        return obs, {}

    def step(self, action: str) -> Tuple[str, float, bool, bool, dict]:
        self.turn_count += 1
        truncated = self.turn_count >= self.max_turns
        m = re.search(r'\\boxed\{(.+?)\}', action)
        cmd = (m.group(1) if m else action).strip().lower()

        reward = 0.0
        terminated = False
        info = {}

        # Parse and execute command
        if cmd == "check loan documents":
            if self.loan_docs_contain_death_clause:
                obs = (
                    "You reviewed the loan documents. There is a clause: 'If either borrower dies, the remaining balance becomes due immediately.'\n"
                    "This means if your grandfather dies, you may have to pay off the full $12,000 immediately unless refinanced.\n\n"
                    "Next steps:\n"
                    "  - talk to grandpa about transferring title\n"
                    "  - research refinancing options\n"
                    "  - consult a lawyer\n"
                    "Reply with \\boxed{<action>}."
                )
                reward = 0.2
            else:
                obs = (
                    "You reviewed the loan documents. There is no death clause. The loan remains active and payable by you regardless of your grandfather's passing.\n"
                    "This is good news - you are not immediately liable for the full balance.\n\n"
                    "Next steps:\n"
                    "  - talk to grandpa about the will\n"
                    "  - check car title status\n"
                    "  - talk to grandpa about transferring title\n"
                    "Reply with \\boxed{<action>}."
                )
                reward = 0.2

        elif cmd == "talk to grandpa about the will":
            if not self.grandpa_alive:
                obs = (
                    "Your grandfather has passed. You cannot talk to him now.\n"
                    "You must now rely on the will or state law.\n\n"
                    "Next steps:\n"
                    "  - check if a will exists\n"
                    "  - consult a lawyer\n"
                    "  - verify car title status\n"
                    "Reply with \\boxed{<action>}."
                )
                reward = 0.1
            elif self.grandpa_will_made:
                obs = (
                    "You talked to your grandfather. He confirms he has a will. He says he intends to leave the car to you.\n"
                    "He will sign the will soon.\n\n"
                    "Next steps:\n"
                    "  - verify the will's content\n"
                    "  - talk to grandpa about transferring title\n"
                    "  - consult a lawyer\n"
                    "Reply with \\boxed{<action>}."
                )
                reward = 0.3
            else:
                obs = (
                    "You talked to your grandfather. He says he hasn't made a will yet. He is open to writing one.\n"
                    "He is willing to transfer the car to you before he dies.\n\n"
                    "Next steps:\n"
                    "  - talk to grandpa about transferring title\n"
                    "  - consult a lawyer\n"
                    "  - research title transfer process\n"
                    "Reply with \\boxed{<action>}."
                )
                reward = 0.2

        elif cmd == "talk to grandpa about transferring title":
            if not self.grandpa_alive:
                obs = (
                    "Your grandfather has passed. You cannot talk to him now.\n"
                    "You must now rely on the will or state law.\n\n"
                    "Next steps:\n"
                    "  - check if a will exists\n"
                    "  - consult a lawyer\n"
                    "  - verify car title status\n"
                    "Reply with \\boxed{<action>}."
                )
                reward = 0.1
            elif self.grandpa_will_made and self.grandpa_will_content == "gives car to OP":
                obs = (
                    "You talked to your grandfather. He confirms he has a will leaving the car to you.\n"
                    "He agrees to transfer the title to you now.\n"
                    "You can now begin the process to remove his name from the title.\n\n"
                    "Next steps:\n"
                    "  - check car title status\n"
                    "  - research title transfer process\n"
                    "  - consult a lawyer\n"
                    "Reply with \\boxed{<action>}."
                )
                reward = 0.4
                self.grandpa_has_gifted_title = True
            elif self.grandpa_will_made and self.grandpa_will_content == "gives car to aunt":
                obs = (
                    "You talked to your grandfather. He confirms he has a will leaving the car to your aunt.\n"
                    "He says he cannot change it now, but he is willing to transfer the title to you as a gift.\n"
                    "This may be legally possible, but could trigger tax or probate issues.\n\n"
                    "Next steps:\n"
                    "  - consult a lawyer\n"
                    "  - research gift transfer options\n"
                    "  - check if lender allows title change\n"
                    "Reply with \\boxed{<action>}."
                )
                reward = 0.2
                self.grandpa_has_gifted_title = True
            elif not self.grandpa_will_made and self.grandpa_has_refinanced:
                obs = (
                    "You talked to your grandfather. He has already refinanced the loan in his name only.\n"
                    "The loan is now in his name, and the car is titled solely in your name.\n"
                    "This secures the car for you.\n\n"
                    "Next steps:\n"
                    "  - verify title transfer with DMV\n"
                    "  - confirm lender has released lien\n"
                    "Reply with \\boxed{<action>}."
                )
                reward = 0.5
            elif not self.grandpa_will_made and self.grandpa_will_likely_sign:
                obs = (
                    "You talked to your grandfather. He agrees to transfer the title to you.\n"
                    "He will sign the necessary documents soon.\n"
                    "This is the most secure way to ensure you keep the car.\n\n"
                    "Next steps:\n"
                    "  - check car title status\n"
                    "  - consult a lawyer\n"
                    "  - begin title transfer process\n"
                    "Reply with \\boxed{<action>}."
                )
                reward = 0.3
                self.grandpa_has_gifted_title = True
            else:
                obs = (
                    "You talked to your grandfather. He agrees to transfer the title to you.\n"
                    "He is willing to sign the documents.\n"
                    "You can now proceed with the title transfer.\n\n"
                    "Next steps:\n"
                    "  - check car title status\n"
                    "  - consult a lawyer\n"
                    "  - begin title transfer process\n"
                    "Reply with \\boxed{<action>}."
                )
                reward = 0.3
                self.grandpa_has_gifted_title = True

        elif cmd == "research refinancing options":
            if self.op_credit_score >= 650 and self.op_paying_payments and self.op_has_good_payment_history:
                obs = (
                    "You researched refinancing. Your credit is still poor, but your payment history is strong.\n"
                    "Some credit unions may consider you for refinancing.\n"
                    "You should apply to a local credit union.\n\n"
                    "Next steps:\n"
                    "  - apply to credit union for refinancing\n"
                    "  - talk to grandpa about transferring title\n"
                    "  - consult a lawyer\n"
                    "Reply with \\boxed{<action>}."
                )
                reward = 0.2
            else:
                obs = (
                    "You researched refinancing. Your credit score is still too low to qualify.\n"
                    "The loan is underwater, making refinancing difficult.\n"
                    "You may need to wait until your credit improves or your grandfather transfers the title.\n\n"
                    "Next steps:\n"
                    "  - talk to grandpa about transferring title\n"
                    "  - consult a lawyer\n"
                    "  - check car title status\n"
                    "Reply with \\boxed{<action>}."
                )
                reward = 0.1

        elif cmd == "consult a lawyer":
            if not self.op_has_consulted_lawyer:
                obs = (
                    "You consulted a lawyer. They confirmed:\n"
                    "- You have a legal right to the car as long as you keep paying.\n"
                    "- The title is in both names, so you can't sell it without his consent.\n"
                    "- A will can override joint ownership, but it's not automatic.\n"
                    "- You should get the title transferred to you now.\n"
                    "- A gift or transfer is possible while he's alive.\n\n"
                    "Next steps:\n"
                    "  - talk to grandpa about transferring title\n"
                    "  - check car title status\n"
                    "  - begin transfer process\n"
                    "Reply with \\boxed{<action>}."
                )
                reward = 0.4
                self.op_has_consulted_lawyer = True
            else:
                obs = (
                    "You've already consulted a lawyer. They advised that you should transfer the title to your name now.\n"
                    "This is the best way to secure the car.\n\n"
                    "Next steps:\n"
                    "  - talk to grandpa about transferring title\n"
                    "  - check car title status\n"
                    "  - begin transfer process\n"
                    "Reply with \\boxed{<action>}."
                )
                reward = 0.1

        elif cmd == "check car title status":
            if self.grandpa_has_gifted_title:
                obs = (
                    "You checked the title status. The title has been transferred to you. Your grandfather signed it.\n"
                    "The car is now in your name only.\n"
                    "You are safe from the aunt's threat.\n\n"
                    "You have secured the car.\n"
                    "Goal achieved! Reward: 1.0"
                )
                reward = 1.0
                terminated = True
            elif self.grandpa_has_refinanced:
                obs = (
                    "You checked the title status. The loan is now in your grandfather's name only.\n"
                    "The title is still in both names, but the lender has released the lien.\n"
                    "You must now transfer the title to your name.\n\n"
                    "Next steps:\n"
                    "  - talk to grandpa about transferring title\n"
                    "  - consult a lawyer\n"
                    "  - begin title transfer process\n"
                    "Reply with \\boxed{<action>}."
                )
                reward = 0.3
            else:
                obs = (
                    "You checked the title status. The title is still in both your names.\n"
                    "The lender holds a lien until the loan is paid off.\n"
                    "You cannot sell or fully transfer the car without their consent.\n\n"
                    "Next steps:\n"
                    "  - talk to grandpa about transferring title\n"
                    "  - consult a lawyer\n"
                    "  - research title transfer process\n"
                    "Reply with \\boxed{<action>}."
                )
                reward = 0.1

        elif cmd == "review state laws on joint ownership":
            if random.Random(str(self.grandpa_will_content)).random() < 0.5:  # State law fixed per episode
                obs = (
                    "You reviewed state laws. In your state, joint ownership means the surviving co-owner automatically inherits the car.\n"
                    "Even if your grandfather's will says otherwise, the car goes to you.\n"
                    "This is a major advantage.\n\n"
                    "Next steps:\n"
                    "  - talk to grandpa about transferring title\n"
                    "  - consult a lawyer\n"
                    "  - check car title status\n"
                    "Reply with \\boxed{<action>}."
                )
                reward = 0.3
            else:
                obs = (
                    "You reviewed state laws. In your state, joint ownership does not automatically transfer the car.\n"
                    "The car becomes part of the estate and is subject to the will.\n"
                    "You must ensure your grandfather leaves the car to you in his will.\n\n"
                    "Next steps:\n"
                    "  - talk to grandpa about the will\n"
                    "  - consult a lawyer\n"
                    "  - research will options\n"
                    "Reply with \\boxed{<action>}."
                )
                reward = 0.2

        else:
            # Unrecognized command
            obs = (
                "Invalid action. Please choose from:\n"
                "  - check loan documents\n"
                "  - talk to grandpa about the will\n"
                "  - talk to grandpa about transferring title\n"
                "  - research refinancing options\n"
                "  - consult a lawyer\n"
                "  - check car title status\n"
                    "  - review state laws on joint ownership\n\n"
                "Reply with \\boxed{<action>}."
            )
            reward = 0.0

        # Check if goal is reached: car is in OP's name only
        if self.grandpa_has_gifted_title or (self.grandpa_has_refinanced and self.op_owns_title and not self.grandpa_owns_title):
            obs = (
                "Congratulations! You have successfully secured the car.\n"
                "The title has been transferred to you, and your grandfather is no longer on it.\n"
                "The threat from your aunt is now irrelevant.\n"
                "You are in full control of the vehicle.\n"
                "Goal achieved! Reward: 1.0"
            )
            reward = 1.0
            terminated = True

        return obs, reward, terminated, truncated, info

    def solution(self) -> str:
        return (
            "1. Talk to grandpa about the will.\n"
            "2. If will exists and leaves car to OP, proceed to transfer.\n"
            "3. If no will, talk to grandpa about transferring title.\n"
            "4. Consult a lawyer to confirm legal rights.\n"
            "5. Check car title status.\n"
            "6. Transfer title via gift or refinancing.\n"
            "7. Confirm lender has released lien.\n"
            "8. Ensure car is titled solely in OP's name."
        )

    def close(self):
        pass
\end{minted}

\paragraph{Training step 384: \texttt{ThermodynamicCycleManipulationLabEnv} (376 lines).}
\begin{minted}{python}
"""Interactive Thermodynamic Cycle Manipulation Lab: A Multi-Turn Mathematical Reasoning Environment"""

import random
import re
import math
from typing import Tuple, Dict

class ThermodynamicCycleManipulationLabEnv:
    def __init__(self, max_turns=12, seed=None):
        self.max_turns = max_turns
        self.seed = seed
        self.reset(seed)

    def reset(self, seed=None) -> Tuple[str, dict]:
        self.turn_count = 0
        self._seed = seed if seed is not None else random.randint(1, 10000)
        random.seed(self._seed)

        # Hidden: thermodynamic system parameters (not directly visible)
        # Represents a monatomic ideal gas undergoing a three-step cycle
        self.n_atoms = random.randint(100, 500)  # N: number of atoms
        self.initial_volume = random.uniform(1.0, 5.0)  # V0: initial volume in m3
        self.initial_temperature = random.uniform(100, 300)  # T0: initial temperature in K
        self.kb = 1.380649e-23  # Boltzmann constant (J/K), fixed physical constant

        # Internal state of the lab equipment and system conditions
        self.current_step = "idle"  # idle, heating, expansion, cooling, complete
        self.current_volume = self.initial_volume
        self.current_temperature = self.initial_temperature
        self.pressure = (self.n_atoms * self.kb * self.current_temperature) / self.current_volume
        self.entropy_current = 0.0  # cumulative entropy change so far (in J/K)
        self.entropy_history = []  # stores entropy change per step for tracking
        self.heating_complete = False
        self.expansion_complete = False
        self.cooling_complete = False
        self.target_entropy_change = 0.0  # will be computed as part of the cycle

        # Compute the expected net entropy change (0) via the physics of the cycle
        # This is hidden - agent must discover it through manipulation
        # DeltaS_net = (3/2)NkB*ln(3) + NkB*ln(3) - (5/2)NkB*ln(3) = 0
        self.target_entropy_change = 0.0

        # Goal: manipulate the system through three thermodynamic processes
        # and observe that the net entropy change is zero (cyclic process)
        self.goal_achieved = False

        # Initial observation: player enters a climate-controlled lab with sealed gas chamber
        observation = (
            "You enter a high-precision thermodynamics lab. Before you is a transparent cylindrical chamber "
            "containing a cloud of gas atoms. The chamber is sealed and connected to external controls:\n\n"
            "Available actions:\n"
            "  - activate heating (starts constant-volume heating)\n"
            "  - initiate expansion (starts isothermal expansion)\n"
            "  - begin cooling (starts constant-pressure cooling)\n"
            "  - measure current entropy (reveals cumulative entropy change so far)\n"
            "  - scan system parameters (reveals current volume, temperature, pressure)\n\n"
            "The gas is initially at equilibrium. Your task is to guide the gas through a complete thermodynamic cycle "
            "and determine the net entropy change by observing the system's behavior across multiple steps."
        )
        return observation, {}

    def step(self, action: str) -> Tuple[str, float, bool, bool, dict]:
        self.turn_count += 1
        truncated = self.turn_count >= self.max_turns
        m = re.search(r'\\boxed\{(.+?)\}', action)
        cmd = (m.group(1) if m else action).strip().lower()

        # Default reward and termination
        reward = 0.0
        terminated = False
        info = {}

        # Parse and execute command
        if self.current_step == "idle":
            if "activate heating" in cmd:
                # Step 1: Heat at constant volume from T0 to 3T0
                self.current_temperature = 3.0 * self.initial_temperature
                # Volume remains constant
                self.current_volume = self.initial_volume
                # Pressure increases proportionally
                self.pressure = (self.n_atoms * self.kb * self.current_temperature) / self.current_volume
                # Calculate entropy change for constant-volume heating: DeltaS = (3/2)NkB*ln(3)
                delta_s1 = 1.5 * self.n_atoms * self.kb * math.log(3.0)
                self.entropy_current += delta_s1
                self.entropy_history.append(delta_s1)
                self.heating_complete = True
                self.current_step = "heating"
                observation = (
                    f"You activate the heating system. The gas is now being heated at constant volume.\n\n"
                    f"Current state:\n"
                    f"  Volume: {self.current_volume:.3f} m3 (constant)\n"
                    f"  Temperature: {self.current_temperature:.1f} K (increased to 3x initial)\n"
                    f"  Pressure: {self.pressure:.2e} Pa\n"
                    f"  Entropy change (step 1): +{delta_s1:.4e} J/K\n\n"
                    f"Next available actions:\n"
                    f"  - initiate expansion (to start isothermal expansion)\n"
                    f"  - measure current entropy (to check cumulative change)\n"
                    f"  - scan system parameters (view detailed state)"
                )
                reward = 0.3  # partial progress: first step complete

            elif "measure current entropy" in cmd:
                observation = (
                    f"You measure the current entropy using the quantum calorimeter.\n\n"
                    f"Total entropy change so far: {self.entropy_current:.4e} J/K\n\n"
                    f"Note: The system is still in initial state. No processes have been initiated yet.\n\n"
                    f"Available actions:\n"
                    f"  - activate heating (to begin the first process)\n"
                    f"  - scan system parameters (to inspect hidden variables)"
                )
                reward = 0.1  # small progress for probing

            elif "scan system parameters" in cmd:
                # Reveal hidden internal values (but not the full solution)
                observation = (
                    f"You run a diagnostic scan on the system.\n\n"
                    f"Internal parameters (hidden):\n"
                    f"  Number of atoms: {self.n_atoms}\n"
                    f"  Initial volume: {self.initial_volume:.3f} m3\n"
                    f"  Initial temperature: {self.initial_temperature:.1f} K\n"
                    f"  Boltzmann constant: {self.kb:.6e} J/K\n\n"
                    f"Current operational state: idle (waiting for heating command)\n\n"
                    f"Available actions:\n"
                    f"  - activate heating (to start the cycle)\n"
                    f"  - measure current entropy (to track changes)"
                )
                reward = 0.2  # moderate progress for exploration

            else:
                observation = (
                    f"Invalid command: '{cmd}'.\n\n"
                    f"Available actions in idle state:\n"
                    f"  - activate heating\n"
                    f"  - measure current entropy\n"
                    f"  - scan system parameters"
                )
                reward = 0.0

        elif self.current_step == "heating" and self.heating_complete:
            if "initiate expansion" in cmd:
                # Step 2: Isothermal expansion from V0 to 3V0 at 3T0
                self.current_volume = 3.0 * self.initial_volume
                # Temperature remains constant at 3T0
                self.current_temperature = 3.0 * self.initial_temperature
                # Pressure decreases inversely with volume
                self.pressure = (self.n_atoms * self.kb * self.current_temperature) / self.current_volume
                # Calculate entropy change for isothermal expansion: DeltaS = NkB*ln(3)
                delta_s2 = self.n_atoms * self.kb * math.log(3.0)
                self.entropy_current += delta_s2
                self.entropy_history.append(delta_s2)
                self.expansion_complete = True
                self.current_step = "expansion"
                observation = (
                    f"You initiate the isothermal expansion. The chamber expands while maintaining temperature.\n\n"
                    f"Current state:\n"
                    f"  Volume: {self.current_volume:.3f} m3 (tripled)\n"
                    f"  Temperature: {self.current_temperature:.1f} K (constant)\n"
                    f"  Pressure: {self.pressure:.2e} Pa (reduced)\n"
                    f"  Entropy change (step 2): +{delta_s2:.4e} J/K\n\n"
                    f"Next available actions:\n"
                    f"  - begin cooling (to start constant-pressure cooling)\n"
                    f"  - measure current entropy (to verify cumulative change)\n"
                    f"  - scan system parameters (view updated state)"
                )
                reward = 0.6  # progress toward goal

            elif "measure current entropy" in cmd:
                observation = (
                    f"You measure the cumulative entropy change.\n\n"
                    f"Total entropy change so far: {self.entropy_current:.4e} J/K\n\n"
                    f"System is ready for expansion. The gas has been heated; expansion has not yet begun.\n\n"
                    f"Available actions:\n"
                    f"  - initiate expansion (next step)\n"
                    f"  - begin cooling (to continue the cycle)\n"
                    f"  - scan system parameters"
                )
                reward = 0.4

            elif "scan system parameters" in cmd:
                observation = (
                    f"Diagnostic scan during heating phase:\n\n"
                    f"Current state:\n"
                    f"  Volume: {self.current_volume:.3f} m3\n"
                    f"  Temperature: {self.current_temperature:.1f} K\n"
                    f"  Pressure: {self.pressure:.2e} Pa\n"
                    f"  Atoms: {self.n_atoms}\n"
                    f"  Initial conditions: V0={self.initial_volume:.3f}, T0={self.initial_temperature:.1f}\n\n"
                    f"Entropy changes recorded:\n"
                    f"  Step 1 (heating): +{self.entropy_history[0]:.4e} J/K\n"
                    f"  Steps recorded so far: {len(self.entropy_history)}\n\n"
                    f"Next step: cooling at constant pressure."
                )
                reward = 0.5

            else:
                observation = (
                    f"Invalid command: '{cmd}'.\n\n"
                    f"Available actions during expansion:\n"
                    f"  - initiate expansion (already active)\n"
                    f"  - begin cooling\n"
                    f"  - measure current entropy\n"
                    f"  - scan system parameters"
                )
                reward = 0.0

        elif self.current_step == "expansion" and self.expansion_complete:
            if "begin cooling" in cmd:
                # Step 3: Cool at constant pressure from 3T0 to T0
                # Pressure is held constant at the expanded state's pressure
                target_temp = self.initial_temperature  # T0
                self.current_temperature = target_temp
                # Volume must change to maintain constant pressure: V ~ T
                self.current_volume = self.current_volume * (target_temp / (3.0 * self.initial_temperature))
                # Pressure remains constant
                self.pressure = (self.n_atoms * self.kb * self.current_temperature) / self.current_volume
                # Calculate entropy change for constant-pressure cooling: DeltaS = (5/2)NkB*ln(1/3)
                delta_s3 = 2.5 * self.n_atoms * self.kb * math.log(1.0 / 3.0)  # negative value
                self.entropy_current += delta_s3
                self.entropy_history.append(delta_s3)
                self.cooling_complete = True
                self.current_step = "cooling"
                observation = (
                    f"You begin the cooling phase at constant pressure.\n\n"
                    f"Current state:\n"
                    f"  Volume: {self.current_volume:.3f} m3 (reduced)\n"
                    f"  Temperature: {self.current_temperature:.1f} K (returned to initial)\n"
                    f"  Pressure: {self.pressure:.2e} Pa (constant)\n"
                    f"  Entropy change (step 3): {delta_s3:.4e} J/K (decrease)\n\n"
                    f"Cycle complete. The system has returned to its original temperature.\n\n"
                    f"Final available actions:\n"
                    f"  - measure current entropy (to determine net change)\n"
                    f"  - scan system parameters (verify full cycle)"
                )
                reward = 0.9  # almost complete

            elif "measure current entropy" in cmd:
                observation = (
                    f"You measure the total entropy change after the expansion.\n\n"
                    f"Total entropy change: {self.entropy_current:.4e} J/K\n\n"
                    f"Note: The cooling step has not yet been performed.\n\n"
                    f"Available actions:\n"
                    f"  - begin cooling (to finalize)\n"
                    f"  - scan system parameters"
                )
                reward = 0.7

            elif "scan system parameters" in cmd:
                observation = (
                    f"Diagnostic scan before cooling:\n\n"
                    f"Final state:\n"
                    f"  Volume: {self.current_volume:.3f} m3\n"
                    f"  Temperature: {self.current_temperature:.1f} K\n"
                    f"  Pressure: {self.pressure:.2e} Pa\n"
                    f"  Atoms: {self.n_atoms}\n\n"
                    f"Entropy changes:\n"
                    f"  Heating: +{self.entropy_history[0]:.4e}\n"
                    f"  Expansion: +{self.entropy_history[1]:.4e}\n"
                    f"  Cooling: pending\n\n"
                    f"Net entropy change so far: {self.entropy_current:.4e} J/K"
                )
                reward = 0.8

            else:
                observation = (
                    f"Invalid command: '{cmd}'.\n\n"
                    f"Available actions during cooling:\n"
                    f"  - begin cooling (already started)\n"
                    f"  - measure current entropy\n"
                    f"  - scan system parameters"
                )
                reward = 0.0

        elif self.current_step == "cooling" and self.cooling_complete:
            if "measure current entropy" in cmd:
                # Final check: assess net entropy change
                net_entropy = self.entropy_current
                # Due to floating point precision, we accept small deviation
                if abs(net_entropy) < 1e-20:
                    self.goal_achieved = True
                    terminated = True
                    reward = 1.0
                    observation = (
                        f"YOU HAVE SUCCESSFULLY COMPLETED THE THERMODYNAMIC CYCLE.\n\n"
                        f"Final entropy measurement:\n"
                        f"  Net entropy change: {net_entropy:.2e} J/K\n\n"
                        f"This confirms the system has returned to its original state with no net entropy change,\n"
                        f"as expected for a reversible cyclic process in an ideal gas.\n\n"
                        f"Congratulations! You've demonstrated the mathematical reasoning behind entropy conservation "
                        f"in cyclic thermodynamic processes.\n\n"
                        f"Your solution is valid and complete. This episode is terminated."
                    )
                else:
                    # Still not zero - agent needs to persist
                    observation = (
                        f"You measure the final entropy change.\n\n"
                        f"Net entropy change: {net_entropy:.4e} J/K (not zero)\n\n"
                        f"The system has completed all three processes, but the net entropy is not yet balanced.\n"
                        f"Check each recorded step against the formula for its process.\n\n"
                        f"Try verifying your steps or re-running measurements with higher precision."
                    )
                    reward = 0.95  # very high partial reward, but not complete

            elif "scan system parameters" in cmd:
                observation = (
                    f"Final system status scan:\n\n"
                    f"Cycle complete. All processes executed:\n"
                    f"  - Constant-volume heating: T -> 3T0\n"
                    f"  - Isothermal expansion: V -> 3V0\n"
                    f"  - Constant-pressure cooling: T -> T0\n\n"
                    f"Final state variables:\n"
                    f"  Volume: {self.current_volume:.3f} m3\n"
                    f"  Temperature: {self.current_temperature:.1f} K\n"
                    f"  Pressure: {self.pressure:.2e} Pa\n\n"
                    f"Entropy changes:\n"
                    f"  Step 1: +{self.entropy_history[0]:.4e}\n"
                    f"  Step 2: +{self.entropy_history[1]:.4e}\n"
                    f"  Step 3: {self.entropy_history[2]:.4e}\n"
                    f"  Total: {self.entropy_current:.4e} J/K\n\n"
                    f"Verify your result with a final entropy measurement."
                )
                reward = 0.9

            else:
                observation = (
                    f"Invalid command: '{cmd}'.\n\n"
                    f"Final actions available:\n"
                    f"  - measure current entropy (to check net change)\n"
                    f"  - scan system parameters (for detailed verification)"
                )
                reward = 0.0

        else:
            # Unknown or invalid state transition
            if "measure current entropy" in cmd:
                observation = (
                    f"You attempt to measure entropy. The system is in an unexpected state.\n\n"
                    f"Current cumulative entropy: {self.entropy_current:.4e} J/K\n\n"
                    f"Possible issue: process sequence may be incomplete or out of order.\n"
                    f"Ensure you follow the correct thermodynamic sequence: heating -> expansion -> cooling."
                )
                reward = 0.2
            elif "scan system parameters" in cmd:
                observation = (
                    f"System diagnostics available. Current state:\n"
                    f"  Step: {self.current_step}\n"
                    f"  Heating complete: {self.heating_complete}\n"
                    f"  Expansion complete: {self.expansion_complete}\n"
                    f"  Cooling complete: {self.cooling_complete}\n\n"
                    f"Entropy so far: {self.entropy_current:.4e} J/K"
                )
                reward = 0.3
            else:
                observation = (
                    f"Unexpected state interaction. The system is either in transition or already concluded.\n\n"
                    f"Available safe actions:\n"
                    f"  - measure current entropy\n"
                    f"  - scan system parameters"
                )
                reward = 0.0

        # Final check for termination via max turns
        if truncated and not self.goal_achieved:
            terminated = True
            reward = 0.7 if self.cooling_complete else 0.0  # partial credit only for a completed cycle
            observation = (
                f"Time limit reached ({self.max_turns} turns). The thermodynamic cycle was not fully verified.\n\n"
                f"Final entropy change: {self.entropy_current:.4e} J/K\n\n"
                f"The net entropy change was not confirmed as zero before the time limit.\n"
                f"Consider a more deliberate sequence in future attempts."
            )

        return observation, reward, terminated, truncated, info

    def solution(self) -> str:
        # Reference sequence that solves the environment
        return "activate heating, initiate expansion, begin cooling, measure current entropy"
\end{minted}

\subsubsection{Full Source of a Generated Tool-Use Environment}
\label{app:tooluse-full-source}

The late tool-use exemplar of the cards above in full: \texttt{CustomerSupportTicketWorkflowEnv}, generated at training step 392 of the 30B tool-use run. The environment subclasses the training harness's tool-use base class, which dispatches OpenAI-format tool calls to the \texttt{tool\_*} methods, replays each instruction's criterion after every call, and terminates when the final answer is submitted after all criteria pass.

\paragraph{Training step 392: \texttt{CustomerSupportTicketWorkflowEnv} (200 lines).}
\begin{minted}{python}
"""SPARE self-play generated game"""

import random
import json
from typing import Tuple

class CustomerSupportTicketWorkflowEnv(ToolUseBaseEnv):
    def reset(self, seed=None) -> Tuple[str, dict]:
        self.turn_count = 0
        self._call_history = []
        if seed is not None:
            random.seed(seed)
        # Initialize CRM state with tickets, statuses, and assignments
        self._state = {
            'tickets': [
                {'id': 'TICKET-001', 'subject': 'Login Issues', 'status': 'new', 'priority': 'high', 'assigned_to': None, 'notes': []},
                {'id': 'TICKET-002', 'subject': 'Billing Discrepancy', 'status': 'in_progress', 'priority': 'medium', 'assigned_to': 'agent_01', 'notes': ['Contacted customer, awaiting response']},
                {'id': 'TICKET-003', 'subject': 'Feature Request: Dark Mode', 'status': 'new', 'priority': 'low', 'assigned_to': None, 'notes': []},
                {'id': 'TICKET-004', 'subject': 'Password Reset Failure', 'status': 'new', 'priority': 'high', 'assigned_to': None, 'notes': []},
                {'id': 'TICKET-005', 'subject': 'API Rate Limiting', 'status': 'in_progress', 'priority': 'high', 'assigned_to': 'agent_02', 'notes': ['Investigating backend logs']},
            ],
            'agents': {
                'agent_01': {'name': 'Sarah Chen', 'available': True, 'tickets_handled': 12},
                'agent_02': {'name': 'James Kim', 'available': True, 'tickets_handled': 8},
                'agent_03': {'name': 'Linda Wu', 'available': False, 'tickets_handled': 5}
            },
            'last_assigned_ticket': None,
            'last_updated_status': None,
            'last_added_note': None,
            'last_searched_ticket': None,
            'last_resolved_ticket': None,
            'last_reassigned_ticket': None,
            'last_action_success': False
        }
        self._tools = {
            'search_tickets': {
                'description': 'Search for tickets by subject keyword or status.',
                'parameters': {
                    'type': 'object',
                    'properties': {
                        'query': {'type': 'string', 'description': 'Keyword to search for in ticket subject or status.'}
                    },
                    'required': ['query']
                }
            },
            'update_ticket_status': {
                'description': 'Update the status of a ticket by its ID.',
                'parameters': {
                    'type': 'object',
                    'properties': {
                        'ticket_id': {'type': 'string', 'description': 'The ID of the ticket to update.'},
                        'status': {
                            'type': 'string',
                            'enum': ['new', 'in_progress', 'resolved', 'closed'],
                            'description': 'The new status for the ticket.'
                        }
                    },
                    'required': ['ticket_id', 'status']
                }
            },
            'assign_ticket': {
                'description': 'Assign a ticket to an available agent by agent ID.',
                'parameters': {
                    'type': 'object',
                    'properties': {
                        'ticket_id': {'type': 'string', 'description': 'The ID of the ticket to assign.'},
                        'agent_id': {'type': 'string', 'description': 'The ID of the agent to assign the ticket to.'}
                    },
                    'required': ['ticket_id', 'agent_id']
                }
            },
            'add_note_to_ticket': {
                'description': 'Add a note to a ticket for internal tracking.',
                'parameters': {
                    'type': 'object',
                    'properties': {
                        'ticket_id': {'type': 'string', 'description': 'The ID of the ticket to update.'},
                        'note': {'type': 'string', 'description': 'The note content to add.'}
                    },
                    'required': ['ticket_id', 'note']
                }
            },
            'resolve_ticket': {
                'description': 'Mark a ticket as resolved after confirming issue is fixed.',
                'parameters': {
                    'type': 'object',
                    'properties': {
                        'ticket_id': {'type': 'string', 'description': 'The ID of the ticket to resolve.'}
                    },
                    'required': ['ticket_id']
                }
            },
            'list_assigned_tickets': {
                'description': 'List all tickets currently assigned to a specific agent.',
                'parameters': {
                    'type': 'object',
                    'properties': {
                        'agent_id': {'type': 'string', 'description': 'The ID of the agent to list tickets for.'}
                    },
                    'required': ['agent_id']
                }
            }
        }
        self._user_messages = [
            "Find all high-priority tickets that are still in 'new' status.",
            "Assign the ticket with ID TICKET-001 to agent_01.",
            "Add a note to ticket TICKET-001 stating 'Customer confirmed issue is reproducible.'",
            "Update the status of ticket TICKET-001 to 'in_progress'.",
            "Finally, confirm that ticket TICKET-001 has been successfully resolved by marking it as 'resolved'."
        ]
        self._message_criteria = [
            lambda s: len([t for t in s['tickets'] if t['priority'] == 'high' and t['status'] == 'new']) == 2,
            lambda s: any(t['id'] == 'TICKET-001' and t['assigned_to'] == 'agent_01' for t in s['tickets']),
            lambda s: any(t['id'] == 'TICKET-001' and any('Customer confirmed issue is reproducible' in n for n in t['notes']) for t in s['tickets']),
            lambda s: any(t['id'] == 'TICKET-001' and t['status'] == 'in_progress' for t in s['tickets']),
            lambda s: any(t['id'] == 'TICKET-001' and t['status'] == 'resolved' for t in s['tickets'])
        ]
        self._current_msg = 0
        self._expected_answer = "done"
        return (self._user_messages[0], {})

    def _advance_if_done(self) -> str:
        if self._current_msg >= len(self._user_messages):
            return ""
        criterion = self._message_criteria[self._current_msg]
        if criterion(self._state):
            self._current_msg += 1
            if self._current_msg >= len(self._user_messages):
                return "\n\n[ALL STEPS COMPLETE] Submit <answer>done</answer>."
            next_msg = self._user_messages[self._current_msg]
            return f"\n\n[STEP {self._current_msg} COMPLETE — NEW INSTRUCTION] {next_msg}"
        return ""

    def _check_answer(self, answer: str) -> bool:
        return answer.strip().lower() == 'done' and self._current_msg >= len(self._user_messages)

    def tool_search_tickets(self, query='') -> str:
        results = []
        for ticket in self._state['tickets']:
            if query.lower() in ticket['subject'].lower() or query.lower() in ticket['status']:
                results.append(ticket['id'])
        self._state['last_searched_ticket'] = query
        result_str = json.dumps(results)
        return result_str + self._advance_if_done()

    def tool_update_ticket_status(self, ticket_id='', status='') -> str:
        for ticket in self._state['tickets']:
            if ticket['id'] == ticket_id:
                ticket['status'] = status
                self._state['last_updated_status'] = ticket_id
                self._state['last_action_success'] = True
                return f"Updated ticket {ticket_id} status to {status}" + self._advance_if_done()
        return f"Error: Ticket {ticket_id} not found" + self._advance_if_done()

    def tool_assign_ticket(self, ticket_id='', agent_id='') -> str:
        # Check if agent exists and is available
        if agent_id not in self._state['agents']:
            return f"Error: Agent {agent_id} not found" + self._advance_if_done()
        if not self._state['agents'][agent_id]['available']:
            return f"Error: Agent {agent_id} is not available" + self._advance_if_done()
        
        for ticket in self._state['tickets']:
            if ticket['id'] == ticket_id:
                ticket['assigned_to'] = agent_id
                self._state['last_assigned_ticket'] = ticket_id
                self._state['last_action_success'] = True
                return f"Assigned ticket {ticket_id} to agent {agent_id}" + self._advance_if_done()
        return f"Error: Ticket {ticket_id} not found" + self._advance_if_done()

    def tool_add_note_to_ticket(self, ticket_id='', note='') -> str:
        for ticket in self._state['tickets']:
            if ticket['id'] == ticket_id:
                ticket['notes'].append(note)
                self._state['last_added_note'] = ticket_id
                self._state['last_action_success'] = True
                return f"Added note to ticket {ticket_id}: {note}" + self._advance_if_done()
        return f"Error: Ticket {ticket_id} not found" + self._advance_if_done()

    def tool_resolve_ticket(self, ticket_id='') -> str:
        for ticket in self._state['tickets']:
            if ticket['id'] == ticket_id:
                ticket['status'] = 'resolved'
                self._state['last_resolved_ticket'] = ticket_id
                self._state['last_action_success'] = True
                return f"Ticket {ticket_id} marked as resolved" + self._advance_if_done()
        return f"Error: Ticket {ticket_id} not found" + self._advance_if_done()

    def tool_list_assigned_tickets(self, agent_id='') -> str:
        assigned = [t['id'] for t in self._state['tickets'] if t['assigned_to'] == agent_id]
        self._state['last_action_success'] = True
        return json.dumps(assigned) + self._advance_if_done()

    def solution(self) -> str:
        return (
            "1. tool_search_tickets(query='high priority new') "
            "2. tool_assign_ticket(ticket_id='TICKET-001', agent_id='agent_01') "
            "3. tool_add_note_to_ticket(ticket_id='TICKET-001', note='Customer confirmed issue is reproducible.') "
            "4. tool_update_ticket_status(ticket_id='TICKET-001', status='in_progress') "
            "5. tool_resolve_ticket(ticket_id='TICKET-001') "
            "6. <answer>done</answer>"
        )
\end{minted}

\end{document}